\tikzset{
    -Latex,auto,node distance =1 cm and 1 cm,semithick,
    state/.style ={circle, draw=white, inner sep=0.01cm, minimum width = 0.35 cm},
    snode/.style = {rectangle, draw, inner sep=0.1cm,node contents={}},
    unmeasured/.style = {circle, draw, inner sep=0.05cm,node contents={}},
    point/.style = {circle, draw, inner sep=0.04cm,fill,node contents={}},
    bidirected/.style={Latex-Latex,dashed},
    el/.style = {inner sep=2pt, align=left, sloped}
}
\definecolor{betterred}{RGB}{228,26,28}
\definecolor{betterblue}{RGB}{55,126,184}
\definecolor{betterviolet}{RGB}{46,37,133}
\definecolor{bettergreen}{RGB}{0,158,113}
\newcommand{\version}[2]{#2}
\DeclareMathAlphabet\mathbfcal{OMS}{cmsy}{b}{n}
\mathchardef\mhyphen="2D
\newcommand{\norm}[1]{{ \left\lVert\right\rVert }}
\newcommand{\vertiii}[1]{{\left\vert\kern-0.25ex\left\vert\kern-0.25ex\left\vert #1
    \right\vert\kern-0.25ex\right\vert\kern-0.25ex\right\vert}}
\long\def\@makecaption#1#2{
  \vskip 0.8ex
  \setbox\@tempboxa\hbox{\small {\bf #1:} #2}
  \parindent 1.5em  
  \dimen0=\hsize
  \advance\dimen0 by -3em
  \ifdim \wd\@tempboxa >\dimen0
  \hbox to \hsize{
    \parindent 0em
    \hfil
    \parbox{\dimen0}{\def\baselinestretch{0.96}\small
      {\bf #1.} #2
    }
    \hfil}
  \else \hbox to \hsize{\hfil \box\@tempboxa \hfil}
  \fi
}
\newtheorem{theorem}{Theorem}
\newtheorem{corollary}{Corollary}
\newtheorem{proposition}[theorem]{Proposition}
\newtheorem{definition}{Definition}
\newtheorem{lemma}{Lemma}
\newenvironment{proof-of-claim}{\noindent{\bf Proof of Claim}
  \hspace*{1em}}{\qed\bigskip\\}
\newenvironment{proof-sketch}{\noindent{\bf Sketch of Proof}
  \hspace*{1em}}{\qed\bigskip\\}
\newenvironment{proof-idea}{\noindent{\bf Proof Idea}
  \hspace*{1em}}{\qed\bigskip\\}
\newenvironment{proof-of-lemma}[1][{}]{\noindent{\bf Proof of Lemma {#1}}
  \hspace*{1em}}{\qed\bigskip\\}
  \newenvironment{proof-of-corollary}[1][{}]{\noindent{\bf Proof of Corollary {#1}}
  \hspace*{1em}}{\qed\bigskip\\}
\newenvironment{proof-of-proposition}[1][{}]{\noindent{\bf
    Proof of Proposition {#1}}
  \hspace*{1em}}{\qed\bigskip\\}
\newenvironment{proof-of-theorem}[1][{}]{\noindent{\bf Proof of Theorem {#1}}
  \hspace*{1em}}{\qed\bigskip\\}
\newenvironment{inner-proof}{\noindent{\bf Proof}\hspace{1em}}{
  $\bigtriangledown$\medskip\\}
\newenvironment{proof-attempt}{\noindent{\bf Proof Attempt}
  \hspace*{1em}}{\qed\bigskip\\}
\newcounter{example}
\newenvironment{example*}[1][]{
  \ifthenelse{\isempty{#1}}{%
    \noindent \textbf{Example:}\hspace*{.05em}
  }{%
    \noindent \textbf{Example} ({#1})\textbf{:}\hspace*{.05em}
  }
}{%
  $\diamond$ \bigskip
}
\newenvironment{remark*}[1][]{
  \ifthenelse{\isempty{#1}}{%
    \noindent \textbf{Remark:}\hspace*{.05em}
  }{%
    \noindent \textbf{Remark} ({#1})\textbf{:}\hspace*{.05em}
  }
}{%
  $\diamondsuit$ \bigskip
}
\newcommand{\statrv}[1][]{
  \ifthenelse{\isempty{#1}}{%
    X
  }{%
    X_{#1}
  }
}
\long\def\@makecaption#1#2{
  \vskip 0.8ex
  \setbox\@tempboxa\hbox{\small {\bf #1:} #2}
  \parindent 1.5em  
  \dimen0=\hsize
  \advance\dimen0 by -3em
  \ifdim \wd\@tempboxa >\dimen0
  \hbox to \hsize{
    \parindent 0em
    \hfil 
    \parbox{\dimen0}{\def\baselinestretch{0.96}\small
      {\bf #1.} #2
    } 
    \hfil}
  \else \hbox to \hsize{\hfil \box\@tempboxa \hfil}
  \fi
}
\title{Path-specific effects for  pulse-oximetry guided decisions in critical care}
\author{
  Kevin Zhang\,$^{*, \dagger}$ \phantom{\thanks{\footnotesize Currently at MIT. Contributions made when affiliated with Columbia University.}}\phantom{\thanks{\footnotesize Corresponding authors: \texttt{\{kzhang02@mit.edu, shalmali.joshi@columbia.edu\}}.}} \\
  Columbia University\\
  New York, USA \\
  \And
  Yonghan Jung\,$^{\ddagger}$ \phantom{\thanks{\footnotesize Contributions made when affiliated with Purdue University.}}\\
  University of Illinois Urbana-Champaign \\
  Champaign, USA \\
  \And 
  Divyat Mahajan \\
  Mila \& Université de Montréal \\
  Montreal, CA \\
  \And 
  Karthikeyan Shanmugam \\
  Google DeepMind \\
  Bengaluru, IN
  \And 
  Shalmali Joshi\,$^\dagger$ \\
  Columbia University \\
  New York, USA
}
\begin{document}

\maketitle
\vspace{-3pt}
\begin{abstract}
    Identifying and measuring biases associated with sensitive attributes is a crucial consideration in healthcare to prevent treatment disparities. One prominent issue is inaccurate pulse oximeter readings, which tend to overestimate oxygen saturation for dark-skinned patients and misrepresent supplemental oxygen needs. Most existing research has revealed \textit{statistical disparities} linking device measurement errors to patient outcomes in intensive care units (ICUs) without causal formalization. This study \textit{causally} investigates how racial discrepancies in oximetry measurements affect invasive ventilation in ICU settings. We employ a causal inference-based approach using \textit{path-specific effects} to isolate the impact of bias by race on clinical decision-making. To estimate these effects, we leverage a doubly robust estimator, propose its self-normalized variant for improved sample efficiency, and provide novel finite-sample guarantees. Our methodology is validated on semi-synthetic data and applied to two large real-world health datasets: MIMIC-IV and eICU. Contrary to prior work, our analysis reveals minimal impact of racial discrepancies on invasive ventilation rates. However, path-specific effects mediated by oxygen saturation disparity are more pronounced on ventilation duration, and the severity differs across datasets. Our work provides a novel pipeline for investigating potential disparities in clinical decision-making and, more importantly, highlights the necessity of causal methods to robustly assess fairness in healthcare.
\end{abstract}

\section{Introduction} \label{sec:intro}

Bias in medical devices can perpetuate disparities by impacting healthcare decisions. For example, pulse oximeters tend to overestimate blood oxygen saturation for dark-skinned patients~\citep{Sjoding2020RacialBI,Wong2021AnalysisOD,shi2022accuracy,jamali2022racial,ruppel2023evaluating,hao2024utility}. Such discrepancies can lead to `hidden hypoxemia,' where a patient's true arterial oxygen saturation ($\text{SaO}_2$) is dangerously low, despite a reassuringly higher peripheral $\text{SpO}_2$ reading from the oximeter. Figure~\ref{fig:hidden_hypox} demonstrates this using eICU data \citep{pollard2018eicu}, where red dots (`H.H') highlight instances where low $\text{SaO}_2$ (< 88\%) is masked by higher $\text{SpO}_2$ (> 88\%). Such hidden hypoxemia can result in underestimated supplemental oxygen needs and delayed clinical interventions \citep{Wong2021AnalysisOD,gottlieb2022assessment,fawzy2022racial}.

Prior studies have advanced our understanding of pulse oximeter inaccuracies and associated patient outcomes. For instance, \citet{Sjoding2020RacialBI} and \citet{Wong2021AnalysisOD} established links between race, hidden hypoxemia, and adverse outcomes like increased organ dysfunction and mortality in large ICU datasets (e.g., MIMIC-III/IV \citep{johnson2020mimic}). Similarly, \citet{gottlieb2022assessment} examined racial differences in oxygen delivery rates using regression and simple mediation analysis in MIMIC-IV, while \citet{henry2022disparities} and \citet{fawzy2022racial} reported delays in hypoxemia detection and treatment eligibility. These studies rely on statistical associations or canonical mediation analysis to highlight measurement bias. However, they fail to isolate causal pathways of disparities mediated only \textit{through} oximeter discrepancies in the presence of other mediators, which \mbox{requires more advanced causal mediation analysis}.

Measuring fairness using \textit{path-specific causal analysis} is an active research area \citep{Tchetgen2012SemiparametricTF,vanderweele2014mediation,Miles2017OnSE,chiappa2019path,Farbmacher2020CausalMA,plevcko2024causal}. These methods isolate the influence of sensitive attributes, such as race, mediated through intermediate factors. In our context (Figure~\ref{fig:sfm}), we aim to quantify the causal effect of race on invasive ventilation specifically \emph{mediated by} pulse oximetry discrepancies ($V$), which is distinct from the effect via other mediators ($W$). In complex systems with \textit{multiple mediators}, techniques for identifying and robustly estimating effects along specific pathways have been developed \citep{vanderweele2014mediation, Miles2017OnSE}. 

\begin{figure}[t]
  \centering
  \begin{minipage}[t]{0.4\textwidth}
    \centering
    \includegraphics[width=\linewidth]{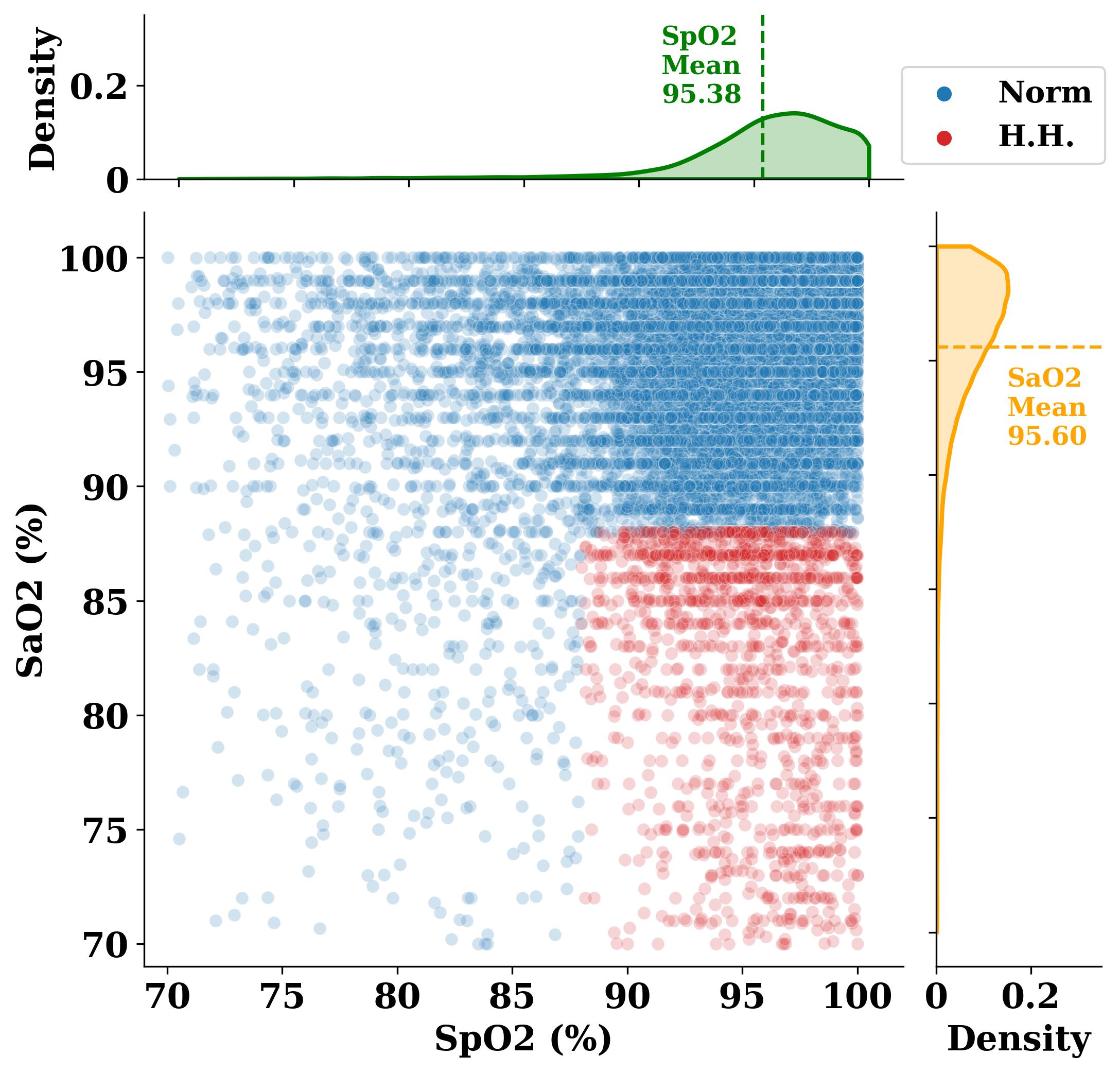}
    \captionof{figure}{Distribution of $\text{SpO}_2$ and $\text{SaO}_2$ in eICU data. Samples with hidden hypoxemia are colored red, following~\citet{Matos2024BOLDBA}.}
    \label{fig:hidden_hypox}
  \end{minipage}\hfill
  \begin{minipage}[t]{0.6\textwidth}
    \centering
    \small
    \setlength{\tabcolsep}{3pt}
    \vspace{-153pt} 
    \captionof{table}{Comparison of relevant methods for path-specific causal analysis in ICU settings.
    ($\checkmark$: Primary focus;
    Part.: Some elements addressed;
    $\times$: Not primary focus.)}
    \vspace{12pt}
    \renewcommand{\arraystretch}{1.19}
    \begin{tabular}{
        >{\raggedright\arraybackslash}p{0.15\linewidth} 
        c 
        c 
        c 
        c 
        c 
    }
        \toprule
        \textbf{Study} &
        \textbf{\shortstack{Causal \\ Analysis}} &
        \textbf{\shortstack{Path-Spec. \\ Analysis}} &
        \textbf{\shortstack{Multi- \\ Med.}} &
        \textbf{\shortstack{ICU \\ Data}} &
        \textbf{\shortstack{Finite \\ Guar.}} \\
        \midrule
        \cite{Sjoding2020RacialBI} & Assoc. & $\times$ & $\times$ & \checkmark & $\times$ \\
        \addlinespace
        \cite{Wong2021AnalysisOD} & Assoc. & $\times$ & $\times$ & \checkmark & $\times$ \\
        \addlinespace
        \cite{gottlieb2022assessment} & Part. & Part. & $\times$ & \checkmark & $\times$ \\
        \addlinespace
        \cite{Miles2017OnSE} & \checkmark & \checkmark & \checkmark & $\times$  & $\times$ \\
        \addlinespace
        \cite{Tchetgen2012SemiparametricTF} & \checkmark & \checkmark & $\times$ & $\times$  & $\times$ \\
        \addlinespace
        \cite{vanderweele2014mediation} & \checkmark & \checkmark & $\times$ & $\times$  & $\times$ \\
        \addlinespace
        \midrule
        \textbf{Ours} & \textcolor{blue}{\checkmark} & \textcolor{blue}{\checkmark} & \textcolor{blue}{\checkmark} & \textcolor{blue}{\checkmark} & \textcolor{blue}{\checkmark} \\
        \bottomrule
    \end{tabular}
    \label{tab:summary}
    \label{tab:method_comparison}
  \end{minipage}
\end{figure}

However, Table~\ref{tab:summary} illustrates several key gaps. While sophisticated path-specific methods exist, their applications to analyze racial bias in \textit{ICU} settings are limited, with most applications in population health or survey data~\citep{ou2025assessing,glass2013causal}. Furthermore, there are no \textit{finite-sample guarantees} for path-specific effect estimators, which are crucial for generating robust evidence with potentially constrained sample sizes, like in critical care. Thus, a framework applying path-specific causal analysis for ICU problems with robust estimation strategies and theoretical backing is necessary.

To address these identified gaps, this paper makes the following primary contributions.
\begin{enumerate}[leftmargin=1.7em]
    \item We propose a causal pipeline to detect and quantify racial disparities in the ICU. Adapting a standard fairness model (Figure~\ref{fig:sfm}), we use path-specific analysis to isolate the impact of race mediated through oximeter discrepancies ($V$), called the \textit{V-specific Direct Effect (VDE)}.
    \item We present robust estimators for the VDE, including a doubly robust estimator, and propose a self-normalized variant for improved sample variance, with \textit{novel finite-sample learning guarantees}.
    \item We apply our framework to analyze race-based disparities mediated by oximeter discrepancies on invasive ventilation (rate and duration) in MIMIC-IV \citep{johnson2020mimic} and eICU \citep{pollard2018eicu}.
\end{enumerate}
To the best of our knowledge, this is the first formal path-specific causal analysis of pulse oximeter-mediated bias on invasive ventilation using multiple mediators.

\section{Preliminaries} \label{sec:prelims}
We introduce preliminary notation and background in observational causal inference and fairness.

\textbf{Notation.} We use ($\mathbf{X}$, $X$, $\mathbf{x}$, $x$) to denote a random vector, variable, and their realized values, respectively. For a function $f(\mathbf{z})$, we use $\sum_{\mathbf{z}}f(\mathbf{z}) $ to denote the summation/integration over discrete/continuous $\mathbf{Z}$. For a discrete vector $\mathbf{X}$, we denote $1[\mathbf{X}=\mathbf{x}]$ as an indicator function such that $1[\mathbf{X}=\mathbf{x}] = 1$ if $\mathbf{X} = \mathbf{x}$ and $0$ otherwise. $P(\mathbf{V})$ denotes a distribution over $\mathbf{V}$ and $P(\mathbf{v})$ a probability at $\mathbf{V}=\mathbf{v}$. We let $\mathbb{E}[f(\mathbf{V})]$ and $\mathbb{V}[f(\mathbf{V})]$ denote the mean and variance of $f(\mathbf{V})$ relative to $P(\mathbf{V})$, and $\|f\|_{P} \triangleq \sqrt{\mathbb{E}[(f(\mathbf{V}))^2]} $ as the $L_2$-norm of $f$ with $P$. We use $\widehat{f} - f = o_P(r_n)$ if $\widehat{f}$ is a consistent estimator of $f$ having rate $r_n$, and $ \widehat{f} - f  = O_P(r_n)$ if $\widehat{f} - f$ is bounded in probability at rate $r_n$. We will say $\hat{f}$ is $L_2$-consistent if $\| \hat{f} - f \|_{P} = o_P(1)$ and $\widehat{f} - f = O_P(1)$ if $\widehat{f} - f$ is bounded in probability. Let $\mathcal{D} \triangleq \{\mathbf{V}_i: i=1,\cdots,n\}$ denote a set of $n$ samples. The empirical average of $f(\mathbf{V})$ over $\mathcal{D}$ is denoted as $\mathbb{E}_{\mathcal{D}}[f(\mathbf{V})] \triangleq (1/|\mathcal{D}|) \sum_{i: \mathbf{V}_i \in \mathcal{D}} f(\mathbf{V}_i)$.

\textbf{Structural Causal Models.}
We use structural causal models (SCMs) \citep{pearl:2k} as our framework. An SCM $\mathcal{M}$ is a quadruple $\mathcal{M} = \langle \mathbf{U},\mathbf{V}, P(\mathbf{U}), \mathcal{F} \rangle$, where $\mathbf{U}$ is a set of exogenous (latent) variables following a joint distribution $P(\mathbf{U})$, and $\mathbf{V}$ is a set of endogenous (observable) variables whose values are determined by functions $\mathcal{F} = \{f_{V_i}\}_{V_i  \in \mathbf{V}}$ such that $V_i \leftarrow f_{V_i}(\mathbf{pa}_{V_i}, \mathbf{u}_{V_i})$ where $\mathbf{Pa}_{V_i} \subseteq V$ and $\mathbf{U}_{V_i} \subseteq \mathbf{U}$. Each SCM $\mathcal{M}$ induces a distribution $P(\mathbf{V})$ and a causal graph $\mathcal{G} = \mathcal{G}(\mathcal{M})$ over $\mathbf{V}$ in which directed edges exist from every variable in $\mathbf{Pa}_{V_i}$ to $V_i$ and dashed-bidirected arrows encode common latent variables. An intervention is represented using the do-operator, $\operatorname{do}(\mathbf{X}=\mathbf{x})$, which encodes the operation of replacing the original equations of $X$ (i.e., $f_X(\mathbf{pa}_{X},\mathbf{u}_X)$) by the constant $x$ for all $X \in \mathbf{X}$ and induces an interventional distribution $P(\mathbf{V} \mid \operatorname{do}(\mathbf{x)})$. For any $\mathbf{Y} \subseteq \mathbf{V}$, the \emph{potential response}  $\mathbf{Y}_{\mathbf{x}}(\mathbf{u})$ is defined as the solution of $\mathbf{Y}$ in the submodel $\mathcal{M}_{\mathbf{x}}$ given $\mathbf{U} = \mathbf{u}$, which induces a \emph{counterfactual variable} $\mathbf{Y}_{\mathbf{x}}$.  

\begin{figure}[t]
    \centering
    \begin{minipage}{0.48\textwidth}
    \begin{minipage}[b]{0.45\textwidth}\centering
    \subfloat[]{
        \begin{tikzpicture}[x=.95cm, y=1.1cm,>={Latex[width=1.4mm,length=1.7mm]},
  font=\sffamily\sansmath\scriptsize,
  RR/.style={draw,circle,inner sep=0mm, minimum size=5.5mm,font=\sffamily\sansmath\footnotesize}]
        \pgfmathsetmacro{\u}{0.82}
        \node[RR, betterblue] (x) at (0*\u, 0*\u) {$X$};
        \node[RR, betterred] (y) at (3*\u, 0*\u) {$Y$};
        \node[RR] (z) at (1.5*\u, \u) {$Z$};
        \node[RR] (w) at (0.75*\u, -1.25*\u) {$W$};
        \node[RR, bettergreen] (v) at (2.25*\u, -1.25*\u) {$V$};
        
        \draw[->] (z) -- (x);
        \draw[->] (z) -- (w);
        \draw[->] (z) -- (v);
        \draw[->] (z) -- (y);
        \draw[->] (x) -- (w);
        \draw[->, bettergreen] (x) -- (v);
        \draw[->] (x) -- (y);
        \draw[->] (w) -- (y);
        \draw[->, bettergreen] (v) -- (y);
        \draw[<->, dashed, bend right=40] (z) to (x);
    \end{tikzpicture}\label{fig:sfm-1}}
    \end{minipage}
    \begin{minipage}[b]{0.45\textwidth}
    \subfloat[]{
        \begin{tikzpicture}[x=.95cm, y=1.1cm,>={Latex[width=1.4mm,length=1.7mm]},
        font=\sffamily\sansmath\scriptsize,
        RR/.style={draw,circle,inner sep=0mm, minimum size=5.5mm,font=\sffamily\sansmath\footnotesize}]
        \pgfmathsetmacro{\u}{0.82}
        \node[RR, betterblue] (x) at (0*\u, 0*\u) {$X$};
        \node[RR, betterred] (y) at (3*\u, 0*\u) {$Y$};
        \node[RR] (z) at (1.5*\u, \u) {$Z$};
        \node[RR] (w) at (0.75*\u, -1.25*\u) {$W$};
        \node[RR, bettergreen] (v) at (2.25*\u, -1.25*\u) {$V$};
        
        \draw[->] (z) -- (x);
        \draw[->] (z) -- (w);
        \draw[->] (z) -- (v);
        \draw[->] (z) -- (y);
        \draw[->] (x) -- (w);
        \draw[->, bettergreen] (x) -- (v);
        \draw[->] (x) -- (y);
        \draw[->] (w) -- (y);
        \draw[->] (w) -- (v);
        \draw[->, bettergreen] (v) -- (y);
        \draw[<->, dashed, bend right=40] (z) to (x);
    \end{tikzpicture}\label{fig:sfm-2}}
    \end{minipage}
    \addtocounter{figure}{-1}
    
    \captionof{figure}{Causal diagrams for the modified standard fairness model with two mediators $W$ and $V$. For any other associations between the mediators, the VDE is not identifiable.
    }
    \label{fig:sfm}
    \end{minipage}
    \begin{minipage}{0.5\textwidth}
    \centering
    \captionof{table}{Causal effects considered in this work. In addition to the Total Effect (TE) and Natural Direct/Indirect Effect (NDE/NIE), our application motivates the need for a path-specific effect through $V$ (VDE).}
    \vspace{6pt}
    \renewcommand{\arraystretch}{1.35}
    \begin{tabular}{c|c}
        Causal effect & Causal query \\
        \hline \hline
        TE~\citep{plevcko2024causal, Zhang2018FairnessID} & $\mathbb{E}[Y_{x_1}] - \mathbb{E}[Y_{x_0}]$ \\
        \hline
        NDE~\citep{plevcko2024causal, Zhang2018FairnessID} & $\mathbb{E}[Y_{x_1, W_{x_0},V_{x_0}}] - \mathbb{E}[Y_{x_0}]$ \\
        \hline
        NIE~\citep{plevcko2024causal, Zhang2018FairnessID} & $\mathbb{E}[Y_{x_1}] - \mathbb{E}[Y_{x_1, W_{x_0}, V_{x_0}}]$ \\
        \hline
        VDE (Ours) & $\mathbb{E}[Y_{x_1}] - \mathbb{E}[Y_{x_1, V_{x_0, W_{x_1}}}]$
    \end{tabular}
    \label{tab:fairness_effects}
    \end{minipage}
    \vspace{-4pt}
\end{figure}

\subsection{Related work}

{\bf{Algorithmic Fairness.}} Algorithmic fairness~\citep{Hardt2016EqualityOO, mitchell2021algorithmic, barocas2023fairness, oneto2020fairness} operationalizes computational methodologies to identify, measure, and address disparate behavior related to (automated or other) decision-making processes. One prominent line of work evaluates differential performance of machine learning models across subpopulations defined by any sensitive attribute, and proposes interventions to equalize performance or enforce specific conditional independence assumptions, such as equalized odds or opportunity~\citep{Hardt2016EqualityOO}, demographic parity~\citep{chouldechova2017fair}, multicalibration~\citep{hebert2018multicalibration}, individual fairness~\citep{dwork2012fairness}, etc. Causal fairness is a complementary framework that assumes an underlying causal data-generating mechanism to operationalize fairness for predictive models~\citep{coston2020counterfactual, kusner2017counterfactual, Zhang2018FairnessID, plevcko2024causal}. Principal fairness uses causal inference to enforce conditional independence of the sensitive attribute given counterfactual outcomes~\citep{imai2023principal} and has been applied to coronary artery bypass grafting treatment allocation~\citep{zhang2024causal}. While algorithmic fairness frameworks have been used in healthcare~\citep{chen2023algorithmic,mhasawade2021machine,rajkomar2018ensuring,pfohl2021empirical}, their downstream utility has been low due to the systematic nature of statistical biases present in the data~\citep{chen2021ethical,pmlr-v202-zhang23ai, suresh2021framework, nilforoshan2022causal}.

{\bf{Path-specific Effects.}} Path-specific effects are a broad class of causal effects that measure the influence of a treatment $X$ on an outcome $Y$ through specific causal pathways~\citep{pearl2010causal, avin2005identifiability, shpitser2018identification}. The Total Effect (TE), defined as $\mathrm{TE}(x_1,x_0) \triangleq \mathbb{E}[Y \mid \mathrm{do}(x_1)] - \mathbb{E}[Y \mid \mathrm{do}(x_0)]$, captures the influence transmitted through \textit{all} causal paths connecting the treatment and the outcome. The total effect is decomposed into the \textit{direct effect} and the \textit{indirect effect}, where the latter is an effect mediated through intermediate variables. For example, in the scenario depicted by Fig.~\ref{fig:sfm-1}, the Natural Direct Effect (NDE) \citep{pearl:2k} of $X$ on $Y$ is \vphantom{$X_{X_{X_X}}$}$\mathrm{NDE}(x_1,x_0) \triangleq \mathbb{E}[Y_{x_1, W_{x_0},V_{x_0}}] - \mathbb{E}[Y_{x_0}]$, which captures variation in $Y$ if $X$ changes from $x_0$ to $x_1$, while the mediators $W$ and $V$ hypothetically remain the same. The Natural Indirect Effect (NIE) is \vphantom{$X_{X_{X_X}}$}$\mathrm{NIE}(x_1,x_0) \triangleq \mathbb{E}[Y_{x_1}] - \mathbb{E}[Y_{x_1, W_{x_0},V_{x_0}}]$. This quantity represents the change in outcome due to the mediators shifting in response to the change in $X$ from $x_0$ to $x_1$.

Much of the literature has focused on estimating NDE and NIE under single-mediator settings (e.g., \cite{Tchetgen2012SemiparametricTF, Farbmacher2020CausalMA}). Motivated by the presence of multiple mediators, contemporary research on path-specific analysis has broadened its scope to investigate effects transmitted along particular, often complex, paths within multi-mediator systems. For instance, effects acting through the colored path in Fig.~\ref{fig:sfm-1} might be characterized by the \textit{nested counterfactual} quantity $\mathbb{E}[Y_{x_1, V_{x_0, W_{x_1}}}]$ in statistics and algorithmic fairness~\citep{vanderweele2014mediation, Miles2017OnSE, Zhang2018FairnessID, chiappa2019path, correa2021nested}. Specifically, \cite{Miles2017OnSE} developed a robust estimator for measuring path-specific effects transmitted through a particular mediator in the presence of other  mediators.

\section{Pulse-oximetry bias using path-specific analysis} \label{sec:pse}
We define path-specific effects within the standard fairness model (SFM), illustrated by the DAG in Figure~\ref{fig:sfm}, which represents a broad class of clinical problems. In our setup, $X$ is a binary indicator of race, $Y$ represents ventilation-related outcomes, and $Z$ denotes the pre-admission statistics. The mediator $W$ represents post-admission measurements, while the mediator $V$ includes variables related to oxygen saturation, such as the peripheral oxygen saturation reading from a pulse oximeter $\text{SpO}_\text{2}$, the more accurate arterial blood gas measurement ($\text{SaO}_\text{2}$), and the discrepancy $\Delta = \text{SpO}_\text{2} - \text{SaO}_\text{2}$. We examine two primary outcomes: (1) the rate of invasive ventilation, defined as the proportion of patients who receive any invasive ventilation procedure during their ICU stay, and (2) the duration of invasive ventilation, measured as the length of the first invasive ventilation event.

We study the direct effect of $X$ on $V$ which then affects $Y$, termed the \textit{V-specific Direct Effect} (VDE):
{\begin{align}\label{eq:VDE}
    \mathrm{VDE}(x_1,x_0) \triangleq \mathbb{E}[Y_{x_1}] - \mathbb{E}[Y_{x_1, V_{x_0, W_{x_1}}}]
\end{align}
This path-specific effect, which isolates the influence mediated through $V$ given $W_{x_1}$ (the mediators $W$ as they would be under exposure $x_1$), corresponds to estimands discussed in prior work (e.g., \citet{Miles2017OnSE}). Within our SFM, we specifically term this as the VDE. As an example, suppose $x_0 = \texttt{White}$, $x_1 = \texttt{Black}$, and $Y$ is the rate of invasive ventilation. The VDE captures the difference in expected invasive ventilation rate ($Y$) if post-admission measurements ($W$) \textit{and} the ventilation decision were made as if the patient were Black ($x_1$), while oxygen saturation readings ($\text{SpO}_2$, $\text{SaO}_2$, $\Delta$) were measured as if the patient were White ($x_0$). This effect isolates treatment rate heterogeneity due to discrepancies in oxygen saturation measurements. The SFM in Figure~\ref{fig:sfm} captures plausible associations between the mediators $W$ and $V$ in which the VDE is identifiable (see proof in Appendix~\ref{app:proof}). Table~\ref{tab:fairness_effects} contextualizes and compares all causal effects of interest within our framework.

\subsection{Estimating path-specific effects}
This section presents identifiability conditions for the VDE (Eq.~\eqref{eq:VDE}) and then develops its doubly robust estimator. Our approach integrates key insights from \citet{Miles2017OnSE}, who addressed similar path-specific effects often under parametric assumptions, and \citet{Jung2024UnifiedCA}, who provided a general non-parametric framework for complex estimands involving conditional expectations. By synthesizing these, we develop a doubly robust VDE estimator that, crucially, does not require parametric modeling. This allows for the flexible use of machine learning methods to estimate these functions, which improves robustness to model misspecification. Under the SFM in Figs.~(\ref{fig:sfm-1}, \ref{fig:sfm-2}), the VDE is identifiable as follows. 
\begin{proposition}[\textbf{Identifiability} \citep{Miles2017OnSE}]
    Under the SFMs in Fig.~\ref{fig:sfm}, the VDE $\mathbb{E}[Y_{x_1}] - \mathbb{E}[Y_{x_1, V_{x_0, W_{x_1}}}]$ is identifiable and is given as follows: $\mathbb{E}[Y_{x_1}] = \sum_{z}\mathbb{E}[Y \mid x_1,z]P(z)$ and 
    \begin{align}\label{eq:id-vde}
        \mathbb{E}[Y_{x_1, V_{x_0, W_{x_1}}}] &= \sum_{w, v, z} \mathbb{E}[Y \mid x_1, w, v, z] P(v \mid x_0, w, z) P(w \mid x_1, z) P(z).
    \end{align}
\end{proposition}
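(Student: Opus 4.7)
The plan is to identify $\mathbb{E}[Y_{x_1, V_{x_0, W_{x_1}}}]$ by iterated expectation, first conditioning on $Z$ and then on the intermediate counterfactuals, and finally invoking backdoor adjustment on each factor. The first claim $\mathbb{E}[Y_{x_1}] = \sum_z \mathbb{E}[Y \mid x_1, z]P(z)$ is immediate: $Z$ is a parent of both $X$ and $Y$ and no other variable opens a backdoor path from $X$ to $Y$ in Figs.~(\ref{fig:sfm-1}, \ref{fig:sfm-2}), so $Z$ satisfies the backdoor criterion.

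For the nested counterfactual, I would first rewrite it in terms of the structural equation for $Y$. Since $Y$ is not intervened on, under $\mathrm{do}(X=x_1, V = V_{x_0, W_{x_1}})$ the mediator $W$ takes its natural value $W_{x_1}$, so $Y_{x_1, V_{x_0, W_{x_1}}} = f_Y(x_1, W_{x_1}, V_{x_0, W_{x_1}}, Z, U_Y)$. Conditioning on $Z=z$ (a root node) and then on the intermediate counterfactuals $W_{x_1}=w$ and $V_{x_0,w}=v$ by iterated expectations decomposes the target into a triple sum over $(z, w, v)$.

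The next step is to establish conditional independence of these three counterfactual variables given $Z=z$ so that the triple sum factorizes. Since Fig.~\ref{fig:sfm} contains no bidirected edges, the exogenous noises $U_W, U_V, U_Y$ are mutually independent. Given $Z=z$, $W_{x_1}$ is a function of $U_W$ only, $V_{x_0,w}$ of $U_V$ only, and $Y_{x_1,w,v}$ of $U_Y$ only, which yields the factorization. This is the step I expect to be the main obstacle, because the presence of the $W\to V$ edge in Fig.~\ref{fig:sfm-2} could introduce dependence between $V$ and $W_{x_1}$. The resolution is that we have already fixed $W=w$ inside $V_{x_0, w}$, so the $W\to V$ mechanism is ``absorbed'' into the conditioning and the factorization remains valid; this is also what rules out more general $W$--$V$ associations, as noted in the figure caption.

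Finally, I would identify each factor by backdoor adjustment under Figs.~(\ref{fig:sfm-1}, \ref{fig:sfm-2}). The set $\{Z\}$ blocks all backdoor paths from $X$ to $W$, yielding $P(W_{x_1}=w\mid z) = P(w\mid x_1, z)$. The set $\{Z, W\}$ blocks all backdoor paths from $X$ to $V$ in both DAG variants, giving $P(V_{x_0, w}=v\mid z) = P(v\mid x_0, w, z)$. And $\{Z, W, V\}$ blocks the backdoor paths relevant to the $X$-intervention on $Y$, yielding $\mathbb{E}[Y_{x_1, w, v}\mid z] = \mathbb{E}[Y\mid x_1, w, v, z]$. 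Multiplying these three factors by $P(z)$ and summing over $(z, w, v)$ recovers Eq.~\eqref{eq:id-vde}.
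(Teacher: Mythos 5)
Your proof is correct and follows essentially the same route as the paper's: condition on $Z$, decompose over the intermediate counterfactuals $W_{x_1}=w$ and $V_{x_0,w}=v$, factor the joint using their conditional independence given $Z$, and identify each factor by adjustment. The only cosmetic difference is that you justify the key counterfactual independences (e.g.\ $V_{x_0,w}\perp\!\!\!\perp W_{x_1}\mid Z$) directly from the mutual independence of the exogenous noises in the Markovian SFM, whereas the paper states them as graph-implied and then chains them; your remark that fixing $w$ inside $V_{x_0,w}$ absorbs the $W\to V$ edge is precisely why Fig.~\ref{fig:sfm-2} remains identifiable while the bidirected $W$--$V$ variants in the appendix are not.
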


We focus on estimating the nested counterfactual term of the VDE, $\mathbb{E}[Y_{x_1, V_{x_0, W_{x_1}}}]$ represented in Eq.~\eqref{eq:id-vde}, since estimating $\mathbb{E}[Y_{x_1}]$ using the back-door adjustment \citep{pearl:95} is well-known. 

To construct the estimator, we will parameterize Eq.~\eqref{eq:id-vde} using the following: 
\begin{definition}[\textbf{Nuisance Parameters} \citep{Miles2017OnSE}]
Two sets of nuisance parameters $\boldsymbol{\mu}_0 \triangleq \{\mu^1_0, \mu^2_0, \mu^3_0\}$ and $\boldsymbol{\pi}_0 \triangleq \{\pi^1_0, \pi^2_0, \pi^3_0\}$ are defined as shown in the following table, where $\check{\mu}^3_0 \triangleq \mu^3_0(V,W,x_1,Z)$ and $\check{\mu}^2_0 \triangleq \mu^2_0 (W, x_0, Z)$. 

\begin{table}[h!]
\centering
\begin{tabular}{@{}c@{\hspace{0.05\textwidth}}c@{}}
\toprule
\textbf{Regression Parameters $\boldsymbol{\mu}_0$} & \textbf{Importance Sampling Parameters $\boldsymbol{\pi}_0$} \\
\midrule
\begin{minipage}[t]{0.45\textwidth}
    \vspace*{0pt}
    \begin{tabular}{@{}ll@{}}
        $\mu^3_0 (V,W,X,Z)$ & $\triangleq \mathbb{E} [Y \mid V,W,X,Z]$ \\[2.5ex]
        $\mu^2_0 (W, X, Z)$ & $\triangleq \mathbb{E} [\check{\mu}^3_0 \mid W, X, Z]$ \\[2.5ex]
        $\mu^1_0 (X, Z)$ & $\triangleq \mathbb{E} [ \check{\mu}^2_0 \mid X, Z]$ \\
    \end{tabular}
\end{minipage}
&
\begin{minipage}[t]{0.45\textwidth}
    \vspace*{0pt}
    \begin{tabular}{@{}ll@{}}
        $\pi^3_0 (V,W,X,Z)$ & $\triangleq \frac{P(V \mid x_0,W,Z)}{P(V \mid X,W,Z)}\frac{\mathbf{1}[X = x_1]}{P(X \mid Z)}$ \\[2.5ex]
        $\pi^2_0 (W, X, Z)$ & $\triangleq \frac{P(W \mid x_1,Z)}{P(W \mid X,Z)} \frac{\mathbf{1}[X = x_0]}{P(X \mid Z)}$ \\[2.5ex]
        $\pi^1_0 (X, Z)$ & $\triangleq \frac{\mathbf{1}[X = x_1]}{P(X \mid Z)}$ \\
    \end{tabular}
\end{minipage} \\
\bottomrule
\end{tabular}
\label{tab:nuisance_parameters_parallel}
\end{table}
\end{definition}

Then, the VDE in Eq.~\eqref{eq:id-vde} can be parametrized as follows.
\begin{lemma}[\textbf{Parametrization}]\label{lemma:parametrization}
    \begin{align}
        \text{Eq.~\eqref{eq:id-vde}} = \mathbb{E}[Y \times \pi^3_0(V,W,X,Z)]  = \mathbb{E}[\mu^1_0(x_1,Z)] = \mathbb{E}[\mu^i_0 \times \pi^i_0], \text{ for } i=1,2,3.
    \end{align}
\end{lemma}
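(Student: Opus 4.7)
The plan is to prove the four-way equality \eqref{eq:id-vde} $=\mathbb{E}[\mu^1_0(x_1,Z)]=\mathbb{E}[\mu^i_0\pi^i_0]=\mathbb{E}[Y\pi^3_0]$ through elementary iterated conditioning and change-of-measure manipulations; each link in the chain reduces to the nuisance definitions plus the tower property.

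First, I would show \eqref{eq:id-vde} $=\mathbb{E}[\mu^1_0(x_1,Z)]$ by peeling the triple sum from the inside out and matching each layer to a nuisance definition. The innermost sum satisfies $\sum_v \mathbb{E}[Y\mid x_1,w,v,z]\,P(v\mid x_0,w,z)=\sum_v \mu^3_0(v,w,x_1,z)\,P(v\mid x_0,w,z)=\mu^2_0(w,x_0,z)$ directly from the definitions of $\mu^3_0$ and $\check{\mu}^3_0$. The next layer gives $\sum_w \mu^2_0(w,x_0,z)\,P(w\mid x_1,z)=\mu^1_0(x_1,z)$ via the definitions of $\check{\mu}^2_0$ and $\mu^1_0$, and averaging against $P(z)$ yields $\mathbb{E}[\mu^1_0(x_1,Z)]$, establishing the regression form.

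For the importance-sampling forms, I would verify $\mathbb{E}[\mu^i_0\pi^i_0]=\mathbb{E}[\mu^1_0(x_1,Z)]$ one index at a time. The $i=1$ case is immediate: conditioning on $Z$, the factor $\mathbf{1}[X=x_1]/P(X\mid Z)$ collapses the $X$-average to the point $x_1$. For $i=3$, I would expand $\mathbb{E}[\mu^3_0\pi^3_0]$ as a sum over $(v,w,x,z)$ against $P(v\mid x,w,z)\,P(w\mid x,z)\,P(x\mid z)\,P(z)$; the factor $1/P(v\mid x,w,z)$ cancels the $V$-conditional, $\mathbf{1}[x=x_1]/P(x\mid z)$ reduces the $x$-sum to the single term $x=x_1$, and the residual ratio $P(v\mid x_0,w,z)$ pairs with $\mu^3_0(v,w,x_1,z)$ and $P(w\mid x_1,z)$ to reproduce \eqref{eq:id-vde} exactly. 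The case $i=2$ proceeds analogously with sequential conditioning over $(w,x,z)$, using the weight $P(w\mid x_1,z)/P(w\mid x,z)\cdot\mathbf{1}[x=x_0]/P(x\mid z)$ together with the definition of $\mu^2_0$. Finally, $\mathbb{E}[Y\pi^3_0]=\mathbb{E}[\mu^3_0\pi^3_0]$ follows from a single application of the tower property, since $\pi^3_0$ is $(V,W,X,Z)$-measurable and $\mathbb{E}[Y\mid V,W,X,Z]=\mu^3_0$.

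No single step is a real obstacle; the work is bookkeeping with indicator functions and Radon--Nikodym-style ratios. The only subtlety is choosing the right order of conditioning in the $i=2,3$ cases (condition on $Z$, then $X$, then $W$, then $V$) and the implicit overlap assumption that $P(X\mid Z)$, $P(W\mid X,Z)$, and $P(V\mid X,W,Z)$ are bounded away from zero on the support, so that the ratios defining the $\pi^i_0$ are well-posed.
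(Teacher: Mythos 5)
Your proposal is correct and follows essentially the same route as the paper's proof: expand the nested regressions $\mu^2_0,\mu^1_0$ from the inside out to recover Eq.~\eqref{eq:id-vde}, cancel the density ratios against the factorization $P(v\mid x,w,z)P(w\mid x,z)P(x\mid z)P(z)$ with the indicators collapsing the $x$-sum for each $\mathbb{E}[\mu^i_0\pi^i_0]$, and apply the tower property to pass from $\mathbb{E}[Y\pi^3_0]$ to $\mathbb{E}[\mu^3_0\pi^3_0]$. Your explicit remark on the positivity of $P(X\mid Z)$, $P(W\mid X,Z)$, and $P(V\mid X,W,Z)$ is a reasonable addition that the paper leaves implicit.
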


Among all parametrizations of Eq.~\eqref{eq:id-vde}, we use the one with the double robustness property:
\begin{align}
    \varphi((Y,V,W,X,Z); \boldsymbol{\mu}_0, \boldsymbol{\pi}_0) &\triangleq \pi^3_0(V,W,X,Z) \{Y - \mu^3_0(V,W,X,Z)\} \\ 
  &+ \pi^2_0(W,X,Z)\{\mu^3_0(V,W,x_1,Z) - \mu^2_0(W,X,Z)\} \\ 
  &+ \pi^1_0(X,Z)\{\mu^2_0(W,x_0,Z) - \mu^1_0(X,Z)\} + \mu^1_0(x_1,Z). \label{eq:dr_estimator}
\end{align} 
The functional $\varphi$ is a valid representation since $\mathbb{E}[\varphi((Y,V,W,X,Z); \boldsymbol{\mu}_0, \boldsymbol{\pi}_0)] = \text{Eq.~\eqref{eq:id-vde}}$. Moreover, $\varphi$ exhibits the following robustness property: 
\begin{lemma}[\textbf{Double Robustness Property}]\label{lemma:doubly-robustness}
    Let $\mathbf{V} = (Y,V,W,X,Z)$. For any arbitrary vectors $\boldsymbol{\mu}, \boldsymbol{\pi}$, the functional $\varphi$ satisfies the following double robustness property: 
    \begin{align}\label{eq:double-robustness-vde}
        \mathbb{E}[\varphi(\mathbf{V}; \boldsymbol{\mu}_0, \boldsymbol{\pi}_0)] - \mathbb{E}[\varphi(\mathbf{V}; \boldsymbol{\mu}, \boldsymbol{\pi})] &= \sum_{i=1}^{3}O_{P}(\|\mu^i - \mu^i_0\|_{P} \|\pi^i - \pi^i_0\|_{P}).
    \end{align}
\end{lemma}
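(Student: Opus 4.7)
The plan is to prove the double-robustness identity via a canonical second-order expansion of $\varphi$ around $(\boldsymbol{\mu}_0, \boldsymbol{\pi}_0)$. Writing $\varphi(\mathbf{V}; \boldsymbol{\mu}, \boldsymbol{\pi}) = \sum_{i=1}^{3} \pi^i \, r_i(\boldsymbol{\mu}) + \mu^1(x_1, Z)$ with residual factors $r_3 = Y - \mu^3(V,W,X,Z)$, $r_2 = \mu^3(V,W,x_1,Z) - \mu^2(W,X,Z)$, and $r_1 = \mu^2(W,x_0,Z) - \mu^1(X,Z)$, I would apply the algebraic identity $\pi^i_0 r_i^0 - \pi^i r_i = (\pi^i_0 - \pi^i)\, r_i^0 + \pi^i_0\,(r_i^0 - r_i) + (\pi^i_0 - \pi^i)(r_i^0 - r_i)$ term by term. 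Taking expectations, the piece $\mathbb{E}[(\pi^i_0 - \pi^i)\, r_i^0]$ vanishes because $r_i^0$ has zero mean conditional on the arguments of $\pi^i$: indeed, $\mathbb{E}[Y - \mu^3_0 \mid V,W,X,Z] = 0$, $\mathbb{E}[\mu^3_0(V,W,x_1,Z) - \mu^2_0(W,X,Z) \mid W,X,Z] = 0$, and $\mathbb{E}[\mu^2_0(W,x_0,Z) - \mu^1_0(X,Z) \mid X,Z] = 0$ by the very definition of the nuisances $\boldsymbol{\mu}_0$, while $\pi^i - \pi^i_0$ depends only on those conditioning variables.

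Next I would show that the three ``linear'' terms $\mathbb{E}[\pi^i_0(r_i^0 - r_i)]$, together with the boundary contribution $\mathbb{E}[\mu^1_0(x_1, Z) - \mu^1(x_1, Z)]$, telescope to zero. Using the definition of $\pi^i_0$ as a density-ratio re-weighting with indicator $\mathbf{1}[X = x_1]$ (for $i = 1, 3$) or $\mathbf{1}[X = x_0]$ (for $i = 2$), direct marginalization shows that $\mathbb{E}[\pi^i_0 (r_i^0 - r_i)]$ equals $\tilde{\Psi}_i(\mu^i) - \tilde{\Psi}_{i+1}(\mu^{i+1})$ for suitable auxiliary ``plug-in'' functionals $\tilde{\Psi}_k$ that evaluate the identification formula with $\mu^k$ in place of $\mu^k_0$. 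Summing across $i = 1,2,3$ yields a telescoping chain $\Psi - \tilde{\Psi}_3(\mu^3) + \tilde{\Psi}_3(\mu^3) - \tilde{\Psi}_2(\mu^2) + \tilde{\Psi}_2(\mu^2) - \tilde{\Psi}_1(\mu^1)$, and the boundary term $\mathbb{E}[\mu^1_0(x_1,Z) - \mu^1(x_1,Z)] = \Psi - \tilde{\Psi}_1(\mu^1)$ cancels the tail. What remains is exactly $-\sum_{i=1}^{3} \mathbb{E}[(\pi^i - \pi^i_0)(r_i - r_i^0)]$, which is the bilinear residue.

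Finally, each bilinear remainder is controlled by Cauchy--Schwarz: $|\mathbb{E}[(\pi^i - \pi^i_0)(r_i - r_i^0)]| \le \|\pi^i - \pi^i_0\|_P \, \|r_i - r_i^0\|_P$, and since $r_i - r_i^0$ is a linear combination of $\mu^{i+1} - \mu^{i+1}_0$ and $\mu^i - \mu^i_0$ (with $\mu^4 \equiv Y$ so the $i=3$ case reduces to $\mu^3 - \mu^3_0$), the triangle inequality delivers the claimed $\sum_{i=1}^{3} O_P(\|\mu^i - \mu^i_0\|_P \|\pi^i - \pi^i_0\|_P)$ bound, after absorbing any cross-index products of the same second order into the $O_P$ sum. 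The main obstacle is the careful bookkeeping around the nested counterfactual structure: each $\pi^i_0$ carries a different indicator and density ratio, and each $\mu^i$ is evaluated at a different ``slice'' of $X$ (either $X$, $x_0$, or $x_1$), so verifying the telescoping identities for $\mathbb{E}[\pi^i_0(r_i^0 - r_i)]$ requires repeated, carefully matched integrations under different conditional distributions. Under standard overlap/positivity assumptions that keep the density ratios in $\pi^i_0$ bounded, the $L_2$-norms at the forced $X$-values are equivalent to the canonical $\|\cdot\|_P$, which licenses the compact final form stated in the lemma.
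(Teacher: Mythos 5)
Your proposal is essentially the paper's own argument: the paper's Helper Lemma is exactly your product-expansion identity, its law-of-iterated-expectations steps are your observation that each $r_i^0$ has zero conditional mean given the arguments of $\pi^i$, and its cancellations $\mathbb{E}[\pi^3_0\mu^3]=\mathbb{E}[\pi^2_0\mu^2_*]$ and $\mathbb{E}[\pi^2_0\mu^2]=\mathbb{E}[\pi^1_0\mu^1_*]$ are your telescoping chain, ending in the same Cauchy--Schwarz bound on the bilinear residue (with the same benign cross-index slack that the paper also absorbs). One correction: your expansion should read $\pi^i_0 r_i^0 - \pi^i r_i = (\pi^i_0 - \pi^i)\,r_i^0 + \pi^i_0\,(r_i^0 - r_i) - (\pi^i_0 - \pi^i)(r_i^0 - r_i)$ --- the bilinear term enters with a minus sign --- which changes nothing downstream since that term is only ever bounded in absolute value.
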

Eq.~\eqref{eq:double-robustness-vde} shows debiasedness because whenever $\hat{\mu}^{i}, \hat{\pi}^i$ converges to $\mu^i_0, \pi^i_0$ with rate $n^{-1/4}$, then 
the error in $\varphi$ converges at a much faster $n^{-1/2}$ rate. We construct the following doubly robust estimator.

\begin{definition}[\textbf{Doubly Robust VDE Estimator}]
    Given a sample $\mathcal{D} \overset{i.i.d.}{\sim} P$, the \textit{doubly robust VDE estimator} $\hat{\psi}$ is constructed as follows:
    \begin{enumerate}[leftmargin=*]
        \item (\textbf{Sample-Splitting}) Take any $L$-fold random partition for the dataset $\mathcal{D} \triangleq (\mathbf {V}_1,\cdots,\mathbf{V}_n)$; i.e., $\mathcal{D} = \cup_{\ell=1}^{L}\mathcal{D}_{\ell}$ where the size of the partitioned dataset  $\mathcal{D}_{\ell}$ is equal to $n/L$. 
        \item (\textbf{Learning by Partitions}) For each $\ell = 1,2,\cdots,L$, construct the estimator $\hat{\mu}^{i}_{\ell}, \hat{\pi}^{i}_{\ell}$ using $\mathcal{D} \setminus \mathcal{D}_{\ell}$ for $i=1,2,3$ and compute $\hat{\psi}_{\ell} \triangleq \mathbb{E}_{\mathcal{D}_{\ell}}[\varphi(Y,V,W,X,Z); \hat{\boldsymbol{\mu}}, \hat{\boldsymbol{\pi}}]$
        \item (\textbf{Aggregation}) The one-step estimator $\hat{\psi}$ is an average of $\{\hat{\psi}_{\ell}\}_{\ell=1}^{L}$; i.e., $\hat{\psi} \triangleq \frac{1}{L}(\hat{\psi}_{1} + \cdots, \hat{\psi}_{L})$.
    \end{enumerate}
\end{definition}

Following the analysis from \citep[Theorem 4 in][]{Jung2024UnifiedCA}, we provide novel finite-sample learning guarantees:

\begin{theorem}[\textbf{Finite-Sample Analysis}]\label{thm:distributional-finite-sample-error}
    Suppose $\hat{\mu}^i_{\ell}, \mu^i_0 < \infty$ and $0 < \hat{\pi}^i_{\ell}, \pi^i_0 < \infty$ almost surely for $i=1,2,3$. Suppose the third moment of $\varphi$ exists. Let $R_1 \triangleq (1/L) \sum_{\ell=1}^{L}(\mathbb{E}_{\mathcal{D}_{\ell}}[\hat{\varphi}_{\ell}] - \mathbb{E}_{P}[\varphi_0])$. Let $\rho^2_0 \triangleq \mathbb{V}[\varphi_0]$. Let $\kappa^3_0 \triangleq \mathbb{E}[|\varphi|^3]$. Let $\Phi(x)$ denote the standard normal CDF. Then, 
    \begin{align}\label{eq:thm:distributional-finite-sample-error-1}
        \hat{\psi} - \psi_0 = R_1 + \frac{1}{L}\sum_{\ell=1}^{L}\sum_{i=1}^{3}\mathbb{E}[\{\hat{\mu}^i_{\ell} - \mu^i_0\}\{\pi^i_0 - \hat{\pi}^i_{\ell}\}],
    \end{align}
    where, with probability greater than $1-\epsilon$, 
    \begin{align}\label{eq:thm:distributional-finite-sample-error-2}
        R_1 \leq \sqrt{\frac{2}{\epsilon}}\left( \sqrt{\frac{L \rho^2_0}{n}} + \sqrt{ \sum_{\ell=1}^{L} \frac{L \| \hat{\varphi}_{\ell} - \varphi_0 \|_{P}^{2}}{n} } \right), \quad \text{and}
    \end{align}
    \begin{align}\label{eq:thm:distributional-finite-sample-error-3}
        \left| \text{Pr}\left( \frac{\sqrt{L}}{\sqrt{n}\rho_0}R_1 \right) - \Phi(x) \right| \leq \frac{1}{\sqrt{2\pi}}\sqrt{ \frac{1}{\epsilon}\sum_{\ell=1}^{L} \frac{L \| \hat{\varphi}_{\ell} - \varphi_0 \|_{P}^{2}}{n} } + \frac{0.4748 \kappa^3_0}{\rho^3_0 \sqrt{n} }.
    \end{align}
\end{theorem}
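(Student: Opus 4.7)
The plan is to exploit the sample-splitting structure to decouple each nuisance estimator $\hat{\boldsymbol{\mu}}_{\ell}, \hat{\boldsymbol{\pi}}_{\ell}$ from the fold data $\mathcal{D}_{\ell}$ on which it is evaluated, and then to combine the double-robustness identity of Lemma~\ref{lemma:doubly-robustness} with a conditional Chebyshev argument and a conditional Berry--Esseen theorem. Throughout, write $\hat{\varphi}_{\ell}(\cdot) \triangleq \varphi(\cdot; \hat{\boldsymbol{\mu}}_{\ell}, \hat{\boldsymbol{\pi}}_{\ell})$ and $\varphi_{0}(\cdot) \triangleq \varphi(\cdot; \boldsymbol{\mu}_{0}, \boldsymbol{\pi}_{0})$, so that $\psi_{0} = \mathbb{E}_{P}[\varphi_{0}]$, and condition on $\mathcal{D}\setminus\mathcal{D}_{\ell}$ whenever convenient.

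For Eq.~\eqref{eq:thm:distributional-finite-sample-error-1}, I would add and subtract $\mathbb{E}_{P}[\hat{\varphi}_{\ell}]$ inside each fold to split the error into a stochastic and a bias piece:
\begin{align*}
\hat{\psi} - \psi_{0} \;=\; \underbrace{\frac{1}{L}\sum_{\ell=1}^{L}\!\bigl(\mathbb{E}_{\mathcal{D}_{\ell}}[\hat{\varphi}_{\ell}] - \mathbb{E}_{P}[\hat{\varphi}_{\ell}]\bigr)}_{R_{1}} \;+\; \frac{1}{L}\sum_{\ell=1}^{L}\bigl(\mathbb{E}_{P}[\hat{\varphi}_{\ell}] - \mathbb{E}_{P}[\varphi_{0}]\bigr).
\end{align*}
The second sum is handled by Lemma~\ref{lemma:doubly-robustness}: expanding each $\mathbb{E}_{P}[\hat{\varphi}_{\ell}]$ term by iterated conditioning, the three inner-product pieces of $\varphi$ telescope, the linear-in-$(\hat{\mu}-\mu_{0})$ and linear-in-$(\hat{\pi}-\pi_{0})$ parts vanish, and what remains is the bilinear sum $\sum_{i=1}^{3}\mathbb{E}[\{\hat{\mu}^{i}_{\ell}-\mu^{i}_{0}\}\{\pi^{i}_{0}-\hat{\pi}^{i}_{\ell}\}]$, which recovers Eq.~\eqref{eq:thm:distributional-finite-sample-error-1}.

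For Eq.~\eqref{eq:thm:distributional-finite-sample-error-2}, I would apply Chebyshev's inequality to $R_{1}$. Conditioning on $\{\hat{\varphi}_{\ell}\}_{\ell}$, the block $\mathcal{D}_{\ell}$ is iid of size $n/L$ and independent of $\hat{\varphi}_{\ell}$, so the conditional variance of the fold-$\ell$ empirical mean is $L\,\mathrm{Var}(\hat{\varphi}_{\ell})/n$, and since folds are independent,
\begin{align*}
\mathrm{Var}(R_{1}) \;\le\; \frac{1}{L n}\sum_{\ell=1}^{L}\mathrm{Var}(\hat{\varphi}_{\ell}) \;\le\; \frac{2\rho_{0}^{2}}{n} \;+\; \frac{2}{L n}\sum_{\ell=1}^{L}\|\hat{\varphi}_{\ell}-\varphi_{0}\|_{P}^{2},
\end{align*}
where the second inequality uses $\mathrm{Var}(A)\le 2\mathrm{Var}(B)+2\|A-B\|_{P}^{2}$. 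Chebyshev then gives $|R_{1}|\le\sqrt{\mathrm{Var}(R_{1})/\epsilon}$ with probability at least $1-\epsilon$, and distributing the square root via $\sqrt{a+b}\le\sqrt{a}+\sqrt{b}$ produces the stated bound up to the absolute constants displayed.

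For the Berry--Esseen bound in Eq.~\eqref{eq:thm:distributional-finite-sample-error-3}, the same conditioning renders each fold-$\ell$ contribution an iid average of $n/L$ centered variables with conditional variance $\sigma_{\ell}^{2}\triangleq\mathrm{Var}(\hat{\varphi}_{\ell})$ and third absolute moment $\kappa_{\ell}^{3}\triangleq\mathbb{E}|\hat{\varphi}_{\ell}-\mathbb{E}\hat{\varphi}_{\ell}|^{3}$. The classical Berry--Esseen theorem (with the sharp constant $0.4748$) yields a Kolmogorov distance of $0.4748\,\kappa_{\ell}^{3}/(\sigma_{\ell}^{3}\sqrt{n/L})$ to $\mathcal{N}(0,\sigma_{\ell}^{2})$. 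To replace the data-dependent normalizer $\sigma_{\ell}$ by the oracle normalizer $\rho_{0}$, I would invoke the standard Gaussian-swapping inequality that the Kolmogorov distance between $\mathcal{N}(0,\sigma_{\ell}^{2})$ and $\mathcal{N}(0,\rho_{0}^{2})$ is at most $(2\pi)^{-1/2}\,|\sigma_{\ell}-\rho_{0}|/\rho_{0}\le (2\pi)^{-1/2}\,\|\hat{\varphi}_{\ell}-\varphi_{0}\|_{P}/\rho_{0}$, then bound $\kappa_{\ell}^{3}\le \kappa_{0}^{3}$ up to absolute constants by the $L_{3}$ triangle inequality, and finally aggregate across folds via a union bound in the high-probability event from Eq.~\eqref{eq:thm:distributional-finite-sample-error-2} to recover the exact form of Eq.~\eqref{eq:thm:distributional-finite-sample-error-3}. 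The main obstacle is precisely this normalizer-replacement and third-moment-perturbation step: the decomposition and Chebyshev arguments are routine once Lemma~\ref{lemma:doubly-robustness} is applied, but tracking constants through the Gaussian-swap and combining the per-fold Berry--Esseen bounds into a single distributional statement in the stated form requires careful bookkeeping.
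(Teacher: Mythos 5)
Your handling of Eq.~\eqref{eq:thm:distributional-finite-sample-error-1} matches the paper exactly: add and subtract $\mathbb{E}_P[\hat{\varphi}_{\ell}]$ within each fold, identify the centered part with $R_1$, and convert the conditional bias $\mathbb{E}_P[\hat{\varphi}_{\ell}-\varphi_0]$ into the bilinear products via Lemma~\ref{lemma:doubly-robustness}. For Eq.~\eqref{eq:thm:distributional-finite-sample-error-2} you also use Chebyshev, but your one-shot variance computation conditions on all of $\{\hat{\varphi}_{\ell}\}_{\ell}$ at once and treats the fold contributions as independent; this is not literally valid under cross-fitting, since $\hat{\varphi}_{\ell'}$ is trained on $\mathcal{D}\setminus\mathcal{D}_{\ell'}$, which contains $\mathcal{D}_{\ell}$ for $\ell'\neq\ell$. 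The paper avoids this by splitting $R_1=A+B$ with $A=(\mathbb{E}_{\mathcal{D}}-\mathbb{E}_{P})[\varphi_0]$ a genuine iid average of the fixed oracle function (plain Chebyshev with variance $\rho_0^2/n$) and bounding $B$ fold by fold with a conditional Chebyshev plus a union bound over $\ell$. That is a repairable bookkeeping issue, not a conceptual one.

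The genuine gap is in Eq.~\eqref{eq:thm:distributional-finite-sample-error-3}, and it sits precisely at the step you flag as the ``main obstacle.'' A per-fold Berry--Esseen bound at sample size $n/L$ with data-dependent moments $(\sigma_{\ell},\kappa_{\ell})$, followed by a Gaussian variance swap and ``aggregation across folds via a union bound,'' does not go through: Kolmogorov-distance bounds for the individual summands of an average cannot be combined by a union bound into a Kolmogorov-distance bound for the average (distributional approximations do not union-bound the way tail events do; one would need a convolution argument), and the perturbation $\kappa_{\ell}^3\lesssim\kappa_0^3$ requires third-moment control of $\hat{\varphi}_{\ell}-\varphi_0$ that the theorem never assumes --- only $L_2$ distances appear in the bound, and the stated constant involves exactly the oracle $\kappa_0^3$. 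This route also cannot produce the $1/\sqrt{\epsilon}$ factor in the first term of Eq.~\eqref{eq:thm:distributional-finite-sample-error-3}. The paper's argument is structurally different: Berry--Esseen is applied exactly once, to the full-sample iid average $A$ of the \emph{fixed} function $\varphi_0$ (yielding $0.4748\,\kappa_0^3/(\rho_0^3\sqrt{n})$), while the entire estimated-minus-oracle remainder $B$ is treated as a random shift of the CDF argument: on the $1-\epsilon$ event one has $|B|\leq\Delta$ with $\Delta$ the quantity from the Chebyshev step, hence $P(R_1<x)\leq P(A<x+\Delta)$, and the mean-value theorem applied to $\Phi$ (whose derivative is bounded by $1/\sqrt{2\pi}$) converts the shift into the first term of the bound. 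In particular, the $1/\sqrt{2\pi}$ there is the Lipschitz constant of $\Phi$, not a Gaussian variance-swap constant; you would need to restructure your argument along these lines to recover the stated inequality.
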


Theorem~\ref{thm:distributional-finite-sample-error} demonstrates that the error can be decomposed into two terms. The term $R_1$ closely approximates a standard normal distribution variable, and the remaining term exhibits the doubly-robustness behavior. This is formalized in the corresponding asymptotic analysis.
\vspace{4pt}

\renewcommand{\thecorollary}{\ref{thm:distributional-finite-sample-error}}
\begin{corollary}[\textbf{Asymptotic error}]\label{coro:asymptotic-error}
    Assume $\mu^i_0, \hat{\mu}^i_{\ell} < \infty$ and $0 < \pi^i_0, \hat{\pi}^i_{\ell} < \infty$ almost surely. Suppose $\hat{\mu}^i_{\ell}$ and $\hat{\pi}^{i}_{\ell}$ are $L_2$-consistent. Then, 
    \begin{align*}
        \hat{\psi} - \psi_0 = R_1 + \frac{1}{L}\sum_{\ell=1}^{L}\sum_{i=1}^{3}O_{P}(\|\hat{\mu}^i_{\ell} - \mu^i_0\|\; \|\pi^i_0 - \hat{\pi}^i_{\ell}\|),  
    \end{align*}
    and $\sqrt{n}R_1$ converges in distribution to $\text{Normal}(0,\rho^2_{k,0})$.  
\end{corollary}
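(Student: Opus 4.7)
The plan is to obtain Corollary~\ref{coro:asymptotic-error} as a direct asymptotic consequence of Theorem~\ref{thm:distributional-finite-sample-error}. The theorem already supplies the exact finite-sample decomposition~\eqref{eq:thm:distributional-finite-sample-error-1} and a Berry--Esseen-type bound~\eqref{eq:thm:distributional-finite-sample-error-3}; so I would not redo any concentration work but rather let $n\to\infty$ under the added $L_2$-consistency hypothesis.

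\textbf{Step 1 (converting the bias term to $O_P$ form).} Equation~\eqref{eq:thm:distributional-finite-sample-error-1} writes $\hat\psi-\psi_0$ as $R_1$ plus the sum $(1/L)\sum_{\ell,i}\mathbb{E}[\{\hat\mu^i_\ell-\mu^i_0\}\{\pi^i_0-\hat\pi^i_\ell\}]$. Applying Cauchy--Schwarz to each such expectation gives
\[
\bigl|\mathbb{E}[\{\hat\mu^i_\ell-\mu^i_0\}\{\pi^i_0-\hat\pi^i_\ell\}]\bigr| \;\le\; \|\hat\mu^i_\ell-\mu^i_0\|_P\,\|\pi^i_0-\hat\pi^i_\ell\|_P.
\]
The almost-sure boundedness assumption ensures both norms are finite, and $L_2$-consistency makes each factor $o_P(1)$. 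This directly yields the $O_P(\|\hat\mu^i_\ell-\mu^i_0\|\,\|\pi^i_0-\hat\pi^i_\ell\|)$ bound claimed in the corollary.

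\textbf{Step 2 (asymptotic normality of $\sqrt{n}R_1$).} I would invoke~\eqref{eq:thm:distributional-finite-sample-error-3}, which bounds the Kolmogorov distance between the distribution of $(\sqrt L/(\sqrt n\rho_0))R_1$ and $\Phi$ by a sum of two terms. The Berry--Esseen-type residual $0.4748\kappa^3_0/(\rho^3_0\sqrt n)$ tends to zero because $\kappa^3_0$ and $\rho_0$ are finite constants with $L$ fixed. The other term is of order $\sqrt{\sum_\ell \|\hat\varphi_\ell-\varphi_0\|_P^2/n}$, so it suffices to show $\|\hat\varphi_\ell-\varphi_0\|_P \to 0$. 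Expanding $\hat\varphi_\ell-\varphi_0$ via the explicit formula for $\varphi$ gives a sum of terms each of the shape (bounded nuisance) times (difference of a nuisance from its truth); applying the triangle inequality and H\"older's inequality with the essential-sup bounds on the nuisances, every such term has vanishing $L_2(P)$-norm by the $L_2$-consistency of $\hat\mu^i_\ell$ and $\hat\pi^i_\ell$. Hence the right-hand side of~\eqref{eq:thm:distributional-finite-sample-error-3} is $o(1)$, and vanishing Kolmogorov distance implies convergence in distribution, giving $\sqrt{n}\,R_1\cd \mathrm{normal}(0,\rho_0^2)$ (equating $\rho^2_{k,0}$ with $\rho_0^2$).

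\textbf{Main obstacle.} The only non-mechanical step is verifying $L_2$-consistency of $\hat\varphi_\ell$ from $L_2$-consistency of the individual nuisance estimators. The difficulty is the nested structure: $\check{\mu}^3_0=\mu^3_0(V,W,x_1,Z)$ appears inside the target of the second regression, and $\hat{\check\mu}^3_\ell$ replaces it in $\hat\varphi_\ell$. Handling this cleanly requires expanding $\hat\varphi_\ell-\varphi_0$ into pairwise (nuisance difference)$\times$(bounded remainder) summands and repeatedly invoking the a.s.\ boundedness of both $\hat\mu$- and $\hat\pi$-families to control cross products; once that algebra is done, the remainder of the argument is standard bookkeeping combined with the theorem's already-proved bounds.
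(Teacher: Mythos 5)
Your proposal is correct and follows the route the paper intends: the corollary is obtained by letting $n\to\infty$ in Theorem~\ref{thm:distributional-finite-sample-error}, with Cauchy--Schwarz turning the bias term of Eq.~\eqref{eq:thm:distributional-finite-sample-error-1} into the stated $O_P$ product form and the right-hand side of Eq.~\eqref{eq:thm:distributional-finite-sample-error-3} vanishing once $\|\hat{\varphi}_{\ell}-\varphi_0\|_{P}=o_P(1)$, which you correctly reduce (via the boundedness of the nuisances) to the $L_2$-consistency of the individual $\hat{\mu}^i_{\ell}$ and $\hat{\pi}^i_{\ell}$. The only caveat --- that mere a.s.\ finiteness of the nuisances must in practice be read as an essential-sup bound to control the cross terms in $\hat{\varphi}_{\ell}-\varphi_0$ --- is inherited from the paper's own assumption statement and is not a defect of your argument.
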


To estimate $\hat{\pi}^i \in \boldsymbol{\hat{\pi}}$, we can rewrite the nuisance expressions using Bayes' rule to avoid computing densities in high dimensions. However, estimation can still be challenging because propensities in the denominator may cause the variance to explode. Since $\mathbb{E}[\pi^i_0] = 1$, we can improve the estimation stability using self-normalization (SN) by setting $\hat{\pi}^i_\text{SN} \leftarrow \hat{\pi}^i \big/ \mathbb{E}_{\mathcal{D}}[\hat{\pi}^i]$. This technique is known to reduce the variance~\citep{hesterberg1995weighted}. The self-normalized estimator for the NDE/NIE is outlined in Appendix~\ref{app:nde_nie_sn_estimators}. Theoretical results follow analogously for the self-normalized doubly-robust estimator.


\section{Experiments} \label{sec:exp}
We evaluate our proposed canonical and self-normalized doubly robust estimators for computing the fairness effects in Table~\ref{tab:fairness_effects} across synthetic, semi-synthetic, and real-world settings. Like most causal inference literature, we validate our methodology using synthetic and semi-synthetic data, which provide access to the true data-generating mechanisms and counterfactual outcomes.

We then apply our methodology to two real-world ICU datasets to estimate the path-specific effect of race (specifically mediated by discrepancies in pulse oximetry measurements) on the rate and duration of invasive treatment. Note that unlike previous studies which examine patient outcomes, we study decision-making regarding invasive ventilation procedures in the ICU.

For all experiments, we perform sample-splitting with $L = 5$ folds and clip propensities within the range $[\varepsilon, 1 - \varepsilon]$, where $\varepsilon = 10^{-4}$. All propensity and outcome models are trained using XGBoost. Details on hyperparameter selection are provided in Appendix~\ref{app:hparams} and code credits are in Appendix~\ref{app:assets}.

\subsection{Synthetic data}

In this experiment, we demonstrate the finite-sample behavior of the canonical and self-normalized doubly robust estimator. We validate convergence of causal estimands using a linear synthetic setting, where $Z$ and $W$ are multi-dimensional continuous vectors. 

We generate samples from a linear SCM with randomly initialized weights, where $Z, V$ are 3-dimensional, $W$ is 10-dimensional, and $X, Y$ are single-dimensional. These dimensionalities are chosen to be similar to the real-world data. We estimate all causal queries for computing fairness effects in Table~\ref{tab:fairness_effects}, namely: $\mathbb{E}[Y_{x_0}]$, $\mathbb{E}[Y_{x_1}]$, $\mathbb{E}[Y_{x_1, M_{x_0}}]$, and $\mathbb{E}[Y_{x_1, V_{x_0, W_{x_1}}}]$. To show the convergence, we vary the sample size from 1,000 to 32,000, which roughly matches the number of eICU samples. 

In Figures~\ref{fig:synth_continuous_query} and~\ref{fig:synth_continuous_query_sn}, we report the mean and 95\% confidence interval of the relative error for each causal query across 100 bootstraps using the standard doubly robust (non-SN) and self-normalized (SN) estimator. While both estimators converge to the true causal quantities, the errors without self-normalization exhibit significantly higher variance. In Appendix~\ref{app:nuisance_analysis}, we demonstrate that this difference primarily occurs because the empirical mean of the nuisance parameters deviates from one.

\begin{figure}[!t]
    \centering
    \begin{subfigure}{0.47\textwidth}
        \centering
        \includegraphics[width=\linewidth]{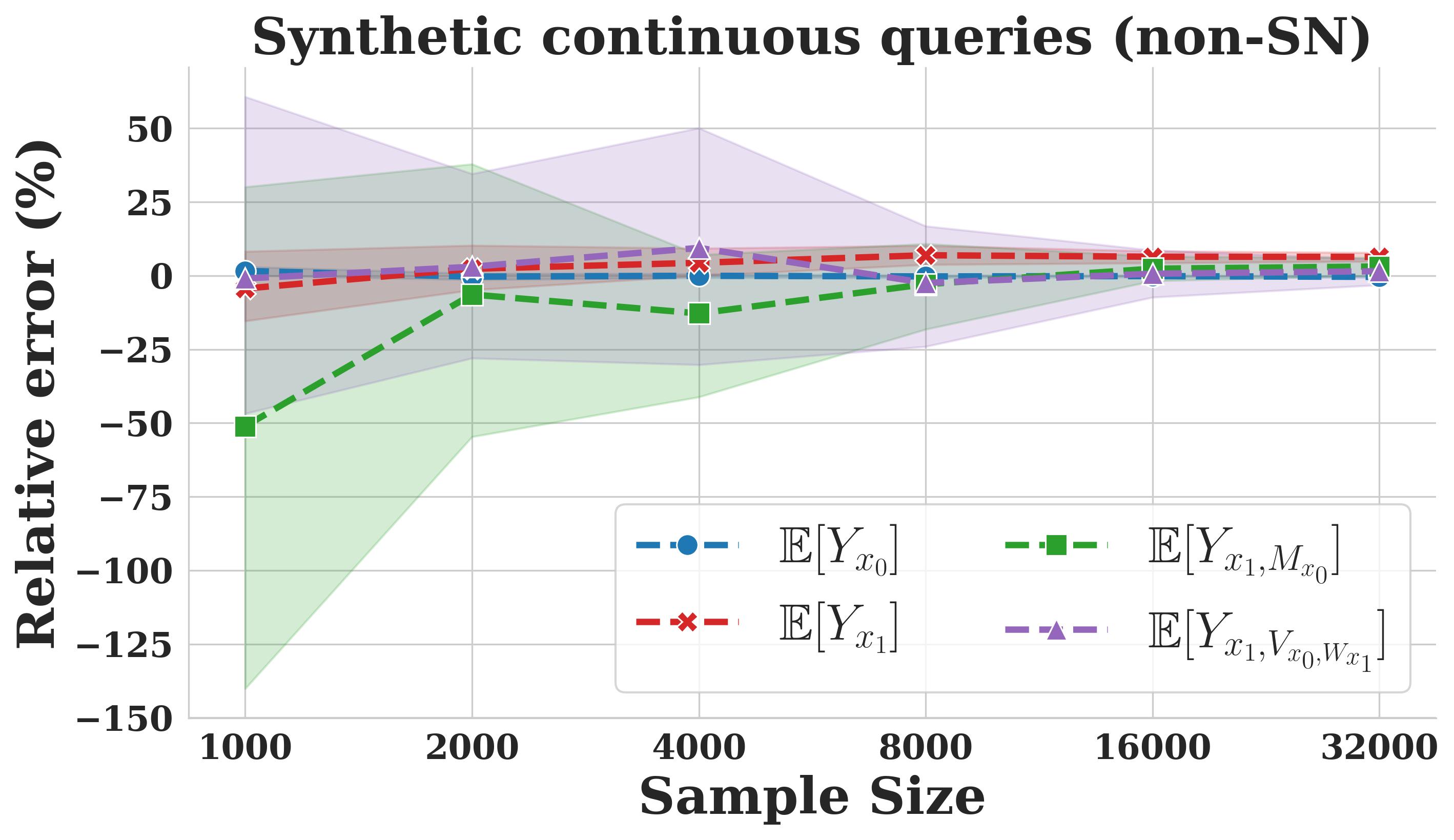}
        \caption{}
        \label{fig:synth_continuous_query}
    \end{subfigure}
    \hfill
    \begin{subfigure}{0.47\textwidth}
        \centering
        \includegraphics[width=\linewidth]{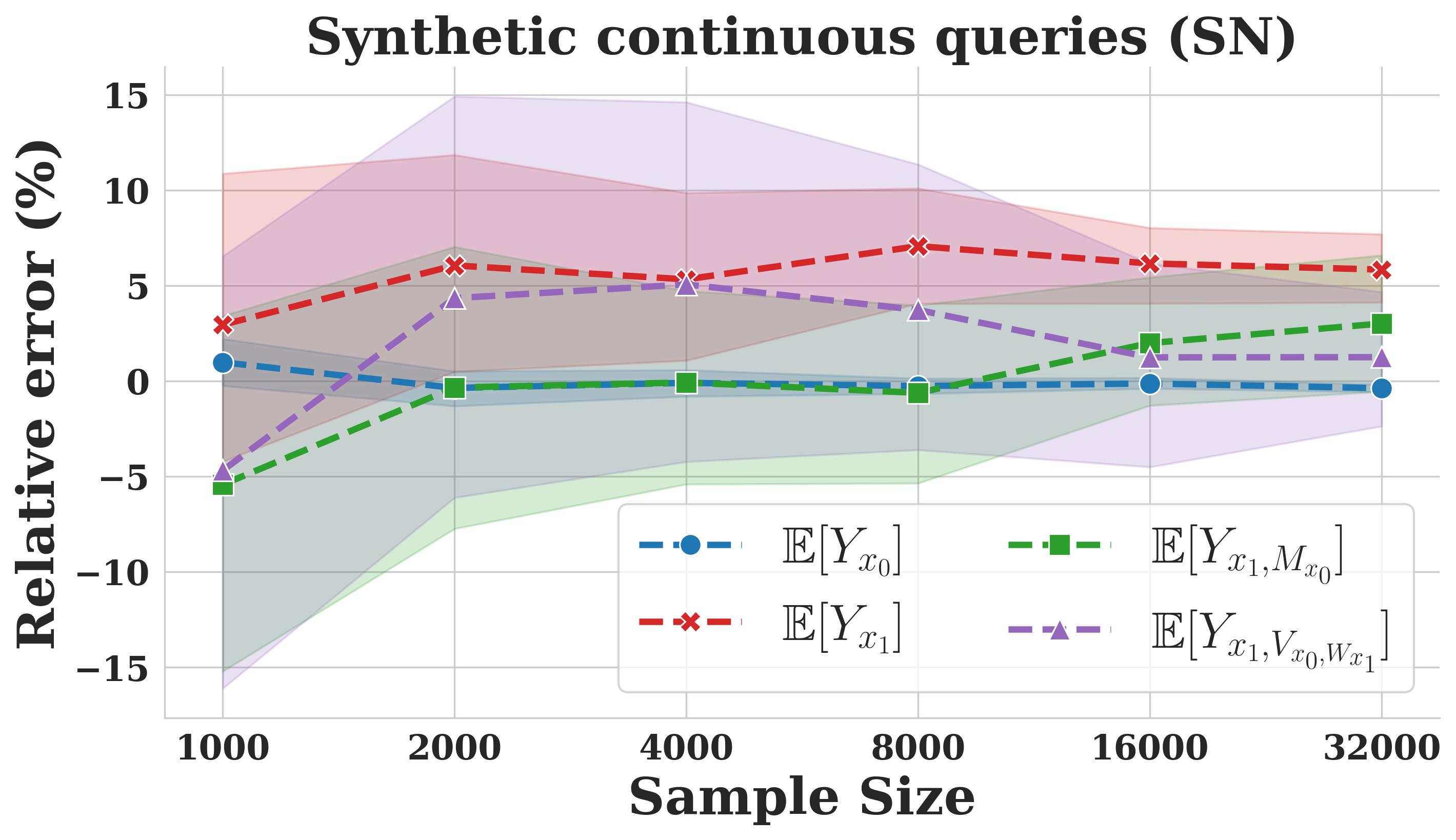}
        \caption{}
        \label{fig:synth_continuous_query_sn}
    \end{subfigure}
    \begin{subfigure}{0.47\textwidth}
        \centering
        \includegraphics[width=\linewidth]{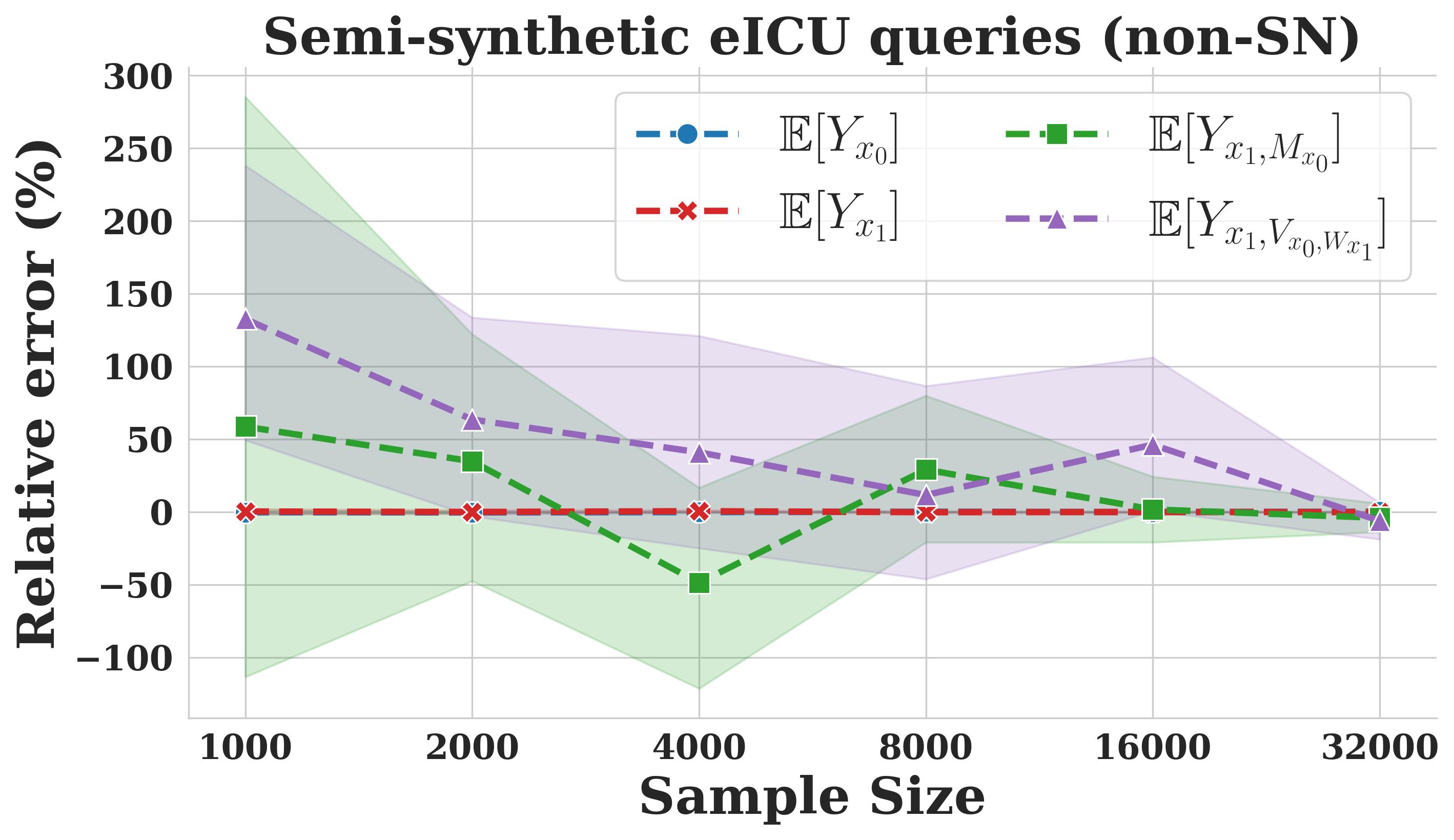}
        \caption{}
        \label{fig:semi_synth_eicu_query}
    \end{subfigure}
    \hfill
    \begin{subfigure}{0.47\textwidth}
        \centering
        \includegraphics[width=\linewidth]{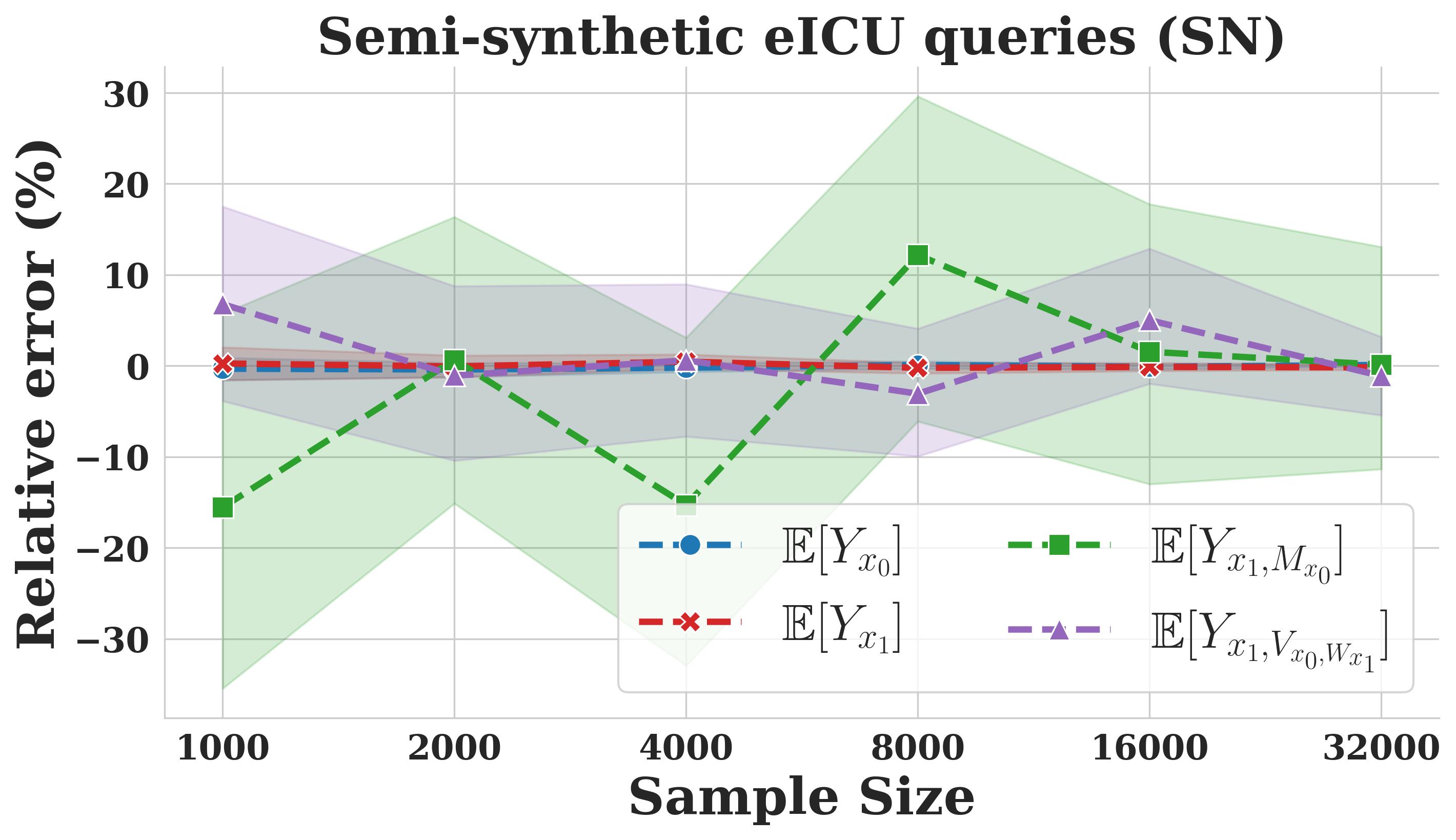}
        \caption{}
        \label{fig:semi_synth_eicu_query_sn}
    \end{subfigure}
    \caption{Convergence of the relative error of causal queries using the canonical (left column) and self-normalized (right column) doubly robust estimator on a continuous synthetic setting (top row) and semi-synthetic eICU data (bottom row). The plots show the mean and 95\% confidence interval over 100 bootstrap iterations. The estimates using self-normalization exhibit significantly lower variance.}
    \label{fig:synth_continuous_semi_synth_eicu_exp}
    \vspace{-4pt}
\end{figure}

\subsection{ICU cohort selection}

To conduct semi-synthetic/real-world experiments, we use two large, publicly available critical care datasets: the eICU Collaborative Research Database (eICU) with ICU admissions from hospitals across the continental U.S.~\citep{pollard2018eicu}, and  Medical Information Mart for Intensive Care (MIMIC-IV), with ICU data from the Beth Israel Deaconess Medical Center in Boston~\citep{johnson2020mimic}.

We are interested in investigating the causal effect of race on invasive ventilation rate and duration mediated by oxygen saturation discrepancy. If a patient undergoes multiple intubations over the course of a single ICU admission, only the first is considered. We treat gender, age, and comorbidity scores as confounders, representing patient characteristics available before ICU admission and prior to any clinical procedures or measurements. Comorbidity is quantified using the Charlson Comorbidity Index and the pre-ICU OASIS score, both of which summarize baseline health status. 

The mediators $W$ consist of post-admission measurements: arterial blood gas (ABG) results excluding SpO$_\text{2}$ and SaO$_\text{2}$, laboratory values such as creatinine and lactate, and periodic vital signs (e.g., temperature, respiratory rate). The $V$ mediators include any SpO$_\text{2}$ and SaO$_\text{2}$ values within the range $[70\%, 100\%]$, as well as the measurement discrepancy $\Delta = \text{SpO}_\text{2} - \text{SaO}_\text{2}$. For computing $\Delta$, we match each SpO$_\text{2}$ reading with an SaO$_\text{2}$ measurement taken within the next five minutes if available.

Note that our $V$ mediators include not only the discrepancy, but also SaO$_2$ and SpO$_2$, because race influences the discrepancy through oxygen saturation measurements. Although $V$ includes patient variables that are not intrinsically unfair, incorporating all oximetry-related variables captures the mechanism by which racial bias in pulse oximetry mediates clinical decisions more comprehensively.

To summarize measurement trajectories over time, we compute an exponentially weighted average from ICU admission up to the time of first ventilation (or discharge if the patient was not intubated). For a sequence of observations $x_1, \ldots, x_n$ taken $t_1, \ldots, t_n$ minutes before the first ventilation or discharge, the aggregated value is $\overline{x} = \sum_i w_i x_i \big/ \sum_i w_i$, where $w_i = \exp(-\gamma t_i)$ is an exponentially decaying weight and $\gamma$ is a smoothing parameter. We restrict the analysis to stays lasting at least 24 hours and with at least one recorded measurement of SpO$_\text{2}$, SaO$_\text{2}$, and a matched discrepancy value. The final cohort consists of 37,222 admissions from eICU and 4,897 admissions from MIMIC-IV. Figure~\ref{fig:samples} presents the distribution of ICU stays by race, ventilation status, and treatment duration. Details on the distribution of pre-admission severity scores are included in Appendix~\ref{app:data-viz}.

\begin{figure}[t]
    \centering
    \begin{subfigure}{0.34\textwidth}
        \centering
        \includegraphics[width=\linewidth]{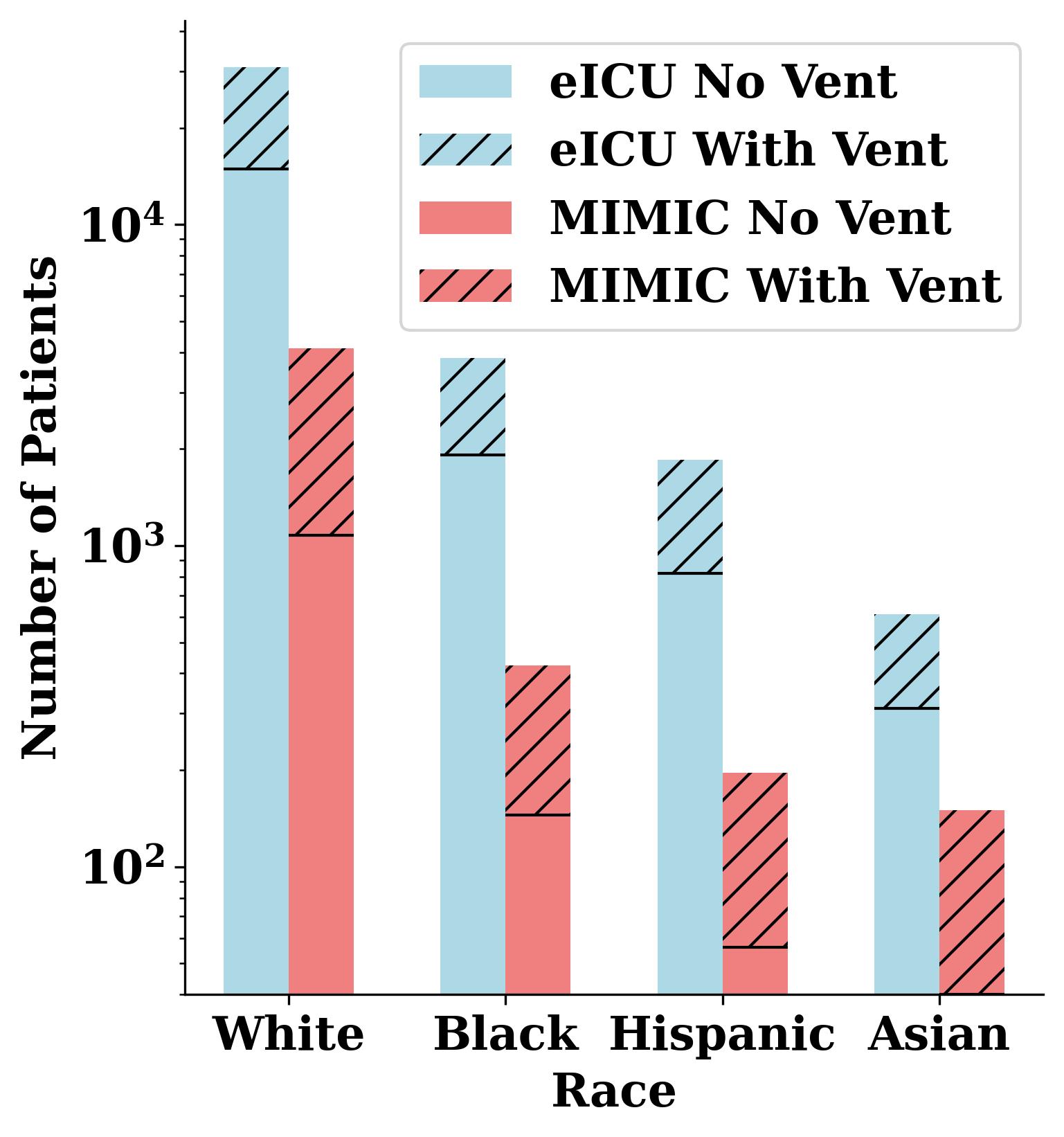}
        \caption{}
        \label{fig:samples_rate}
    \end{subfigure}
    \hfill
    \begin{subfigure}{0.64\textwidth}
        \centering
        \includegraphics[width=\linewidth]{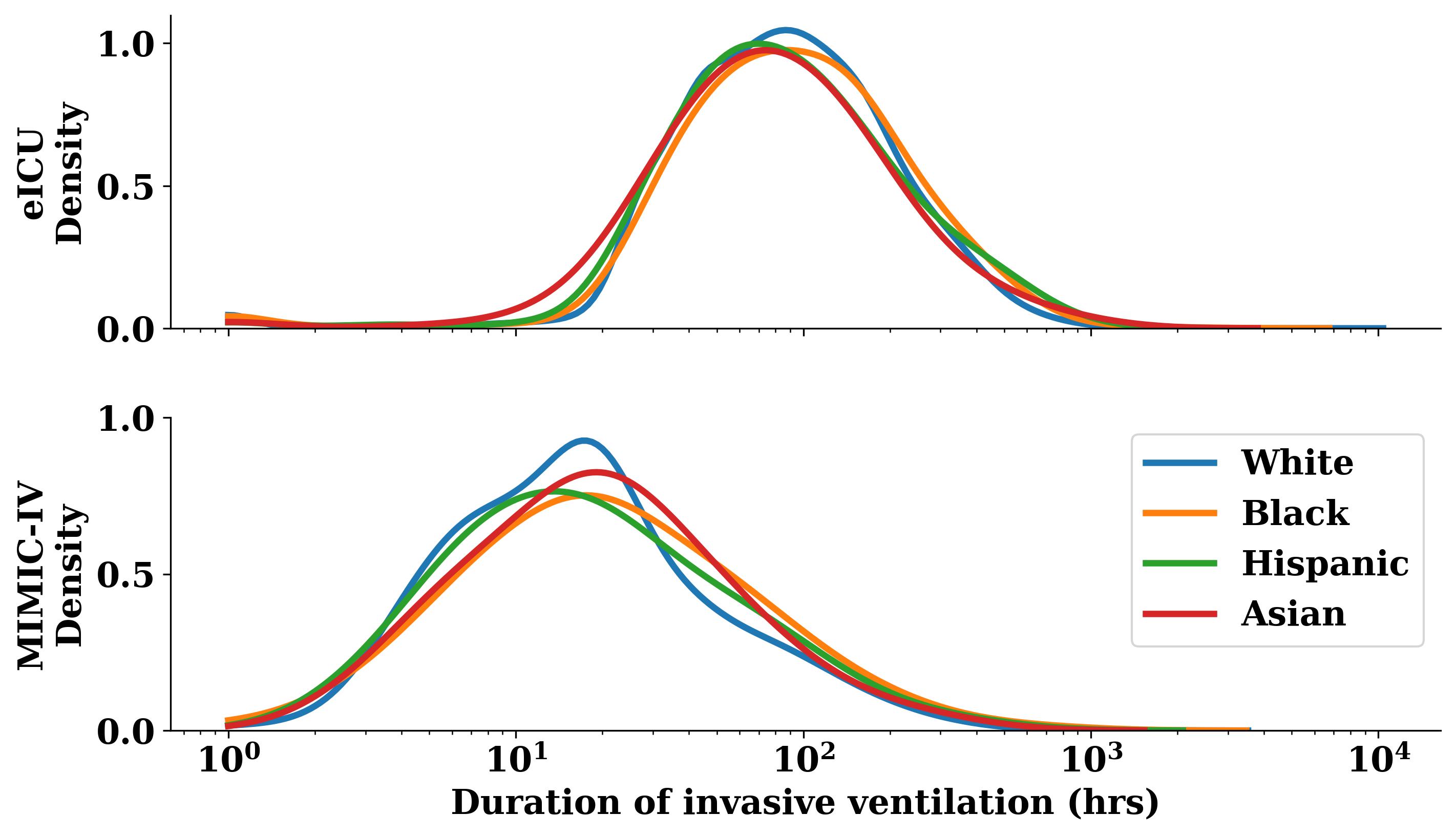}
        \caption{}
        \label{fig:samples_duration}
    \end{subfigure}
    \caption{Distribution of samples in the eICU and MIMIC-IV datasets, showing (a) number of patients by race and ventilation status and (b) invasive ventilation duration, stratified by race. The baseline duration of ventilation is higher for eICU patients, and is associated with greater pre-admission severity (see Appendix~\ref{app:data-viz}).}
    \label{fig:samples}
\end{figure}

\subsection{Semi-synthetic eICU experiments}
We create a semi-synthetic cohort by leveraging real-data patterns to construct a synthetic SCM. Semi-synthetic data allows us to evaluate our methodology in a more realistic setting while retaining access to ground-truth  effects for error analysis.

To generate the cohort, we use the real-world eICU dataset to train separate XGBoost models to predict each variable in the set $\{X, W, V, Y\}$ given its observed parents. In  the semi-synthetic setting, $Y$ is a binary variable indicating whether the patient received an invasive ventilation procedure during the stay. These trained models serve as the true causal mechanisms when generating samples. We use the eICU dataset due to its larger sample size compared to MIMIC-IV.

Following the setup in the synthetic experiment, we compute the relative error for $\mathbb{E}[Y_{x_0}]$, $\mathbb{E}[Y_{x_1}]$, $\mathbb{E}[Y_{x_1, M_{x_0}}]$, and $\mathbb{E}[Y_{x_1, V_{x_0, W_{x_1}}}]$ across varying sample sizes in Figures~\ref{fig:semi_synth_eicu_query} and~\ref{fig:semi_synth_eicu_query_sn}.
Similar to the synthetic experiments, our proposed self-normalized variant exhibits significantly lower variance in a healthcare-grounded setting with complex relationships between variables. This behavior is consistent with results using MIMIC-based semi-synthetic data, which we include in Appendix~\ref{app:semisynth_mimic_exp}. Additionally, we compare our estimators with state-of-the-art effect estimation baselines in Appendix~\ref{app:sota_comparison} and demonstrate their robustness to imbalance between $x_0$ and $x_1$ subpopulations in Appendix~\ref{app:group_imabalance}.

Results on semi-synthetic data support our approach over other methods in Table~\ref{tab:method_comparison}. Most prior work only allow for a single mediator, which could be applicable in other ICU settings, but is limited for our pulse oximetry application because relevant patient variables beyond oxygen discrepancy must be taken into consideration for valid and unbiased estimates of fairness. While~\citet{Miles2017OnSE} use multiple mediators, their estimator is conceptually analogous to our non-self-normalized variant, which exhibits high variance in small-sample settings like ours. Based on these considerations, we adopt the self-normalized estimator for quantifying path-specific effects in real-world data.

\subsection{Real-world eICU and MIMIC-IV experiments}\label{sec:real_world_exp} 

We estimate the effects in Table~\ref{tab:fairness_effects} using $x_0 = \texttt{White}$ as the baseline (reflecting the majority demographic of the cohort) and $x_1 = \texttt{Black}$. Additionally, we compute the NIE as if we entirely ignored $W$ and denote this effect as NIE$^*$. This alternative effect allows for comparison with existing research, as prior studies have quantified the effect of race mediated by measurement discrepancy without accounting for other post-admission mediators (denoted as $W$ in our setup). For example,~\citet{gottlieb2022assessment} used NIE$^*$ in the context of pulse oximetry to quantify the effect of the discrepancy on oxygen supplementation levels, but did not account for the impact of other potential mediators on the outcome. In contrast, our methodology explicitly models these mediators using $W$.

We compute all causal effects over 500 bootstrap iterations and report the mean and 95\% confidence intervals. Figure~\ref {fig:rate_duration_exp_real} shows the effects for the rate and duration of ventilation. A positive-valued effect indicates that the corresponding pathway contributes to more frequent or longer invasive ventilation treatments for Black patients relative to White patients. We include the results using the canonical doubly robust estimator in Appendix~\ref{app:canonical_real_world}. All of our analysis and conclusions are consistent between the self-normalized and the canonical doubly robust estimators, with some slight differences in the effect magnitudes.

\begin{figure}[t]
    \centering
    \begin{subfigure}{0.48\textwidth}
        \centering
        \includegraphics[width=\linewidth]{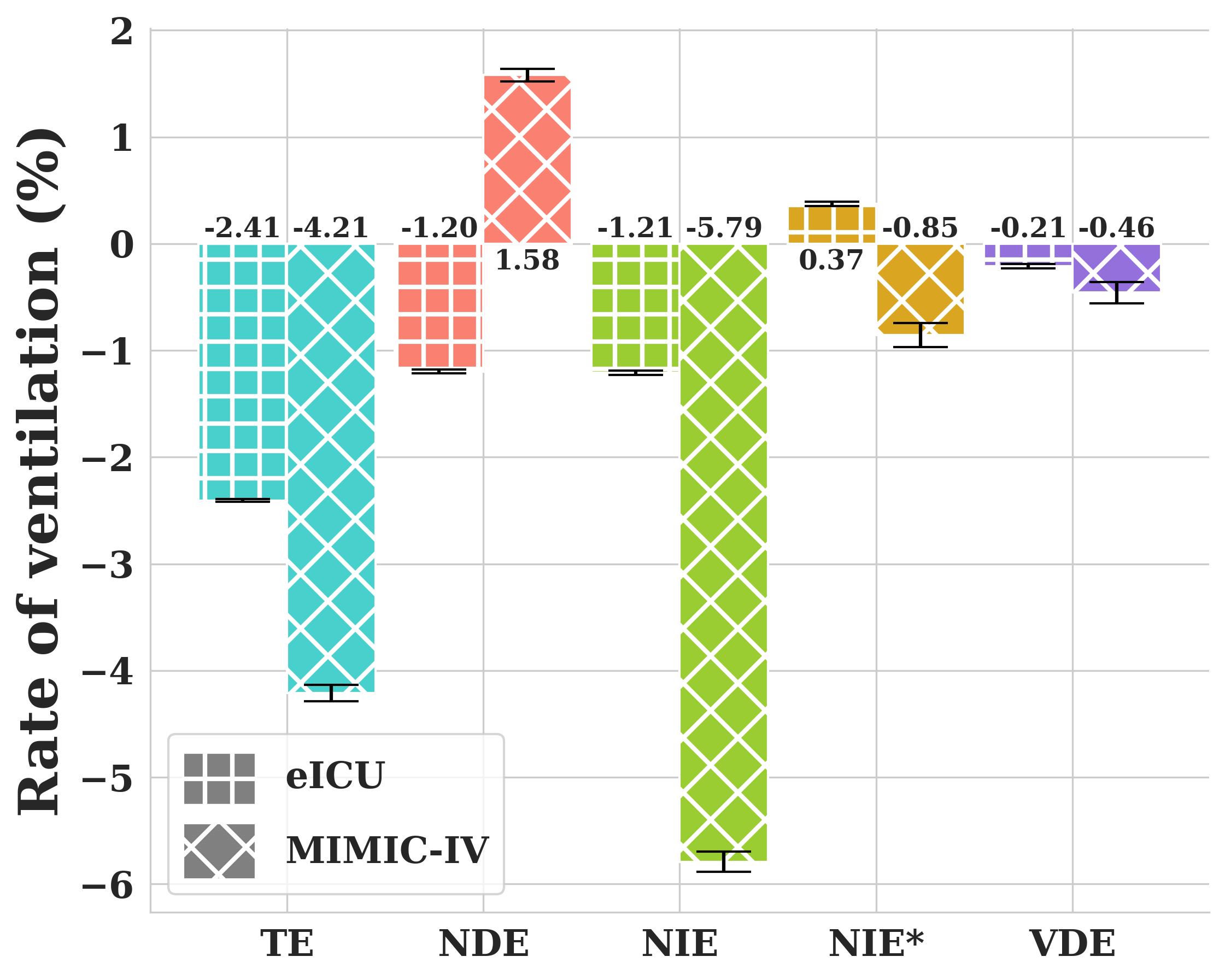}
        \caption{}
        \label{fig:rate_exp_real}
    \end{subfigure}
    \hfill
    \begin{subfigure}{0.48\textwidth}
        \centering
        \includegraphics[width=\linewidth]{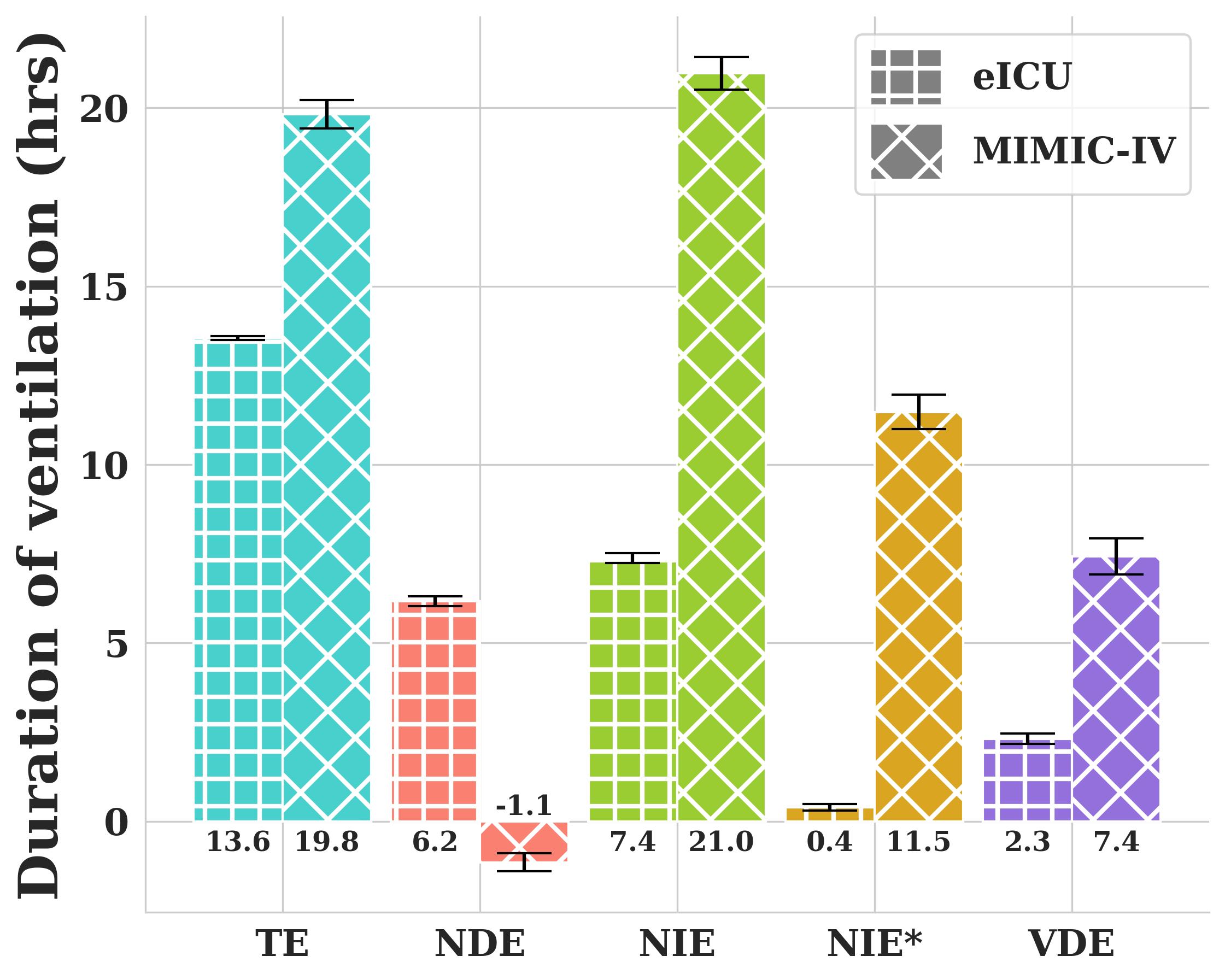}
        \caption{}
        \label{fig:duration_exp_real}
    \end{subfigure}
    \caption{Average causal fairness measures across 500 bootstraps using the self-normalized estimator for the (a) rate and (b) duration of invasive ventilation on eICU and MIMIC-IV data. Colors represent different effects and the numerical label for each bar indicates the mean across all bootstraps. Error bars show $95\%$ confidence intervals. Positive values indicate a higher rate or longer duration of invasive ventilation for Black patients relative to White patients.}
    \label{fig:rate_duration_exp_real}
\end{figure}

{\bf{Analysis of ventilation rates.}} 
Figure~\ref{fig:rate_exp_real} suggests that the overall total effect varies across the two datasets, which indicates baseline practice differences. While the negative TE indicates baseline ventilation rates are higher for White patients compared to Black patients, these differences do not indicate unfairness without studying the decomposed effects. A negative NDE in eICU versus a positive NDE in MIMIC-IV suggests direct discrepancy adjusted only for baseline patient confounders, but not ICU-specific patient condition. Much of the total variation in both eICU and MIMIC-IV is dominated by the NIE, suggesting potential differences in White vs.\ Black patient health conditions influencing ventilation rates. The VDE for invasive ventilation rates for both eICU ($-0.21$ percentage points, 95\% CI [$-0.23$ to $-0.19$]) and MIMIC-IV ($-0.46$ percentage points, 95\% CI [$-0.56$ to $-0.36$]) datasets is relatively small, indicating low unfairness mediated by oxygen saturation discrepancy. In addition, the NIE* shows that ignoring $W$ mediators for this problem can exacerbate the estimated scale of unfairness and potentially flip the direction of disparity.

{\bf{Analysis of ventilation duration.}} Figure~\ref{fig:duration_exp_real} shows significant differences in the duration of ventilation between eICU and MIMIC-IV, which is largely indicative of baseline distribution shifts in both ICU datasets (see Appendix~\ref{app:data-viz}). The flipped signs of NDE specify that across multicenter eICU data, Black patients are ventilated for longer, while in MIMIC-IV, White patients are ventilated for longer, when adjusting for pre-admission covariates. The NIE is the primary pathway influencing the total effect, however the effect severity drastically shifts when considering only a subset of mediators (i.e. NIE*), which suggests the need to decompose NIE further to assess VDE. We observe slightly longer durations of invasive ventilation among Black over White patients in eICU ($2.3$ hrs, 95\% CI [$2.2$ to $2.5$]) and significantly longer durations in MIMIC-IV ($7.4$ hrs, 95\% CI [$6.9$ to $7.9$]), mediated by pulse oximetry discrepancies. The MIMIC-IV bias in duration is broadly in line with some reported differences in levels of end-of-life care by race~\citep{johnson2010differences,johnson2013racial}, but requires further investigation to attribute observed VDE differences to meaningful causes.

To better understand how oxygen discrepancy influences the VDE, we present the effect conditioned on both the discrepancy $\Delta$ and the SpO$_\text{2}$ value. Training a regression model for the VDE proved challenging due to small sample sizes in certain regions; therefore, we compute averages over discrete binned intervals of $\Delta$ and SpO$_\text{2}$. The conditional effects are shown in Figure~\ref{fig:vcond_effects_exp}. 

While MIMIC-IV samples are sparse, the eICU conditional VDE suggests that under hypoxemia ($\text{SpO}_\text{2} \leq 85\%$ and $\Delta \geq 0\%$), higher rates of ventilation tend to occur for Black compared to White patients. However, the effect flips when $85\% \leq \text{SpO}_\text{2} \leq 90\%$ and $10\% \leq \Delta \leq 20\%$. The conditional VDE for ventilation duration is relatively small and uniform when there is little oxygen discrepancy ($|\Delta| \leq 10\%$). Meanwhile, Black patients experience significantly longer ventilation times when $90\% \leq \text{SpO}_\text{2} \leq 95\%$ and $10\% \leq \Delta \leq 20\%$ (hidden hypoxemia) in both datasets. This pattern also persists when $80\% \leq \text{SpO}_\text{2} \leq 90\%$ and $-20\% \leq \Delta \leq -10\%$ (moderate hypoxemia), but the effect flips in the region $90\% \leq \text{SpO}_\text{2} \leq 95\%$ for eICU samples.

\begin{figure}[t!]
    \centering
    \begin{subfigure}{0.47\textwidth}
        \centering
        \includegraphics[width=\linewidth]{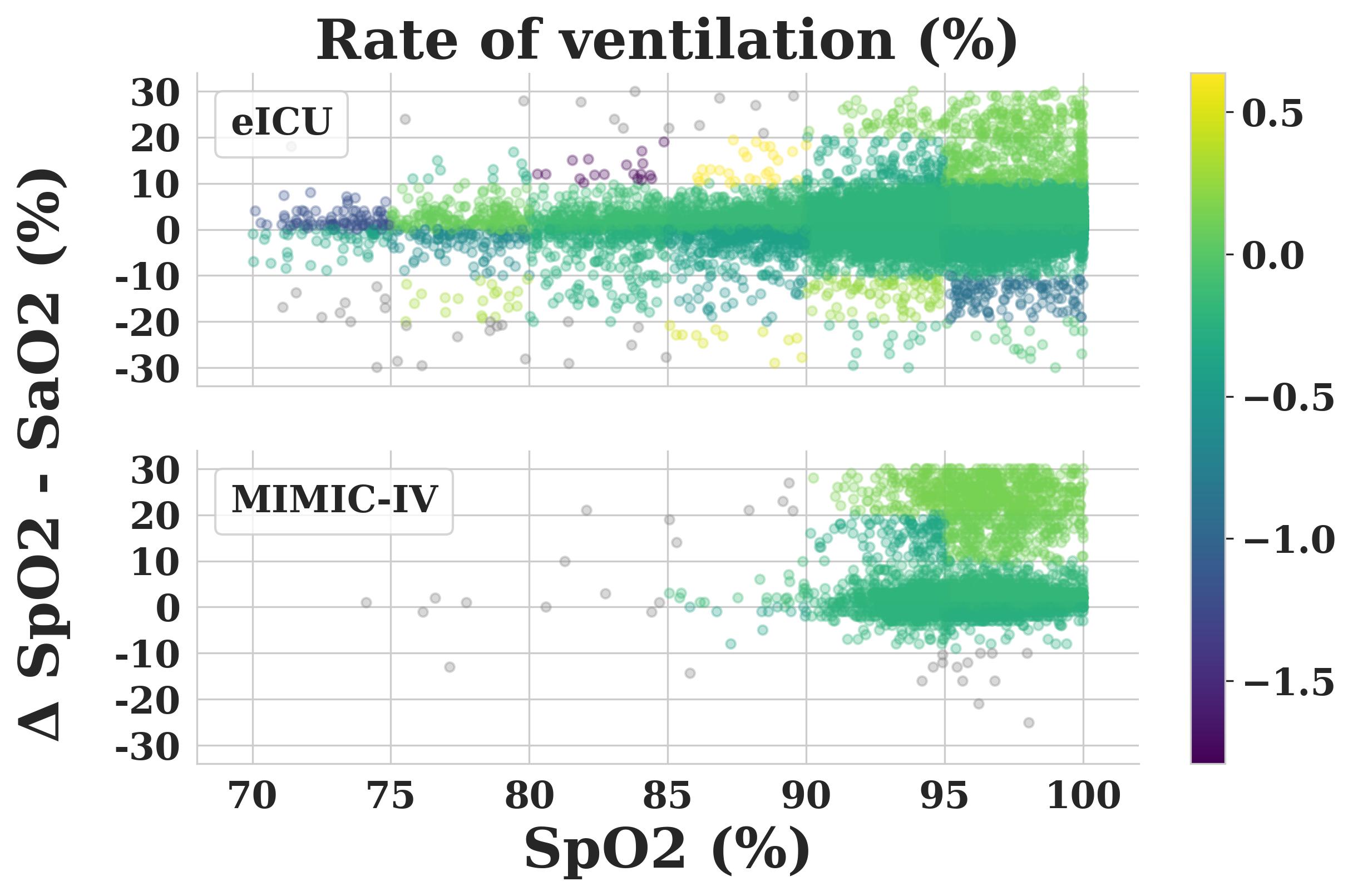}
        \caption{}
    \end{subfigure}
    \hfill
    \begin{subfigure}{0.47\textwidth}
        \centering
        \includegraphics[width=\linewidth]{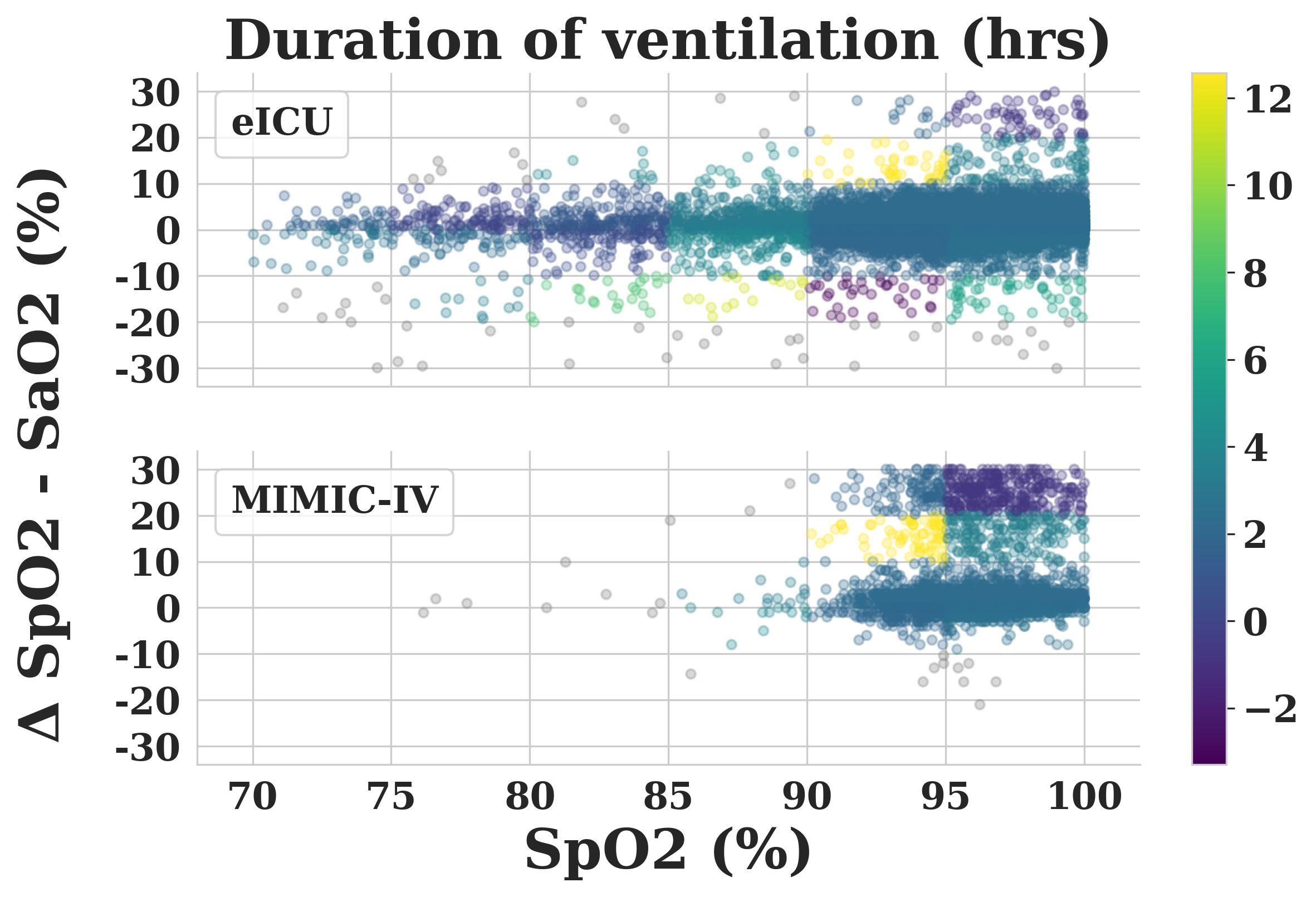}
        \caption{}
    \end{subfigure}
     \caption{VDE conditioned on the discrepancy $\Delta$ and SpO$_\text{2}$ for the (a) rate and (b) duration of invasive ventilation on eICU and MIMIC-IV data. Due to small sample sizes in certain regions, we take the average VDE over binned intervals of ($\Delta$,  SpO$_\text{2}$). Regions with fewer than 20 units are shown in gray.}
    \label{fig:vcond_effects_exp}
    \vspace{-5pt}
\end{figure}

\section{Discussion} \label{sec:discussion}
In this work, we introduce the $V$-specific Direct Effect (VDE) to quantify racial biases in critical care decision-making that are mediated by measurement discrepancies from pulse oximeter devices. While previous studies have documented disparate outcomes for Black patients, our approach is the first to apply a path-specific causal framework to examine heterogeneity by race in a clinically actionable healthcare process. We develop a doubly robust estimator and a self-normalized variant for the VDE\version{}{, provide favorable finite-sample guarantees,} and demonstrate strong empirical performance on both synthetic and semi-synthetic health data. 

In two publicly available ICU datasets, we find negligible disparities in invasive ventilation rates indicating slightly less frequent treatment for Black patients relative to White patients mediated by oxygen saturation discrepancy. Both datasets also show longer treatment durations for Black compared to White patients with different effect severities. We consistently observe that the canonical NIE exacerbates the magnitude of disparity while the NIE$^*$ may potentially flip the direction. This highlights the necessity of our path-specific causal framework to accurately examine fairness.

For assessing clinical decision-making, our findings indicate that bias arising from pulse oximetry measurements primarily affects the duration of invasive ventilation rather than initiation of treatment. This pattern suggests that clinicians may effectively integrate additional patient information beyond oxygen saturation when deciding whether to ventilate. Alternatively, the bias may manifest through pathways not captured within our current framework, such as through unmeasured variables or outcomes. Understanding the mechanisms underlying such bias remains an active area of research.

{\bf{Limitations.}}\label{sec:limitations}
As with most causal inference methods, our framework relies on a specified causal graph and thus depends on assumptions about the data-generating process that can be challenging to verify in practice. Consequently, it shares common limitations related to potential misspecification of the causal structure. For example, if there is strong reason to believe that $W$ and $V$ have a latent common cause in a given application, any estimated path-specific effect for the VDE would be inaccurate because the true effect is not identifiable. 

Our causal graph reflects generally plausible mechanisms by ensuring clinically relevant quantities reported in the datasets are modeled, while making the minimal assumptions required for identifiability of the causal effects of interest. The graph explicitly captures these underlying assumptions, which is an important advantage highlighted in critical care literature~\cite{Lederer2019ControlOC}. 

We only consider the first invasive ventilation and do not explicitly model temporality in our analysis. Additionally, sparse SaO$_\text{2}$ measurements limit the frequency of matched oxygen saturation discrepancy measurements, further affecting sample size. However, we emphasize that this is a limitation with the dataset and not inherent to our method.

\version{\subsection{Broader Impacts}\label{sec:impact}
Several clinical problems can be framed using the multiple-mediation analyses. Our contributions are intended to provide a practical algorithm for computing particular path-specific effects more broadly. The methods provide a more nuanced understanding of heterogeneity in current healthcare practices.}{{\bf{Broader Impact.}} 
Several clinical problems can be framed using the multiple-mediation analyses. Our contributions are intended to provide a practical algorithm for computing particular path-specific effects more broadly. The methods provide a more nuanced understanding of heterogeneity in current healthcare practices.}

\section*{Acknowledgements} \label{sec:acknowledgements}
KZ and SJ acknowledge partial support from Google Inc. DM acknowledges support via FRQNT doctoral scholarship (\url{https://doi.org/10.69777/354785}) for his graduate studies. The authors thank Jack Gallifant and Leo Celi for preliminary discussions, which helped inform the problem setup. Any opinions, findings,  conclusions, or recommendations in this manuscript are those of the authors and do not reflect the views, policies, endorsements, expressed or implied, of any aforementioned funding agencies/institutions.

\bibliography{ref}

@inproceedings{Zhang2018FairnessID,
  title={Fairness in Decision-Making - The Causal Explanation Formula},
  author={Junzhe Zhang and Elias Bareinboim},
  booktitle={AAAI Conference on Artificial Intelligence},
  year={2018}
}

@article{glass2013causal,
  title={Causal inference in public health},
  author={Glass, Thomas A and Goodman, Steven N and Hern{\'a}n, Miguel A and Samet, Jonathan M},
  journal={Annual Review of Public Health},
  volume={34},
  number={1},
  pages={61--75},
  year={2013},
  publisher={Annual Reviews}
}

@article{ou2025assessing,
  title={Assessing Racial Disparities in Healthcare Expenditures Using Causal Path-Specific Effects},
  author={Ou, Xiaxian and He, Xinwei and Benkeser, David and Nabi, Razieh},
  volume={abs/2504.21688},
  journal={CoRR},
  year={2025}
}

@article{hao2024utility,
  title={Utility of skin tone on pulse oximetry in critically ill patients: a prospective cohort study},
  author={Hao, Sicheng and Dempsey, Katelyn and Matos, Jo{\~a}o and Cox, Christopher E and Rotemberg, Veronica and Gichoya, Judy W and Kibbe, Warren and Hong, Chuan and Wong, An-Kwok Ian},
  journal={Critical Care Explorations},
  volume={6},
  number={9},
  pages={e1133},
  year={2024},
}

@article{zhang2024causal,
  title={Causal fairness assessment of treatment allocation with electronic health records},
  author={Zhang, Linying and Richter, Lauren R and Wang, Yixin and Ostropolets, Anna and Elhadad, No{\'e}mie and Blei, David M and Hripcsak, George},
  journal={Journal of Biomedical Informatics},
  volume={155},
  pages={104656},
  year={2024},
  publisher={Elsevier}
}

@inproceedings{nilforoshan2022causal,
  title={Causal conceptions of fairness and their consequences},
  author={Nilforoshan, Hamed and Gaebler, Johann D and Shroff, Ravi and Goel, Sharad},
  booktitle={The 39th International Conference on Machine Learning},
  pages={16848--16887},
  year={2022},
}

@article{gottlieb2022assessment,
  title={Assessment of racial and ethnic differences in oxygen supplementation among patients in the intensive care unit},
  author={Gottlieb, Eric Raphael and Ziegler, Jennifer and Morley, Katharine and Rush, Barret and Celi, Leo Anthony},
  journal={JAMA Internal Medicine},
  volume={182},
  number={8},
  pages={849--858},
  year={2022},
  publisher={American Medical Association}
}

@article{johnson2013racial,
  title={Racial and ethnic disparities in palliative care},
  author={Johnson, Kimberly S},
  journal={Journal of Palliative Medicine},
  volume={16},
  number={11},
  pages={1329--1334},
  year={2013},
  publisher={Mary Ann Liebert}
}

@article{hesterberg1995weighted,
  title={Weighted average importance sampling and defensive mixture distributions},
  author={Hesterberg, Tim},
  journal={Technometrics},
  volume={37},
  number={2},
  pages={185--194},
  year={1995},
  publisher={Taylor \& Francis}
}

@article{johnson2010differences,
  title={Differences in level of care at the end of life according to race},
  author={Johnson, Rebecca W and Newby, L. Kristin and Granger, Christopher B and Cook, Wendy A and Peterson, Eric D and Echols, Melvin and Bride, Wanda and Granger, Bradi B},
  journal={American Journal of Critical Care},
  volume={19},
  number={4},
  pages={335--343},
  year={2010},
  publisher={American Association of Critical Care Nurses}
}

@article{jamali2022racial,
  title={Racial disparity in oxygen saturation measurements by pulse oximetry: evidence and implications},
  author={Jamali, Haya and Castillo, Lauren T and Morgan, Chelsea Cosby and Coult, Jason and Muhammad, Janice L and Osobamiro, Oyinkansola O and Parsons, Elizabeth C and Adamson, Rosemary},
  journal={Annals of the American Thoracic Society},
  volume={19},
  number={12},
  pages={1951--1964},
  year={2022},
  publisher={American Thoracic Society}
}

@article{vanderweele2014mediation,
  title={Mediation analysis with multiple mediators},
  author={VanderWeele, Tyler and Vansteelandt, Stijn},
  journal={Epidemiologic Methods},
  volume={2},
  number={1},
  pages={95--115},
  year={2014},
}

@article{imai2023principal,
  title={Principal fairness for human and algorithmic decision-making},
  author={Imai, Kosuke and Jiang, Zhichao},
  journal={Statistical Science},
  volume={38},
  number={2},
  pages={317--328},
  year={2023},
  publisher={Institute of Mathematical Statistics}
}

@article{henry2022disparities,
  title={Disparities in hypoxemia detection by pulse oximetry across self-identified racial groups and associations with clinical outcomes},
  author={Henry, Nicole R and Hanson, Andrew C and Schulte, Phillip J and Warner, Nafisseh S and Manento, Megan N and Weister, Timothy J and Warner, Matthew A},
  journal={Critical Care Medicine},
  volume={50},
  number={2},
  pages={204--211},
  year={2022},
}

@article{shi2022accuracy,
  title={The accuracy of pulse oximetry in measuring oxygen saturation by levels of skin pigmentation: a systematic review and meta-analysis},
  author={Shi, Chunhu and Goodall, Mark and Dumville, Jo and Hill, James and Norman, Gill and Hamer, Oliver and Clegg, Andrew and Watkins, Caroline Leigh and Georgiou, George and Hodkinson, Alexander and Elizabeth, Catherine Lightbody and Dark, Paul and Cullum, Nicky},
  journal={BMC Medicine},
  volume={20},
  number={1},
  pages={267},
  year={2022},
  publisher={Springer}
}

@article{fawzy2022racial,
  title={Racial and ethnic discrepancy in pulse oximetry and delayed identification of treatment eligibility among patients with COVID-19},
  author={Fawzy, Ashraf and Wu, Tianshi David and Wang, Kunbo and Robinson, Matthew L and Farha, Jad and Bradke, Amanda and Golden, Sherita H and Xu, Yanxun and Garibaldi, Brian T},
  journal={JAMA Internal Medicine},
  volume={182},
  number={7},
  pages={730--738},
  year={2022},
  publisher={American Medical Association}
}

@article{johnson2020mimic,
  title={MIMIC-IV},
  author={Johnson, Alistair and Bulgarelli, Lucas and Pollard, Tom and Horng, Steven and Celi, Leo Anthony and Mark, Roger},
  journal={PhysioNet},
  year={2020}
}

@article{pollard2018eicu,
  title={The eICU Collaborative Research Database, a freely available multi-center database for critical care research},
  author={Pollard, Tom J and Johnson, Alistair E. W. and Raffa, Jesse D and Celi, Leo A and Mark, Roger G and Badawi, Omar},
  journal={Scientific Data},
  volume={5},
  number={1},
  pages={1--13},
  year={2018},
  publisher={Nature Portfolio}
}

@inproceedings{Hardt2016EqualityOO,
  title={Equality of Opportunity in Supervised Learning},
  author={Moritz Hardt and Eric Price and Nathan Srebro},
  year={2016},
  booktitle={Advances in Neural Information Processing Systems},
  volume={29}
}

@inproceedings{Jung2024UnifiedCA,
  title={Unified Covariate Adjustment for Causal Inference},
  author={Yonghan Jung and Jin Tian and Elias Bareinboim},
  booktitle={Advances in Neural Information Processing Systems},
  year={2024},
  pages={6448--6499},
  volume = {37},
}

@article{Tchetgen2012SemiparametricTF,
  title={Semiparametric Theory for Causal Mediation Analysis: efficiency bounds, multiple robustness, and sensitivity analysis},
  author={Eric Tchetgen Tchetgen and Ilya Shpitser},
  journal={Annals of Statistics},
  year={2012},
  volume={40},
  pages={1816--1845}
}

@article{pearl2010causal,
  title={Causal inference},
  author={Pearl, Judea},
  journal={Causality: Objectives and Assessment},
  pages={39--58},
  year={2010},
  publisher={PMLR}
}

@article{Farbmacher2020CausalMA,
  title={Causal mediation analysis with double machine learning},
  author={Helmut Farbmacher and Martin Huber and Henrika Langen and Martin Spindler},
  journal={The Econometrics Journal},
  year={2020}
}

@article{Miles2017OnSE,
  title={On semiparametric estimation of a path-specific effect in the presence of mediator-outcome confounding},
  author={Caleb H Miles and Ilya Shpitser and Phyllis Kanki and Seema Meloni and Eric Tchetgen Tchetgen},
  journal={Biometrika},
  year={2017},
  volume={107},
  pages={159--172}
}

@article{Wong2021AnalysisOD,
  title={Analysis of Discrepancies Between Pulse Oximetry and Arterial Oxygen Saturation Measurements by Race and Ethnicity and Association With Organ Dysfunction and Mortality},
  author={An-Kwok Ian Wong and Marie Charpignon and Han Kim and Christopher S Josef and Anne A. H. de Hond and Jhalique Jane R Fojas and Azade Tabaie and Xiaoli Liu and Eduardo Mireles-Cabodevila and Leandro Carvalho and Rishikesan Kamaleswaran and R. W. M. A. Madushani and Lasith Adhikari and Andre L Holder and Ewout Willem Steyerberg and Timothy G Buchman and Mary E Lough and Leo Anthony Celi},
  journal={JAMA Network Open},
  year={2021},
  volume={4}
}

@article{ruppel2023evaluating,
  title={Evaluating the accuracy of pulse oximetry in children according to race},
  author={Ruppel, Halley and Makeneni, Spandana and Faerber, Jennifer A and Lane-Fall, Meghan B and Foglia, Elizabeth E and O’Byrne, Michael L and Bonafide, Christopher P},
  journal={JAMA Pediatrics},
  volume={177},
  number={5},
  pages={540--543},
  year={2023},
  publisher={American Medical Association}
}

@article{Sjoding2020RacialBI,
  title={Racial Bias in Pulse Oximetry Measurement},
  author={Michael W Sjoding and Robert P Dickson and Theodore J Iwashyna and Steven E Gay and Thomas S Valley},
  journal={The New England Journal of Medicine},
  year={2020},
  volume={383},
  number={25},
  pages={2477--2478}
}

@article{plevcko2024causal,
  title={Causal fairness analysis: a causal toolkit for fair machine learning},
  author={Ple{\v{c}}ko, Drago and Bareinboim, Elias},
  journal={Foundations and Trends in Machine Learning},
  volume={17},
  number={3},
  pages={304--589},
  year={2024},
  publisher={Now Publishers}
}

@inproceedings{shpitser2018identification,
  title={Identification of personalized effects associated with causal pathways},
  author={Shpitser, Ilya and Sherman, Eli},
  booktitle={Conference on Uncertainty in Artificial Intelligence},
  volume={2018},
  pages={198},
  year={2018}
}

@inproceedings{avin2005identifiability,
  title={Identifiability of path-specific effects},
  author={Avin, Chen and Shpitser, Ilya and Pearl, Judea},
  booktitle={Proceedings of the 19th International Joint Conference on Artificial Intelligence},
  pages={357--363},
  year={2005}
}

@article{pfohl2021empirical,
  title={An empirical characterization of fair machine learning for clinical risk prediction},
  author={Pfohl, Stephen R and Foryciarz, Agata and Shah, Nigam H},
  journal={Journal of Biomedical Informatics},
  volume={113},
  pages={103621},
  year={2021},
  publisher={Elsevier}
}

@inproceedings{suresh2021framework,
  title={A framework for understanding sources of harm throughout the machine learning life cycle},
  author={Suresh, Harini and Guttag, John},
  booktitle={Proceedings of the 1st ACM Conference on Equity and Access in Algorithms, Mechanisms, and Optimization},
  pages={1--9},
  year={2021}
}

@inproceedings{pmlr-v202-zhang23ai,
  title={Why did the Model Fail?: Attributing Model Performance Changes to Distribution Shifts},
  author={Haoran Zhang and Harvineet Singh and Marzyeh Ghassemi and Shalmali Joshi},
  booktitle={Proceedings of the 40th International Conference on Machine Learning},
  year={2023},
  pages = {41550--41578},
}

@article{chen2021ethical,
  title={Ethical machine learning in healthcare},
  author={Chen, Irene Y and Pierson, Emma and Rose, Sherri and Joshi, Shalmali and Ferryman, Kadija and Ghassemi, Marzyeh},
  journal={Annual Review of Biomedical Data Science},
  volume={4},
  number={1},
  pages={123--144},
  year={2021},
  publisher={Annual Reviews}
}

@article{rajkomar2018ensuring,
  title={Ensuring fairness in machine learning to advance health equity},
  author={Rajkomar, Alvin and Hardt, Michaela and Howell, Michael D and Corrado, Greg and Chin, Marshall H},
  journal={Annals of Internal Medicine},
  volume={169},
  number={12},
  pages={866--872},
  year={2018},
  publisher={American College of Physicians}
}

@article{mhasawade2021machine,
  title={Machine learning and algorithmic fairness in public and population health},
  author={Mhasawade, Vishwali and Zhao, Yuan and Chunara, Rumi},
  journal={Nature Machine Intelligence},
  volume={3},
  number={8},
  pages={659--666},
  year={2021},
  publisher={Nature Portfolio}
}

@article{chen2023algorithmic,
  title={Algorithmic fairness in artificial intelligence for medicine and healthcare},
  author={Chen, Richard J and Wang, Judy J and Williamson, Drew FK and Chen, Tiffany Y and Lipkova, Jana and Lu, Ming Y and Sahai, Sharifa and Mahmood, Faisal},
  journal={Nature Biomedical Engineering},
  volume={7},
  number={6},
  pages={719--742},
  year={2023},
  publisher={Nature Portfolio}
}

@inproceedings{coston2020counterfactual,
  title={Counterfactual risk assessments, evaluation, and fairness},
  author={Coston, Amanda and Mishler, Alan and Kennedy, Edward H and Chouldechova, Alexandra},
  booktitle={Proceedings of the 2020 Conference on Fairness, Accountability, and Transparency},
  pages={582--593},
  year={2020}
}

@inproceedings{dwork2012fairness,
  title={Fairness through awareness},
  author={Dwork, Cynthia and Hardt, Moritz and Pitassi, Toniann and Reingold, Omer and Zemel, Richard},
  booktitle={Proceedings of the 3rd Innovations in Theoretical Computer Science Conference},
  pages={214--226},
  year={2012}
}

@inproceedings{hebert2018multicalibration,
  title={Multicalibration: Calibration for the (computationally-identifiable) masses},
  author={H{\'e}bert-Johnson, Ursula and Kim, Michael and Reingold, Omer and Rothblum, Guy},
  booktitle={The 35th International Conference on Machine Learning},
  pages={1939--1948},
  year={2018},
  organization={PMLR}
}

@inbook{oneto2020fairness,
  title={Fairness in machine learning},
  author={Oneto, Luca and Chiappa, Silvia},
  booktitle={Recent Trends in Learning From Data},
  pages={155--196},
  year={2020},
  publisher={Springer}
}

@book{barocas2023fairness,
  title={Fairness and machine learning: Limitations and opportunities},
  author={Barocas, Solon and Hardt, Moritz and Narayanan, Arvind},
  year={2023},
  publisher={MIT Press}
}

@article{mitchell2021algorithmic,
  title={Algorithmic fairness: Choices, assumptions, and definitions},
  author={Mitchell, Shira and Potash, Eric and Barocas, Solon and D'Amour, Alexander and Lum, Kristian},
  journal={Annual Review of Statistics and its Application},
  volume={8},
  number={1},
  pages={141--163},
  year={2021},
  publisher={Annual Reviews}
}

@article{chouldechova2017fair,
  title={Fair prediction with disparate impact: A study of bias in recidivism prediction instruments},
  author={Chouldechova, Alexandra},
  journal={Big Data},
  volume={5},
  number={2},
  pages={153--163},
  year={2017},
  publisher={Mary Ann Liebert}
}

@inproceedings{correa2021nested,
  title={Nested counterfactual identification from arbitrary surrogate experiments},
  author={Correa, Juan and Lee, Sanghack and Bareinboim, Elias},
  booktitle={Advances in Neural Information Processing Systems},
  volume={34},
  pages={6856--6867},
  year={2021}
}

@inproceedings{chiappa2019path,
  title={Path-specific counterfactual fairness},
  author={Chiappa, Silvia},
  booktitle={Proceedings of the AAAI conference on artificial intelligence},
  volume={33},
  pages={7801--7808},
  year={2019}
}

@article{vandenBoom2020TheSF,
  title={The Search for Optimal Oxygen Saturation Targets in Critically Ill Patients: Observational Data from Large ICU Databases},
  author={Willem van den Boom and Michael Hoy and Jagadish Sankaran and Mengru Liu and Haroun Chahed and Mengling Feng and Kay Choong See},
  journal={Chest},
  year={2020},
}

@article{Chandra2021ANV,
  title={A novel Vascular Leak Index identifies sepsis patients with a higher risk for in-hospital death and fluid accumulation},
  author={Jay Chandra and Miguel {\'A}ngel Armengol de la Hoz and Gwendolyn Lee and Alexandra Hope Lee and Patrick J Thoral and Paul W. G. Elbers and Hyung‐Chul Lee and John S Munger and Leo Anthony Celi and David A Kaufman},
  journal={Critical Care},
  year={2021},
  volume={26},
}

@article{Lederer2019ControlOC,
  title={Control of Confounding and Reporting of Results in Causal Inference Studies. Guidance for Authors from Editors of Respiratory, Sleep, and Critical Care Journals},
  author={David J Lederer and Scott Cameron Bell and Richard D Branson and James D Chalmers and Rachel Marshall and David M Maslove and David E Ost and Naresh M Punjabi and Michael Schatz and Alan Robert Smyth and Paul W Stewart and Samy Suissa and Alex A Adjei and Cezmi A Akdis and {\'E}lie Azoulay and Jan Bakker and Zuhair K Ballas and Philip G Bardin and Esther Barreiro and Rinaldo Bellomo and Jonathan A Bernstein and Vito Brusasco and Timothy G Buchman and Sudhansu Chokroverty and Nancy A Collop and James D Crapo and Dominic A Fitzgerald and Lauren Hale and Nicholas Hart and Felix J. F. Herth and Theodore J Iwashyna and Gisli Jenkins and Martin R. J. Kolb and Guy B Marks and Peter J Mazzone and Randall Moorman and Thomas M Murphy and Terry L Noah and Paul R Reynolds and Dieter Riemann and Richard E. K. Russell and Aziz Sheikh and Giovanni Sotgiu and Erik R Swenson and Rhonda D Szczesniak and Ronald Szymusiak and Jean-Louis Teboul and Jean Louis Vincent},
  journal={Annals of the American Thoracic Society},
  year={2019},
  volume={16},
  pages={22--28},
}

@article{Matos2024BOLDBA,
  title={BOLD: Blood-gas and Oximetry Linked Dataset},
  author={Jo{\~a}o Matos and Tristan Mirko Struja and Jack Gallifant and Luis Filipe Nakayama and Marie Charpignon and Xiaoli Liu and Nicoleta J Economou-Zavlanos and Jaime S Cardoso and Kimberly S Johnson and Nrupen A Bhavsar and Judy Wawira Gichoya and Leo Anthony Celi and An-Kwok Ian Wong},
  journal={Scientific Data},
  year={2024},
  volume={11},
}

@Article{pearl:95,
  author={Judea Pearl},
  title={Causal diagrams for empirical research},
  journal={Biometrika},
  volume={82},
  number={4},
  pages={669--710},
  year={1995},
}

@Book{pearl:2k,
  author={Judea Pearl},
  title={Causality: Models, Reasoning, and Inference},
  year={2000},
  publisher={Cambridge University Press},
}

@inproceedings{kusner2017counterfactual,
 author = {Kusner, Matt J and Loftus, Joshua and Russell, Chris and Silva, Ricardo},
 booktitle = {Advances in Neural Information Processing Systems},
 title = {Counterfactual Fairness},
 volume = {30},
 year = {2017}
}
\bibliographystyle{plainnat}


\newpage
\appendix
\section{Proofs} \label{app:proof}

\subsection{Identifiability of VDE in Figures~(\ref{fig:sfm-1}, \ref{fig:sfm-2})}

Figures~(\ref{fig:sfm-1}, \ref{fig:sfm-2}) imply the following conditional independences between counterfactual variables: 
\begin{align}
& V_{x_0,w} \perp\!\!\!\perp W_{x_1} \mid Z, \\ 
& W_{x} \perp\!\!\!\perp X \mid Z \\ 
& V_{x,w} \perp\!\!\!\perp W,X \mid Z\\ 
& Y_{x_1,v} \perp\!\!\!\perp V_{x_0,w} \mid Z, W_{x_1}. 
\end{align}

These conditional independences lead to the following derivation:   
\begin{align*}
    & \mathbb{E}[Y_{x_1, V_{x_0, W_{x_1}}}] \\ 
    & \sum_{z}\mathbb{E}[Y_{x_1, V_{x_0, W_{x_1}}} \mid z]P(z)  \\ 
    &= \sum_{v,w,z} \mathbb{E}[Y_{x_1, v} \mid V_{x_0,w} = v, W_{x_1} = w, z]P(V_{x_0,w} = v, W_{x_1} = w \mid z)P(z) \\ 
    &= \sum_{v,w,z} \mathbb{E}[Y_{x_1, v} \mid V_{x_0,w} = v, W_{x_1} = w, z]P(V_{x_0,w} = v \mid W_{x_1} = w, z)P(W_{x_1} = w \mid z)P(z) \\ 
    &= \sum_{v,w,z} \mathbb{E}[Y_{x_1, v} \mid V_{x_0,w} = v, W_{x_1} = w, z]P(v \mid x_0,w,z)P(w \mid x_1,z)P(z) \\ 
    &= \sum_{v,w,z} \mathbb{E}[Y \mid x_1,v,w,z]P(v \mid x_0,w,z)P(w\mid x_1,z)P(z), 
\end{align*}
where
\begin{enumerate}[leftmargin=*]
    \item Since $W_x \perp\!\!\! \perp X \mid Z$, we have $P(W_{x_1}= w \mid z) = P(w \mid x_1, z)$. 
    \item Since $V_{x_0,w} \perp\!\!\!\perp W_{x_1} \mid Z$, we have $P(V_{x_0,w} = v \mid W_{x_1} = w, z) = P(V_{x_0,w} = v \mid z)$. 
    \item Since $V_{x_0,w} \perp\!\!\!\perp W,X \mid Z$, we have $P(V_{x_0,w} = v \mid z) = P(v \mid x_0,w,z)$.
    \item Since $Y_{x_1,v} \perp\!\!\!\perp V_{x_0,w} \mid Z, W_{x_1}$, we have 
    \begin{align*}
        \mathbb{E}[Y_{x_1, v} \mid V_{x_0,w} = v, W_{x_1} = w, z] &= \mathbb{E}[Y_{x_1, v} \mid W_{x_1} = w, z] \\
        &= \mathbb{E}[Y \mid \operatorname{do}(x_1,v), w, z] = \mathbb{E}[Y \mid x_1,v,w,z].
    \end{align*}
\end{enumerate}

\subsection{Non-identifiability of VDE in Figures~(\ref{fig:sfm2-3}, \ref{fig:sfm2-4})}

\begin{figure}[!htbp]
    \begin{minipage}[b]{0.44\textwidth}\centering
    \subfloat[]{
        \begin{tikzpicture}[x=.95cm, y=1.1cm,>={Latex[width=1.4mm,length=1.7mm]},
  font=\sffamily\sansmath\scriptsize,
  RR/.style={draw,circle,inner sep=0mm, minimum size=5.5mm,font=\sffamily\sansmath\footnotesize}]
        \pgfmathsetmacro{\u}{0.82}
        \node[RR, betterblue] (x) at (0*\u, 0*\u) {$X$};
        \node[RR, betterred] (y) at (3*\u, 0*\u) {$Y$};
        \node[RR] (z) at (1.5*\u, 1*\u) {$Z$};
        \node[RR] (w) at (0.75*\u, -1.25*\u) {$W$};
        \node[RR, bettergreen] (v) at (2.25*\u, -1.25*\u) {$V$};
        
        \draw[->] (z) -- (x);
        \draw[->] (z) -- (w);
        \draw[->] (z) -- (v);
        \draw[->] (z) -- (y);
        \draw[->] (x) -- (w);
        \draw[->, bettergreen] (x) -- (v);
        \draw[->] (x) -- (y);
        \draw[->] (w) -- (y);
        \draw[->, bettergreen] (v) -- (y);

        \begin{scope}[gray!100]
            \draw[<->,dashed] (w) to [bend left=-45] (v);
        \end{scope}
    \end{tikzpicture}\label{fig:sfm2-3}}
    \end{minipage}\hfill
    \begin{minipage}[b]{0.44\textwidth}\centering
    \subfloat[]{
        \begin{tikzpicture}[x=.95cm, y=1.1cm,>={Latex[width=1.4mm,length=1.7mm]},
  font=\sffamily\sansmath\scriptsize,
  RR/.style={draw,circle,inner sep=0mm, minimum size=5.5mm,font=\sffamily\sansmath\footnotesize}]
        \pgfmathsetmacro{\u}{0.82}
        \node[RR, betterblue] (x) at (0*\u, 0*\u) {$X$};
        \node[RR, betterred] (y) at (3*\u, 0*\u) {$Y$};
        \node[RR] (z) at (1.5*\u, 1*\u) {$Z$};
        \node[RR] (w) at (0.75*\u, -1.25*\u) {$W$};
        \node[RR, bettergreen] (v) at (2.25*\u, -1.25*\u) {$V$};
        
        \draw[->] (z) -- (x);
        \draw[->] (z) -- (w);
        \draw[->] (z) -- (v);
        \draw[->] (z) -- (y);
        \draw[->] (x) -- (w);
        \draw[->, bettergreen] (x) -- (v);
        \draw[->] (x) -- (y);
        \draw[->] (w) -- (y);
        \draw[->] (w) -- (v);
        \draw[->, bettergreen] (v) -- (y);

        \begin{scope}[gray!100]
            \draw[<->,dashed] (w) to [bend left=-45] (v);
        \end{scope}
    \end{tikzpicture}\label{fig:sfm2-4}}
    \end{minipage}\hfill
    \null
\caption{Causal diagrams for the modified standard fairness model with two mediators $W$ and $V$ with latent common causes that render the green path-specific effect unidentifiable.}
\label{fig:sfm2}
\end{figure}

Whenever there exist unmeasured confounders between $W$ and $V$, the VDE is not identifiable. Specifically, we can write the counterfactual nested term $P(Y_{x_1, V_{x_0, W_{x_1}}}=y)$ as follows using the identification algorithm proposed by \cite{correa2021nested}:
\begin{align}
    P(Y_{x_1, V_{x_0, W_{x_1}}}=y) &= \sum_{v}P(Y_{x_1, v}=y, V_{x_0,W_{x_1}} = v) \\ 
                &= \sum_{v,w}P(Y_{x_1, v,w}=y, V_{x_0,w} = v, W_{x_1} = w) \\ 
                &= \sum_{v,w,x,z}P(Y_{x_1, v,z,w}=y, V_{x_0,w,z} = v, W_{x_1,z} = w, X_{z}=x, Z=z) \\ 
                &= \sum_{v,w,x,z}P(Y_{x_1, v,z,w}=y) P(V_{x_0,w,z} = v, W_{x_1,z} = w) P(X_{z}=x) P(Z=z).
\end{align}
This reduces the problem to identifying $P(V_{x_0,w,z} = v, W_{x_1,z} = w)$. By \citep[Theorem 3 in][]{correa2021nested}, this term is not identifiable because of inconsistency (where $X=0$ in the counterfactual $V_{x_0,w,z}$ while $X=1$ in $W_{x_1,z}$). 

\subsection{Proof of Lemma~\ref{lemma:parametrization}}
We note that 
\begin{align}
    \mu^2_0(W,X,Z) &\triangleq \mathbb{E}[\mu^3_0(V,W,x_1,Z) \mid W,X,Z] \\ 
    &= \sum_{v}\mu^{3}(v,W,x_1,Z)P(v \mid W,X,Z) \\ 
    &= \sum_{v}\mathbb{E}[Y \mid v,W,x_1,Z]P(v \mid W,X,Z)
\end{align}
and 
\begin{align}
    \mu^1_0(X,Z) &\triangleq \mathbb{E}[\mu^2_0(W,x_0,Z) \mid X,Z] \\   
    &= \sum_{w}\mu^2_0(w,x_0,Z)P(w \mid X,Z)  \\ 
    &= \sum_{w}\sum_{v}\mathbb{E}[Y \mid v,w,x_1,Z]P(v \mid w,x_0,Z)P(w \mid X,Z).  
\end{align}
Therefore, 
\begin{align}
    \mathbb{E}[\mu^1_0(x_1,Z)] = \text{Eq.~\eqref{eq:id-vde}}. 
\end{align}

Furthermore, 
\begin{align}
    \mathbb{E}[\pi^3_0(V,W,X,Z) Y ] & = \mathbb{E}[\pi^3_0(V,W,X,Z) \mu^3_0(V,W,X,Z) ] \\ 
    &= \sum_{v,w,z}\mu^3(v,w,x_1,z)P(v \mid x_0,w,z)P(w \mid x_1,z)P(z) \\ 
    &= \text{Eq.~\eqref{eq:id-vde}}. 
\end{align}
Also, 
\begin{align}
    \mathbb{E}[\pi^2_0(W,X,Z) \mu^2_0(W,X,Z) ] &= \sum_{w,x,z}\mu^2_0(w,x,z)\frac{P(w \mid x_1,z)\mathbf{1}[x=x_0]}{P(w \mid x,z)P(x \mid z)} P(w,x,z)  \\ 
    &= \sum_{w,z}\mu^2_0(w,x_0,z)P(w \mid x_1,z)P(z) \\ 
    &= \sum_{w,z}\sum_{v}\mathbb{E}[Y \mid v,w,x_1,z]P(v \mid w,x_0,z)P(w \mid x_1,z)P(z)  \\ 
    &= \text{Eq.~\eqref{eq:id-vde}}.
\end{align}
Finally, 
\begin{align}
    \mathbb{E}[\pi^1_0(X,Z) \mu^1_0(X,Z) ] &= \mathbb{E}[\mu^1_0(x_1,Z) ] = \text{Eq.~\eqref{eq:id-vde}}.
\end{align}
This completes the proof.

\subsection{Proof of Lemma~\ref{lemma:doubly-robustness}}

We will first use the following helper lemma: 
\begin{lemma}[\textbf{Helper Lemma}]\label{lemma:helper-lemma-doubly-robust-decomposition}
    For any functional $(a_0(\mathbf{W}), b_0(\mathbf{W}))$ and $(a(\mathbf{W}), b(\mathbf{W}))$, for any $\mathbf{W} \subseteq \mathbf{V}$, the following holds: 
    \begin{align}
        &\mathbb{E}[a(\mathbf{W})\{b_0(\mathbf{W}) - b(\mathbf{W})\} + a_0(\mathbf{W}) b(\mathbf{W}) - a_0(\mathbf{W})b_0(\mathbf{W})] \\  
        &= \mathbb{E}[\{a_0(\mathbf{W}) - a(\mathbf{W})\} \{b(\mathbf{W}) - b_0(\mathbf{W})\}]. 
    \end{align}
\end{lemma}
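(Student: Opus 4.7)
The plan is to prove this purely as a pointwise algebraic identity inside the expectation, after which linearity of expectation finishes the job. There is no probabilistic content beyond linearity, and the random vector $\mathbf{W}$ plays no role in the argument other than being the common argument of the four functions. So I would suppress the argument $\mathbf{W}$ and simply verify $a(b_0 - b) + a_0 b - a_0 b_0 = (a_0 - a)(b - b_0)$ for real numbers $a, a_0, b, b_0$.

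First I would expand the right-hand side of the claimed pointwise equality as
\begin{equation*}
(a_0 - a)(b - b_0) = a_0 b - a_0 b_0 - a b + a b_0.
\end{equation*}
Next I would expand the left-hand side as
\begin{equation*}
a(b_0 - b) + a_0 b - a_0 b_0 = a b_0 - a b + a_0 b - a_0 b_0,
\end{equation*}
and observe that the four terms on each side match term-by-term. Hence the two expressions agree pointwise (for every realization of $\mathbf{W}$), and applying $\mathbb{E}[\cdot]$ to both sides gives the stated identity by linearity.

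The main obstacle is essentially nonexistent: this is a one-line algebraic rearrangement dressed up with an expectation. The only thing worth being careful about is implicit integrability, i.e., assuming the individual terms $\mathbb{E}[a b_0]$, $\mathbb{E}[a b]$, $\mathbb{E}[a_0 b]$, $\mathbb{E}[a_0 b_0]$ are finite so that the expectations can be split and regrouped; this is guaranteed in the setting where the lemma is invoked (boundedness of nuisance functions and propensities in Theorem~\ref{thm:distributional-finite-sample-error}). Apart from that, no further machinery is needed.
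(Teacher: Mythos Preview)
Your proof is correct. It differs from the paper's argument in style: the paper introduces an auxiliary quantity $F$ defined implicitly by $\mathbb{E}[ab] - \mathbb{E}[a_0 b_0] = F + \mathbb{E}[(a_0-a)(b-b_0)]$, solves for $F = \mathbb{E}[2ab - a_0 b - a b_0]$, and then shows $\mathbb{E}[ab - F] = \mathbb{E}[a(b_0-b) + a_0 b]$ to recover the statement. This is more circuitous and reads as if the authors are \emph{deriving} the correction term rather than verifying it. Your direct pointwise expansion of both sides is shorter, more transparent, and loses nothing, since the lemma really is just the algebraic identity $(a_0-a)(b-b_0) = a b_0 - a b + a_0 b - a_0 b_0$ under an expectation. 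Your remark about integrability is also a fair caveat that the paper leaves implicit.
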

\begin{proof}
    Define $F(\mathbf{W})$ as a function satisfying the following equation: 
    \begin{align}
        \mathbb{E}[a(\mathbf{W})b(\mathbf{W})] -     \mathbb{E}[a_0(\mathbf{W})b_0(\mathbf{W})]  = F(\mathbf{W}) + \mathbb{E}[\{a_0(\mathbf{W}) - a(\mathbf{W})\} \{b(\mathbf{W}) - b_0(\mathbf{W})\}]. 
    \end{align}
    We will omit $\mathbf{W}$ for notational convenience for now. The above equation shows that 
    \begin{align}
        F &= \mathbb{E}[ab - a_0b_0 - (a_0-a)(b-b_0)] \\ 
            &= \mathbb{E}[ ab - a_0b + a_0b_0 + ab - ab_0 - a_0b_0 ] \\ 
            &= \mathbb{E}[ 2ab - a_0b - ab_0 ].
    \end{align}
    Then, 
    \begin{align}
        \mathbb{E}[ab - F] = \mathbb{E}[a_0b + ab_0 - ab] = \mathbb{E}[a(\mathbf{W})(b_0 - b) + a_0b].
    \end{align}
    By the definition of $F$, we have 
    \begin{align}
        \mathbb{E}[ab-F] - \mathbb{E}[a_0b_0] = \mathbb{E}[\{a_0(\mathbf{W}) - a(\mathbf{W})\} \{b(\mathbf{W}) - b_0(\mathbf{W})\}]. 
    \end{align}
    This completes the proof. 
\end{proof}
Based on this helper lemma, for $i=1,2,3$, we have 
\begin{align}
    \mathbb{E}[\pi^i \{\mu^i_0 - \mu^i\} + \pi^i_0 \mu^i - \pi^i_0\mu^i_0] = \mathbb{E}[\{\mu^i_0 - \mu^i\}\{\pi^i - \pi^i_0\}]. 
\end{align}
Then, consider 
\begin{align}
    & \mathbb{E}[\pi^3 \{\mu^3_0 - \mu^3\} + \pi^3_0 \mu^3 - \pi^3_0\mu^3_0]  \\ 
    &= \mathbb{E}[\pi^3 \{Y - \mu^3\} + \pi^3_0 \mu^3 ] - \psi_0 \\ 
    &= \mathbb{E}[\{\mu^3 - \mu^3_0\}\{\pi^3_0 - \pi^3\}], 
\end{align}
where the equation holds since $\mathbb{E}[\pi^3_0\mu^3_0] = \psi_0 \triangleq \text{Eq.~\eqref{eq:id-vde}}$, and by the law of the total expectation.

Next, define $\mu^2_* \triangleq \mu^2_*[\mu^3] \triangleq   \mathbb{E}[\mu^3(V,W,x_1,Z) \mid W,X,Z]$ for any fixed $\mu^3$. Then, 
\begin{align}
    & \mathbb{E}[\pi^2 \{\mu^2_* - \mu^2\} + \pi^2_0 \mu^2 - \pi^2_0\mu^2_*] \\ 
    &= \mathbb{E}[\pi^2 \{\mu^3(V,W,x_1,Z) - \mu^2\} + \pi^2_0 \mu^2 ] - \mathbb{E}[\pi^2_0\mu^2_* ] \\ 
    &= \mathbb{E}[\{\mu^2 - \mu^2_*\}\{\pi^2_0 - \pi^2\}]. 
\end{align}

Next,  define $\mu^1_* \triangleq \mu^1_*[\mu^2] \triangleq   \mathbb{E}[\mu^1(W,x_0,Z) \mid X,Z]$ for any fixed $\mu^2$. Then,
\begin{align}
    & \mathbb{E}[\pi^1 \{\mu^1_* - \mu^1\} + \pi^1_0 \mu^1 - \pi^1_0\mu^1_*] \\ 
    &= \mathbb{E}[\pi^1 \{\mu^2(V,W,x_1,Z) - \mu^1\} + \pi^1_0 \mu^1 ] - \mathbb{E}[\pi^1_0\mu^1_* ] \\ 
    &= \mathbb{E}[\{\mu^1 - \mu^1_*\}\{\pi^1_0 - \pi^1\}]. 
\end{align}
Combining, 
\begin{align}
    & \mathbb{E}[\pi^3(V,W,X,Z)\{Y - \mu^3(V,W,X,Z)\} + \pi^3_0(V,W,X,Z)\mu^3(V,W,X,Z)] - \psi_0  \\ 
    &+ \mathbb{E}[\pi^2(W,X,Z)\{\mu^3(V,W,x_1,Z) - \mu^2(W,X,Z)\} + \pi^2_0(W,X,Z)\mu^2(W,X,Z)] - \mathbb{E}[\pi^2_0\mu^2_*] \\ 
    &+ \mathbb{E}[\pi^1(X,Z)\{\mu^2(W,x_0,Z) - \mu^1(X,Z)\} + \pi^1_0(X,Z)\mu^1(X,Z)] - \mathbb{E}[\pi^1_0\mu^1_*]\\ 
    &= \sum_{i=1}^{3}\mathbb{E}[\{\mu^i-  \mu^i_0\}\{\pi^i_0 - \pi^i\}]. 
\end{align}
Then, 
\begin{align}
    \mathbb{E}[\pi^3_0(V,W,X,Z)\mu^3(V,W,X,Z)] = \mathbb{E}[\pi^2_0(W,X,Z)\mu^2_*(W,X,Z)], 
\end{align}
since 
\begin{align}
    &\mathbb{E}[\pi^2_0(W,X,Z)\mu^2_*(W,X,Z)] \\ 
    &= \mathbb{E}[\pi^2_0(W,X,Z)\mathbb{E}[\mu^3(V,W,x_1,Z) \mid W,X,Z]] \\ 
    &= \sum_{v,w,x,z}\mu^3(v,w,x_1,z)P(v \mid w,x,z)\pi^2_0(w,x,z)P(w,x,z) \\ 
    &= \sum_{v,w,x,z}\mu^3(v,w,x_1,z)P(v \mid w,x,z)\frac{P(w \mid x_1,z)\mathbf{1}(x=x_0)}{P(w \mid x,z)P(x \mid z)}P(w,x,z) \\ 
    &= \sum_{v,w,z}\mu^3(v,w,x_1,z)P(v \mid w,x_0,z)P(w \mid x_1,z)P(z) \\ 
    &= \sum_{v,w,x,z}\mu^3(v,w,x,z)\frac{\mathbf{1}(x=x_1)}{P(x \mid z)}\frac{P(v \mid w,x_0,z)}{P(v \mid w,x,z)}P(v \mid w,x,z)P(w \mid x,z)P(x \mid z)P(z) \\ 
    &= \mathbb{E}[\mu^3(V,W,X,Z)\pi^3_0(V,W,X,Z)]. 
\end{align}
Therefore, $\mathbb{E}[\pi^3_0 \mu^3]$ in the first term and $-\mathbb{E}[\pi^2_0\mu^2_*]$ in the second term can be canceled out. 

Furthermore, 
\begin{align}
    \mathbb{E}[\pi^2_0(W,X,Z)\mu^2(W,X,Z)] = \mathbb{E}[\pi^1_0(X,Z)\mu^1_*(X,Z)], 
\end{align}
since 
\begin{align}
    \mathbb{E}[\pi^1_0(X,Z)\mu^1_*(X,Z)] &= \mathbb{E}[\mu^1_*(x_1,Z)] \\ 
    &= \mathbb{E}[\mathbb{E}[\mu^2(W,x_0,Z)] \mid x_1,Z] \\ 
    &= \sum_{z}\sum_{w} \mu^2(w,x_0,z)  P(w \mid x_1,z) P(z) \\ 
    &= \sum_{w,x,z}\mu^2(w,x,z) \frac{\mathbf{1}[x=x_0] P(w \mid x_1,z)}{P(w \mid x,z)P(x \mid z)}P(w,x,z) \\ 
    &= \mathbb{E}[\pi^2_0(W,X,Z)\mu^2(W,X,Z)]. 
\end{align}
Therefore, $\mathbb{E}[\pi^2_0 \mu^2]$ in the second term and $-\mathbb{E}[\pi^1_0\mu^1_*]$ in the third term can be canceled out. Therefore, we can conclude that 
\begin{align}
    &\mathbb{E}[\pi^3(V,W,X,Z)\{Y - \mu^3(V,W,X,Z)\}  \\ 
    &+ \mathbb{E}[\pi^2(W,X,Z)\{\mu^3(V,W,x_1,Z) - \mu^2(W,X,Z)\}  \\ 
    &+ \mathbb{E}[\pi^1(X,Z)\{\mu^2(W,x_1,Z) - \mu^1(X,Z) + \mu^1(x,Z)\} \\ 
    &- \psi_0 \\ 
    &= \sum_{i=1}^{3}\mathbb{E}[\{\mu^i - \mu^i_0\} \{\pi^i_0 - \pi^i\}].
\end{align}
This completes the proof. 

\subsection{Proof of Theorem~\ref{thm:distributional-finite-sample-error}}
\subsubsection{Proof of Eq.~\eqref{eq:thm:distributional-finite-sample-error-1} in Theorem~\ref{thm:distributional-finite-sample-error}}
Let $\mathbf{V} = \{Y,V,W,X,Z\}$. We note that the doubly robust estimator is given as 
\begin{align}
    \hat{\psi} &= \frac{1}{L}\sum_{\ell=1}^{L} \mathbb{E}_{\mathcal{D}_{\ell}}[\varphi(\mathbf{V}; \widehat{\boldsymbol{\mu}}_{\ell}, \widehat{\boldsymbol{\pi}}_{\ell})]. 
\end{align}
Then, 
\begin{align}
    &\hat{\psi} - \psi_0 \\ 
    &= \frac{1}{L}\sum_{\ell=1}^{L} \mathbb{E}_{\mathcal{D}_{\ell}}[\varphi(\mathbf{V}; \widehat{\boldsymbol{\mu}}_{\ell}, \widehat{\boldsymbol{\pi}}_{\ell})] - \mathbb{E}_{P}[\varphi(\mathbf{V};\boldsymbol{\mu}_0, \boldsymbol{\pi}_0)] \\ 
                    &= \frac{1}{L}\sum_{\ell=1}^{L} (\mathbb{E}_{\mathcal{D}_{\ell}} - \mathbb{E}_{P})[\varphi(\mathbf{V};\boldsymbol{\mu}_0,\boldsymbol{\pi}_0)] \label{eq:thm:distributional-finite-sample-error-component-1} \\ 
                    &+ \frac{1}{L}\sum_{\ell=1}^{L}(\mathbb{E}_{\mathcal{D}_{\ell}} - \mathbb{E}_{P})[\varphi(\mathbf{V};\widehat{\boldsymbol{\mu}}, \widehat{\boldsymbol{\pi}}) - \varphi(\mathbf{V};\boldsymbol{\mu}_0,\boldsymbol{\pi}_0)] \label{eq:thm:distributional-finite-sample-error-component-2} \\ 
                    &+  \frac{1}{L}\sum_{\ell=1}^{L}\mathbb{E}_{P}[[\varphi(\mathbf{V};\widehat{\boldsymbol{\mu}}, \widehat{\boldsymbol{\pi}}) - \varphi(\mathbf{V};\boldsymbol{\mu}_0,\boldsymbol{\pi}_0)] \label{eq:thm:distributional-finite-sample-error-component-3}.
\end{align}
Define 
\begin{align}
    R_1 \triangleq \text{Eq.~\eqref{eq:thm:distributional-finite-sample-error-component-1}} + \text{Eq.~\eqref{eq:thm:distributional-finite-sample-error-component-2}} = \frac{1}{L}\sum_{\ell=1}^{L} (\mathbb{E}_{\mathcal{D}_{\ell}} - \mathbb{E}_{P})[\varphi(\mathbf{V};\widehat{\boldsymbol{\mu}}, \widehat{\boldsymbol{\pi}})].
\end{align}
Then, with the proof of Lemma~\ref{lemma:helper-lemma-doubly-robust-decomposition}, we observe that Eq.~\eqref{eq:thm:distributional-finite-sample-error-1} holds.

\subsubsection{Proof of Eq.~\eqref{eq:thm:distributional-finite-sample-error-2} in Theorem~\ref{thm:distributional-finite-sample-error}}

Let 
\begin{align}
    \phi_0(\mathbf{V}) &\triangleq \phi(\mathbf{V}; \boldsymbol{\mu}_0, \boldsymbol{\pi}_0), \\ 
    \widehat{\phi}_{\ell}(\mathbf{V}) &\triangleq \phi(\mathbf{V}; \widehat{\boldsymbol{\mu}}_{\ell}, \widehat{\boldsymbol{\pi}}_{\ell}). 
\end{align}
We first study the term $(\mathbb{E}_{\mathcal{D}}-  \mathbb{E}_{P})[\phi_0(\mathbf{V})]$. By Chebyshev’s inequality, 
\begin{align}
    P\Bigr( \bigr|(\mathbb{E}_{\mathcal{D}}-  \mathbb{E}_{P})[\phi_0(\mathbf{V})] \bigr| > t_1 \frac{\rho^2_0}{\sqrt{n}} \Bigr) < \frac{1}{{t_1}^2},
\end{align}
where $n \triangleq |\mathcal{D}|$. Equivalently, 
\begin{align}
    P\bigr( \bigr|(\mathbb{E}_{\mathcal{D}}-  \mathbb{E}_{P})[\phi_0(\mathbf{V})] \bigr| > {t_1}  \bigr) < \frac{1}{{t^2_1}}\frac{\rho_0^2}{n},
\end{align}
or equivalently, 
\begin{align}
    P\bigr( \bigr|(\mathbb{E}_{\mathcal{D}}-  \mathbb{E}_{P})[\phi_0(\mathbf{V})] \bigr| \leq {t_1}  \bigr) > 1- \frac{1}{{t^2_1}}\frac{\rho_0^2}{n}.
\end{align}

By \citep[Lemma~D.3 in ][]{Jung2024UnifiedCA}, we have 
\begin{align}
    P\bigr( \bigr|(\mathbb{E}_{\mathcal{D}_{\ell}}-  \mathbb{E}_{P})[\widehat{\phi}_{\ell}(\mathbf{V}) - \phi_0(\mathbf{V})] \bigr| > {t_2}  \bigr) < \frac{1}{{t_2}^2}\frac{L\|\widehat{\phi}_{\ell}(\mathbf{V}) - \phi_0(\mathbf{V})\|^{2}_{P}}{n}.
\end{align}

By \citep[Lemma~D.4 in ][]{Jung2024UnifiedCA}, we have 
\begin{align}
    P\Bigr( \frac{1}{L}\sum_{\ell=1}^{L}\bigr|(\mathbb{E}_{\mathcal{D}_{\ell}}-  \mathbb{E}_{P})[\widehat{\phi}_{\ell}(\mathbf{V}) - \phi_0(\mathbf{V})] \bigr| \leq {t_2}  \Bigr) \geq 1-\sum_{\ell=1}^{L}\frac{1}{{t_2}^2}\frac{L\|\widehat{\phi}_{\ell}(\mathbf{V}) - \phi_0(\mathbf{V})\|^{2}_{P}}{n}.
\end{align}

Choose 
\begin{align}
    t_1 &= \sqrt{\frac{\epsilon}{2}\frac{\rho^2_0}{n}}, \\ 
    t_2 &= \sqrt{\frac{\epsilon}{2}\sum_{\ell=1}^{L}\frac{L\|\widehat{\phi}_{\ell}(\mathbf{V}) - \phi_0(\mathbf{V})\|^{2}_{P}}{n}}.
\end{align}
Then, with a probability greater than $1-\epsilon$, 
\begin{align}
    R_1 &\leq t_1 + t_2 \\ 
        &= \sqrt{\frac{\epsilon}{2}\frac{\rho^2_0}{n}} + \sqrt{\frac{\epsilon}{2}\sum_{\ell=1}^{L}\frac{L\|\widehat{\phi}_{\ell}(\mathbf{V}) - \phi_0(\mathbf{V})\|^{2}_{P}}{n}} \\ 
        &= \sqrt{\frac{\epsilon}{2}}\Bigr( \sqrt{\frac{\rho^2_0}{n}} + \sqrt{ \sum_{\ell=1}^{L}\frac{L\|\widehat{\phi}_{\ell}(\mathbf{V}) - \phi_0(\mathbf{V})\|^{2}_{P}}{n} } \Bigr). 
\end{align}

\subsubsection{Proof of Eq.~\eqref{eq:thm:distributional-finite-sample-error-3} in Theorem~\ref{thm:distributional-finite-sample-error}}

From the previous proof, we have 
\begin{align}
    P\Bigr( \frac{1}{L}\sum_{\ell=1}^{L}\bigr|(\mathbb{E}_{\mathcal{D}_{\ell}}-  \mathbb{E}_{P})[\widehat{\phi}_{\ell}(\mathbf{V}) - \phi_0(\mathbf{V})] \bigr| \leq {t_2}  \Bigr) \geq 1-\sum_{\ell=1}^{L}\frac{1}{{t_2}^2}\frac{L\|\widehat{\phi}_{\ell}(\mathbf{V}) - \phi_0(\mathbf{V})\|^{2}_{P}}{n}.
\end{align}
By choosing 
\begin{align}
    t_2 = \sqrt{\frac{1}{\epsilon} \sum_{\ell=1}^{L}\frac{L\|\widehat{\phi}_{\ell}(\mathbf{V}) - \phi_0(\mathbf{V})\|^{2}_{P}}{n} }, 
\end{align}
we have 
\begin{align}
    \frac{1}{L}\sum_{\ell=1}^{L}\bigr|(\mathbb{E}_{\mathcal{D}_{\ell}}-  \mathbb{E}_{P})[\widehat{\phi}_{\ell}(\mathbf{V}) - \phi_0(\mathbf{V})] \overset{\text{with probability $1-\epsilon$}}{\leq}  \sqrt{\frac{1}{\epsilon} \sum_{\ell=1}^{L}\frac{L\|\widehat{\phi}_{\ell}(\mathbf{V}) - \phi_0(\mathbf{V})\|^{2}_{P}}{n} }. 
\end{align} 
Define 
\begin{align}
    A &\triangleq (\mathbb{E}_{\mathcal{D}} - \mathbb{E}_{P})[\phi_0(\mathbf{V})], \\ 
    B &\triangleq \frac{1}{L}\sum_{\ell=1}^{L}(\mathbb{E}_{\mathcal{D}_{\ell}} - \mathbb{E}_{P})[\widehat{\phi}_{\ell}(\mathbf{V}) - \phi_0(\mathbf{V})], \\ 
    C &\triangleq \frac{1}{L}\sum_{\ell=1}^{L}\bigr|(\mathbb{E}_{\mathcal{D}_{\ell}} - \mathbb{E}_{P})[\widehat{\phi}_{\ell}(\mathbf{V}) - \phi_0(\mathbf{V})] \bigr|, \\ 
    \Delta &\triangleq  \sqrt{\frac{1}{\epsilon} \sum_{\ell=1}^{L}\frac{L\|\widehat{\phi}_{\ell}(\mathbf{V}) - \phi_0(\mathbf{V})\|^{2}_{P}}{n} }.
\end{align}
Here, 
\begin{align}
    R_1 = A + B.
\end{align}
Then, 
\begin{align}
    P(R < x )  &= P(A+B < x) = P(A < x - B) \leq P(A < x+ C) \overset{\text{w.p. $1-\epsilon$}}{\leq} P(A < x+ \Delta).
\end{align}
Then, 
\begin{align}
    & \bigr| P(A < x+\Delta) - \Phi(x) \bigr| \\ 
    &= \bigr| P(A < x+\Delta) - \Phi(x + \Delta) + \Phi(x + \Delta) - \Phi(x) \bigr| \\ 
    &\leq \bigr| P(A < x+\Delta) - \Phi(x + \Delta) \bigr| + \bigr|\Phi(x + \Delta) - \Phi(x) \bigr| \\ 
    &\leq \frac{0.4748 \kappa^3_0}{\rho^3_0 \sqrt{n}} + \bigr|\Phi(x + \Delta) - \Phi(x) \bigr| \quad \text{\citep[Prop.~D.1 in ][]{Jung2024UnifiedCA}} \\ 
    &= \frac{0.4748 \kappa^3_0}{\rho^3_0 \sqrt{n}} + |\Phi'(x')\Delta| \quad \text{(mean-value theorem)} \\ 
    &\leq  \frac{0.4748 \kappa^3_0}{\rho^3_0 \sqrt{n}} + \frac{1}{\sqrt{2\pi}}\Delta.
\end{align}
This completes the proof. 

\section{Self-normalized estimators for the NDE and NIE}\label{app:nde_nie_sn_estimators}

We focus on estimating the counterfactual term for the NDE and NIE, $\mathbb{E}[Y_{x_1, W_{x_0}, V_{x_0}}]$, since estimating $\mathbb{E}[Y_{x_0}]$ and $\mathbb{E}[Y_{x_1}]$ using the back-door adjustment \citep{pearl:95} is well-known. Let $M = \{W, V\}$ be the combined set of mediators. The counterfactual quantity is identifiable in Figure~\ref{fig:sfm} using the formula:
\[ \mathbb{E}[Y_{x_1, M_{x_0}}] = \sum_{m, z} \mathbb{E}[Y \mid x_1, m, z] P(m \mid x_0, z) P(z). \]

Following the recipe in~\cite{Jung2024UnifiedCA}, the expression for $\psi_0 = \mathbb{E}[Y_{x_1, M_{x_0}}]$ can be parameterized using the following nuisance parameters:

\begin{table}[H]
\centering
\begin{tabular}{@{}c@{\hspace{0.05\textwidth}}c@{}}
\toprule
\textbf{Regression Parameters $\boldsymbol{\mu}_0$} & \textbf{Importance Sampling Parameters $\boldsymbol{\pi}_0$} \\
\midrule
\begin{minipage}[t]{0.45\textwidth}
    \vspace*{0pt}
    \begin{tabular}{@{}ll@{}}
        $\mu^2_0 (M,X,Z)$ & $\triangleq \mathbb{E} [Y \mid M,X,Z]$ \\[2.5ex]
        $\mu^1_0 (X, Z)$ & $\triangleq \mathbb{E} [\check{\mu}^2_0 \mid X, Z]$ \\
    \end{tabular}
\end{minipage}
&
\begin{minipage}[t]{0.45\textwidth}
    \vspace*{0pt}
    \begin{tabular}{@{}ll@{}}
        $\pi^2_0 (M,X,Z)$ & $\triangleq \frac{P(M \mid x_0,Z)}{P(M \mid X,Z)}\frac{\mathbf{1}[X = x_1]}{P(X \mid Z)}$ \\[2.5ex]
        $\pi^1_0 (X, Z)$ & $\triangleq \frac{\mathbf{1}[X = x_0]}{P(X \mid Z)}$ \\[2.5ex]
    \end{tabular}
\end{minipage} \\
\bottomrule
\end{tabular}
\label{tab:nuisance_parameters_parallel_app}
\end{table}

where $\check{\mu}^2_0 \triangleq \mu^2_0(M,x_1,Z)$. The doubly robust estimator for $\psi_0$ is

\[ \hat{\psi} = \mathbb{E}[\hat{\pi}^2_0 \{Y - \hat{\mu}^2_0\}] + \mathbb{E}[\hat{\pi}^1_0 \{\check{\mu}^2_0 - \hat{\mu}^1_0\}] + \mathbb{E}[\check{\mu}^1_0]. \]

We estimate the nuisance parameters using the same sample-splitting procedure as for the VDE. To estimate $\hat{\pi}^i \in \boldsymbol{\hat{\pi}}$, we can rewrite the expressions using Bayes' rule to avoid computing high-dimensional densities. Since $\mathbb{E}[\pi^i_0] = 1$, the self-normalized variant uses $\hat{\pi}^i_\text{SN} \gets \hat{\pi}^i \big/ \mathbb{E}_\mathcal{D}[\hat{\pi}^i]$.

\section{Additional data analysis}\label{app:data-viz}
All analysis is restricted to adult patients. To better understand the distribution of patients in our datasets, we analyze baseline characteristics and pre-admission severity scores across datasets and racial groups. Figure~\ref{fig:baseline} shows the distribution of age and sex, stratified by race. To assess pre-ICU severity, we use two measures: the OASIS score and the Charlson Comorbidity Index. Both are integer-based scores, with higher values indicating greater clinical severity. These distributions are shown in Figure~\ref{fig:severity}.

We observe that the distribution of patient severity remains consistent across different racial groups. Moreover, baseline patient conditions tend to be more severe in the eICU dataset, which explains the longer ventilation times in Figure~\ref{fig:samples_duration}.

\begin{figure}[h]
    \centering
    \begin{subfigure}{0.48\textwidth}
        \centering
        \includegraphics[width=\linewidth]{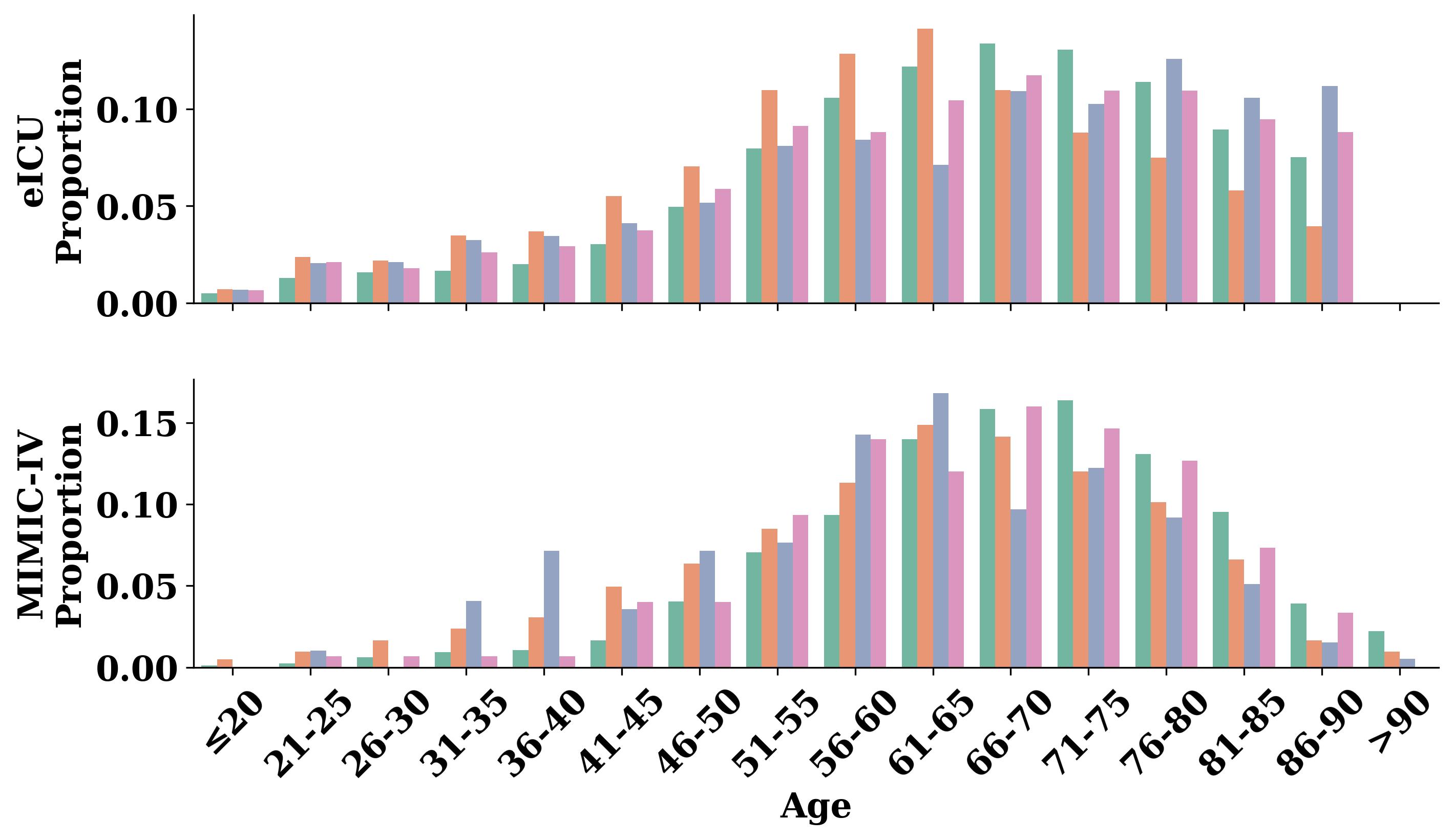}
        \caption{}
    \end{subfigure}
    \hfill
    \begin{subfigure}{0.48\textwidth}
        \centering
        \includegraphics[width=\linewidth]{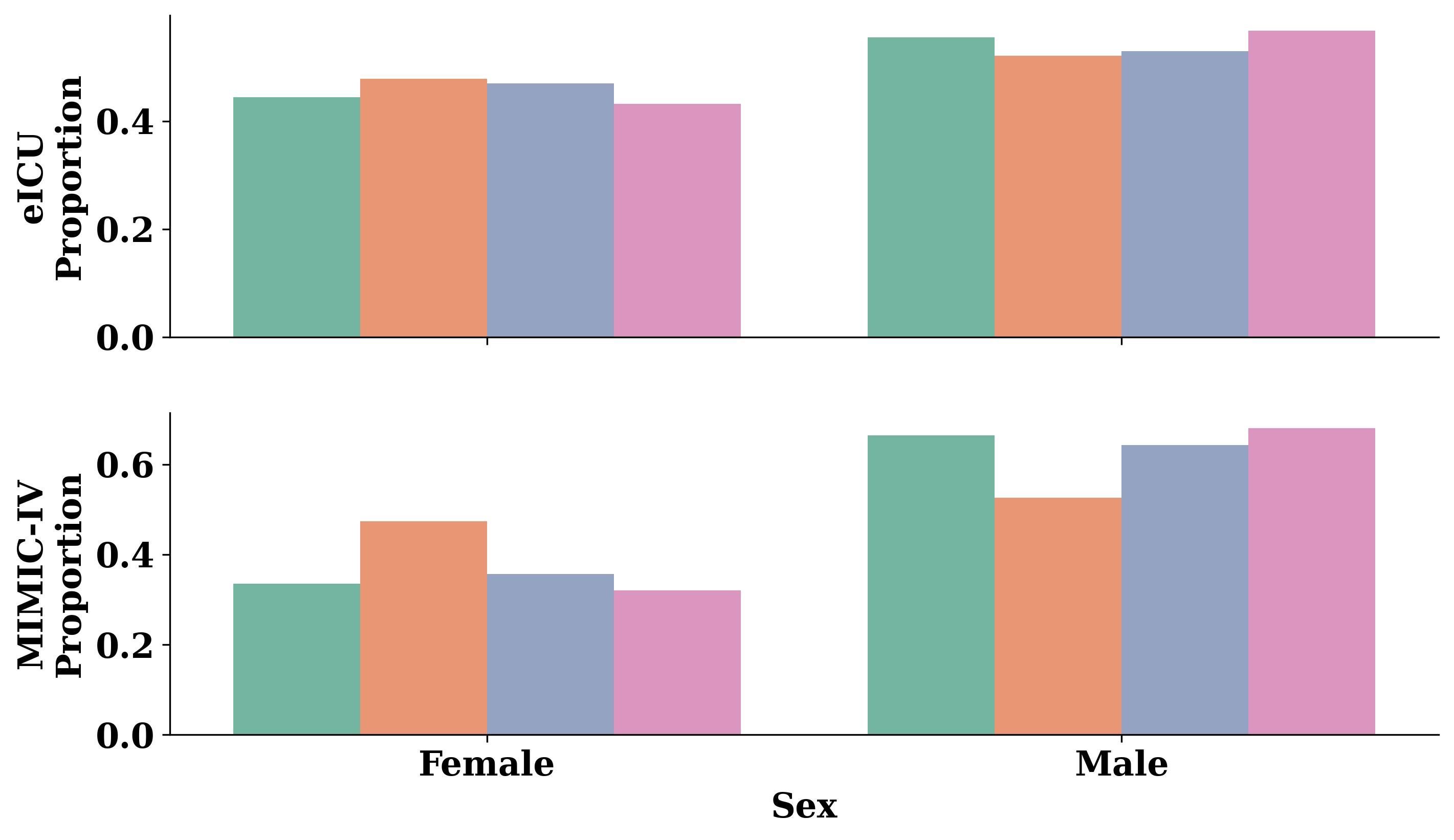}
        \caption{}
    \end{subfigure}
    \caption{Distribution of patient baseline characteristics, (a) age and (b) sex, in the eICU and MIMIC-IV datasets, stratified by race.}
    \label{fig:baseline}
\end{figure}

\begin{figure}[h]
    \centering
    \begin{subfigure}{0.48\textwidth}
        \centering
        \includegraphics[width=\linewidth]{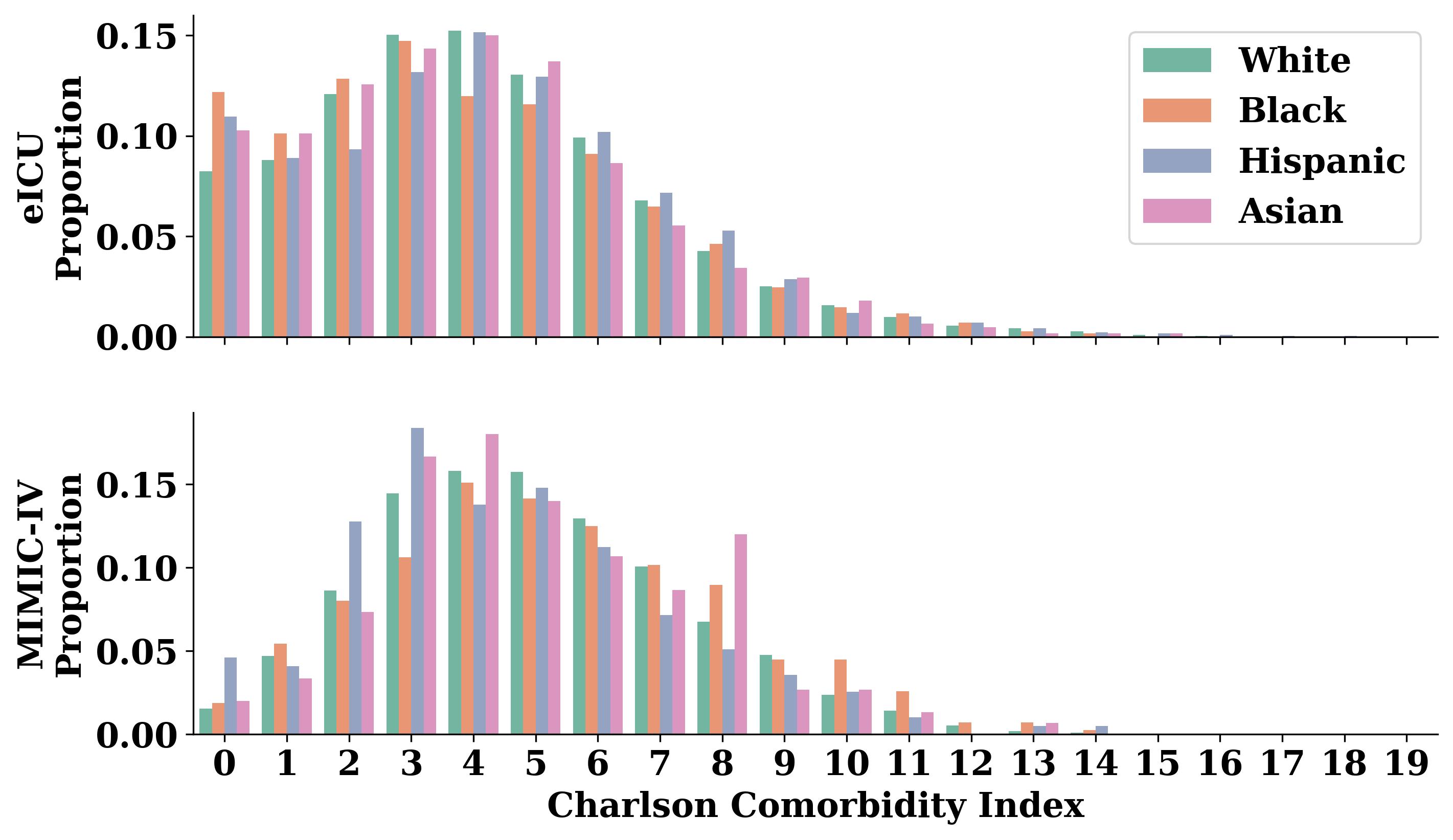}
        \caption{}
    \end{subfigure}
    \hfill
    \begin{subfigure}{0.48\textwidth}
        \centering
        \includegraphics[width=\linewidth]{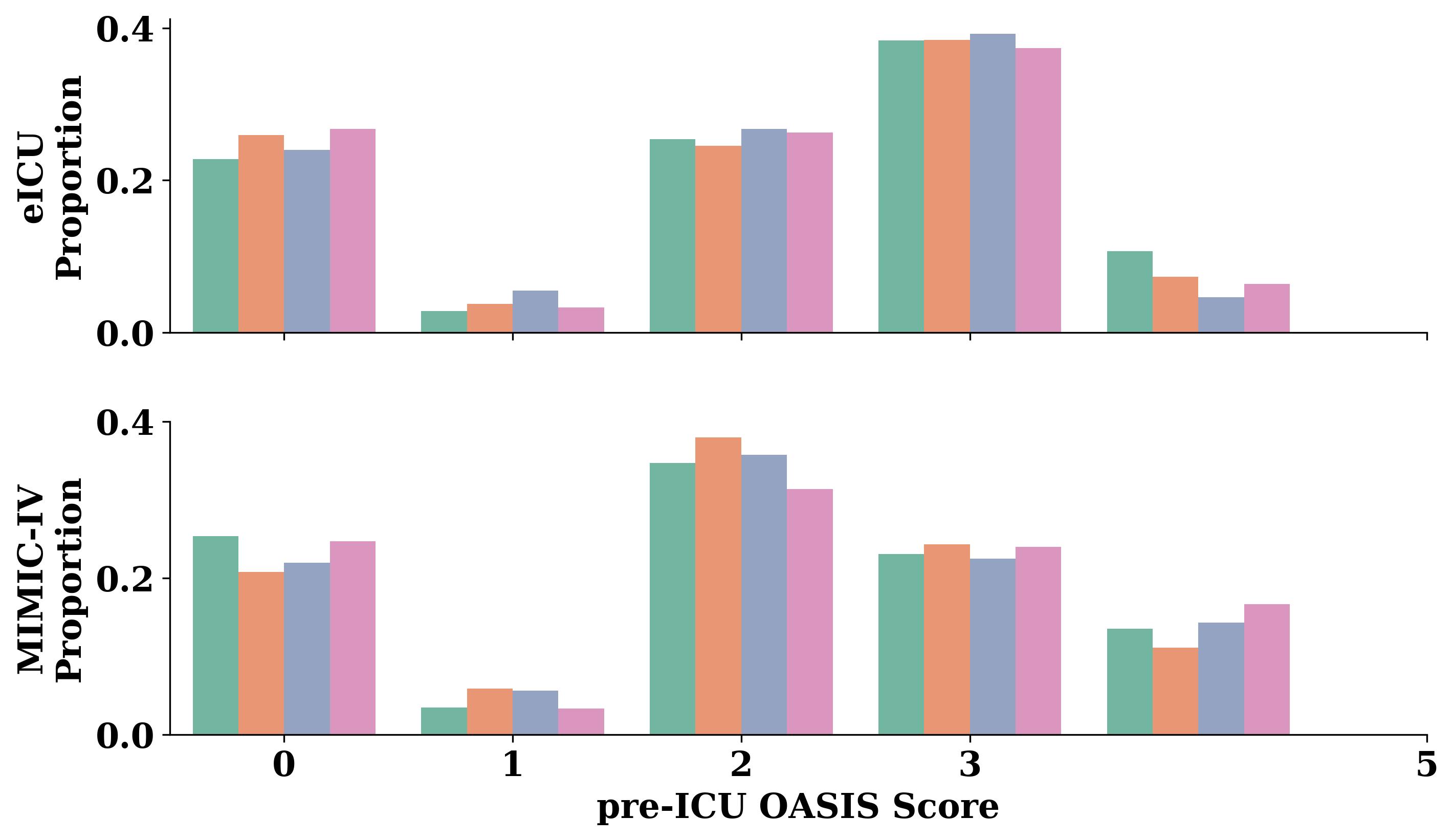}
        \caption{}
    \end{subfigure}
    \caption{Distribution of pre-admission severity scores, (a) Charlson Comorbidity Index and (b) pre-ICU OASIS Score, in the eICU and MIMIC-IV datasets, stratified by race. The eICU cohort exhibits a slightly higher pre-admission severity score.}
    \label{fig:severity}
\end{figure}

\section{Hyperparameter selection}\label{app:hparams}
In this section, we outline our approach for selecting the hyperparameters of the XGBoost models to compute $\mathbb{E}[Y_{x_1, W_{x_0}, V_{x_0}}]$ and $\mathbb{E}[Y_{x_1, V_{x_0, W_{x_1}}}]$. For both the synthetic and semi-synthetic settings, we generate a random dataset consisting of 32,000 samples and perform a separate grid search for each propensity model and each outcome model. For the real-world setting, we perform the hyperparameter search using the eICU and MIMIC-IV datasets.

The grid search optimizes three hyperparameters: the number of estimators from the set $\{20, 50, 100, 200\}$, the maximum tree depth from the set $\{3, 4, 5, 6\}$, and the $\ell_2$-regularization penalty from the set $\{0.5, 1, 2, 5\}$, using $5$-fold cross-validation. For propensity models, we select the hyperparameter configuration according to the Brier score, which evaluates prediction calibration. For outcome models, we select the configuration using the mean squared error. The chosen hyperparameters remain fixed across all bootstraps and sample sizes throughout the corresponding experimental setting.

\version{
\begin{table}[h]
\centering
{ \renewcommand{\arraystretch}{1.4}
\caption{Selected hyperparameters for each model used in the real-world ICU experiments. Each tuple represents the best tree depth and $\ell_2$-regularization hyperparameter. For the nested regression models in the last four lines, we specify the computed effect in parenthesis to distinguish between regression parameters for different estimates.}
\vspace{12pt}
\begin{tabular}{@{}l@{\hspace{0.01\textwidth}}c@{\hspace{0.02\textwidth}}c@{\hspace{0.02\textwidth}}c@{\hspace{0.02\textwidth}}c@{}} 
\toprule
Model & \begin{tabular}{@{}c@{}} Vent. Rate \\ (eICU) \end{tabular} & \begin{tabular}{@{}c@{}} Vent. Rate \\ (MIMIC-IV) \end{tabular} & \begin{tabular}{@{}c@{}} Vent. Dur. \\ (eICU) \end{tabular} & \begin{tabular}{@{}c@{}} Vent. Dur. \\ (MIMIC-IV) \end{tabular} \\
\midrule
$p(X \mid Z)$ & $(3, 5)$ & $(3, 5)$ & $(3, 5)$ & $(3, 1)$ \\
$p(X \mid W, Z)$ & $(3, 5)$ & $(3, 5)$ & $(3, 5)$ & $(3, 1)$ \\
$p(X \mid V, Z)$ & $(3, 5)$ & $(3, 5)$ & $(3, 2)$ & $(3, 2)$ \\
$p(X \mid W, V, Z)$ & $(3, 5)$ & $(3, 5)$ & $(3, 5)$ & $(3, 5)$ \\
\midrule
$\mathbb{E}[Y \mid x_0, V, Z]$ & $(3, 5)$ & $(3, 5)$ & $(3, 2)$ & $(3, 5)$ \\
$\mathbb{E}[Y \mid x_1, V, Z]$ & $(3, 2)$ & $(3, 5)$ & $(3, 5)$ & $(3, 5)$ \\
$\mathbb{E}[Y \mid x_0, W, V, Z]$ & $(3, 2)$ & $(3, 5)$ & $(3, 5)$ & $(4, 5)$ \\
$\mathbb{E}[Y \mid x_1, W, V, Z]$ & $(3, 5)$ & $(3, 5)$ & $(3, 2)$ & $(6, 5)$ \\
\midrule
$\mathbb{E}[\check{\mu}^3 \mid x_0, W, Z]$ (VDE) & $(4, 5)$ & $(6, 5)$ & $(3, 5)$ & $(3, 5)$ \\
$\mathbb{E}[\check{\mu}^2 \mid x_1, Z]$ (VDE) & $(3, 5)$ & $(3, 5)$ & $(3, 5)$ & $(3, 5)$ \\
$\mathbb{E}[\check{\mu}^2 \mid x_0, Z]$ (NDE/NIE) & $(3, 5)$ & $(3, 5)$ & $(3, 5)$ & $(3, 5)$ \\
$\mathbb{E}[\check{\mu}^2 \mid x_0, Z]$ (NIE$^*$) & $(3, 0.5)$ & $(3, 5)$ & $(5, 1)$ & $(3, 5)$ \\
\bottomrule
\end{tabular}
}
\label{tab:nuisance_hyperparameters_parallel}
\end{table}
}{
\begin{table}[h]
\centering
{ \renewcommand{\arraystretch}{1.4}
\caption{Selected hyperparameters for each model used in the real-world ICU experiments. Each tuple represents the best (tree depth, number of estimators, $\ell_2$-regularization), respectively. For the nested regression models in the bottom four rows, we specify the computed effect in parenthesis to distinguish between regression parameters for different estimates.}
\vspace{12pt}
\begin{tabular}{@{}l@{\hspace{0.01\textwidth}}c@{\hspace{0.02\textwidth}}c@{\hspace{0.02\textwidth}}c@{\hspace{0.02\textwidth}}c@{}} 
\toprule
Model & \begin{tabular}{@{}c@{}} Vent. Rate \\ (eICU) \end{tabular} & \begin{tabular}{@{}c@{}} Vent. Rate \\ (MIMIC-IV) \end{tabular} & \begin{tabular}{@{}c@{}} Vent. Dur. \\ (eICU) \end{tabular} & \begin{tabular}{@{}c@{}} Vent. Dur. \\ (MIMIC-IV) \end{tabular} \\
\midrule
$p(X \mid Z)$ & $(3, 20, 5)$ & $(3, 20, 5)$ & $(3, 20,  5)$ & $(3, 20, 1)$ \\
$p(X \mid W, Z)$ & $(3, 20, 5)$ & $(3, 20, 5)$ & $(3, 20, 5)$ & $(3, 20, 1)$ \\
$p(X \mid V, Z)$ & $(3, 20, 5)$ & $(3, 20, 5)$ & $(3, 20, 2)$ & $(3, 20, 2)$ \\
$p(X \mid W, V, Z)$ & $(3, 20, 5)$ & $(3, 20, 5)$ & $(3, 20, 5)$ & $(3, 20, 5)$ \\
\midrule
$\mathbb{E}[Y \mid x_0, V, Z]$ & $(3, 50, 5)$ & $(3, 20, 5)$ & $(3, 20, 2)$ & $(3, 20, 5)$ \\
$\mathbb{E}[Y \mid x_1, V, Z]$ & $(3, 20, 2)$ & $(3, 20, 5)$ & $(3, 20, 5)$ & $(3, 20, 5)$ \\
$\mathbb{E}[Y \mid x_0, W, V, Z]$ & $(3, 20, 2)$ & $(3, 20, 5)$ & $(3, 20, 5)$ & $(4, 20, 5)$ \\
$\mathbb{E}[Y \mid x_1, W, V, Z]$ & $(3, 20, 5)$ & $(3, 20, 5)$ & $(3, 20, 2)$ & $(6, 20, 5)$ \\
\midrule
$\mathbb{E}[\check{\mu}^3 \mid x_0, W, Z]$ (VDE) & $(4, 200, 5)$ & $(4, 200, 2)$ & $(3, 200, 1)$ & $(3, 200, 5)$ \\
$\mathbb{E}[\check{\mu}^2 \mid x_1, Z]$ (VDE) & $(3, 20, 5)$ & $(3, 20, 5)$ & $(3, 20, 5)$ & $(3, 20, 5)$ \\
$\mathbb{E}[\check{\mu}^2 \mid x_0, Z]$ (NDE/NIE) & $(4, 20, 5)$ & $(3, 20, 5)$ & $(3, 20, 2)$ & $(3, 20, 5)$ \\
$\mathbb{E}[\check{\mu}^2 \mid x_0, Z]$ (NIE$^*$) & $(4, 20, 1)$ & $(3, 20, 5)$ & $(3, 20, 5)$ & $(3, 20, 5)$ \\
\bottomrule
\end{tabular}
}
\label{tab:nuisance_hyperparameters_parallel}
\end{table}
}

\section{VDE nuisance parameter analysis}\label{app:nuisance_analysis}
We demonstrate that the improved variance of our proposed self-normalized estimator, as shown in Figures~\ref{fig:synth_continuous_semi_synth_eicu_exp}, can be attributed to better nuisance estimates when the sample size is small. In Figure~\ref{fig:synth_nuisance}, we present the mean and 95\% confidence interval across 100 bootstraps for the empirical average of the VDE nuisance parameters $\hat{\pi}^3$, $\hat{\pi}^2$, and $\hat{\pi}^1$.

As the number of samples increases, the empirical mean converges to the true value of one. For smaller sample sizes, the nuisance parameters tend to be large, which causes greater variance in the standard doubly robust estimator. In contrast, our self-normalized variant scales the nuisance parameters so that the empirical average is one, which reduces the variance in the estimation. In the semi-synthetic setting, the nuisance parameters exhibit slow convergence or a small bias due to propensity clipping.

\begin{figure}[h]
    \centering
    \begin{subfigure}{0.48\textwidth}
        \centering
        \includegraphics[width=\linewidth]{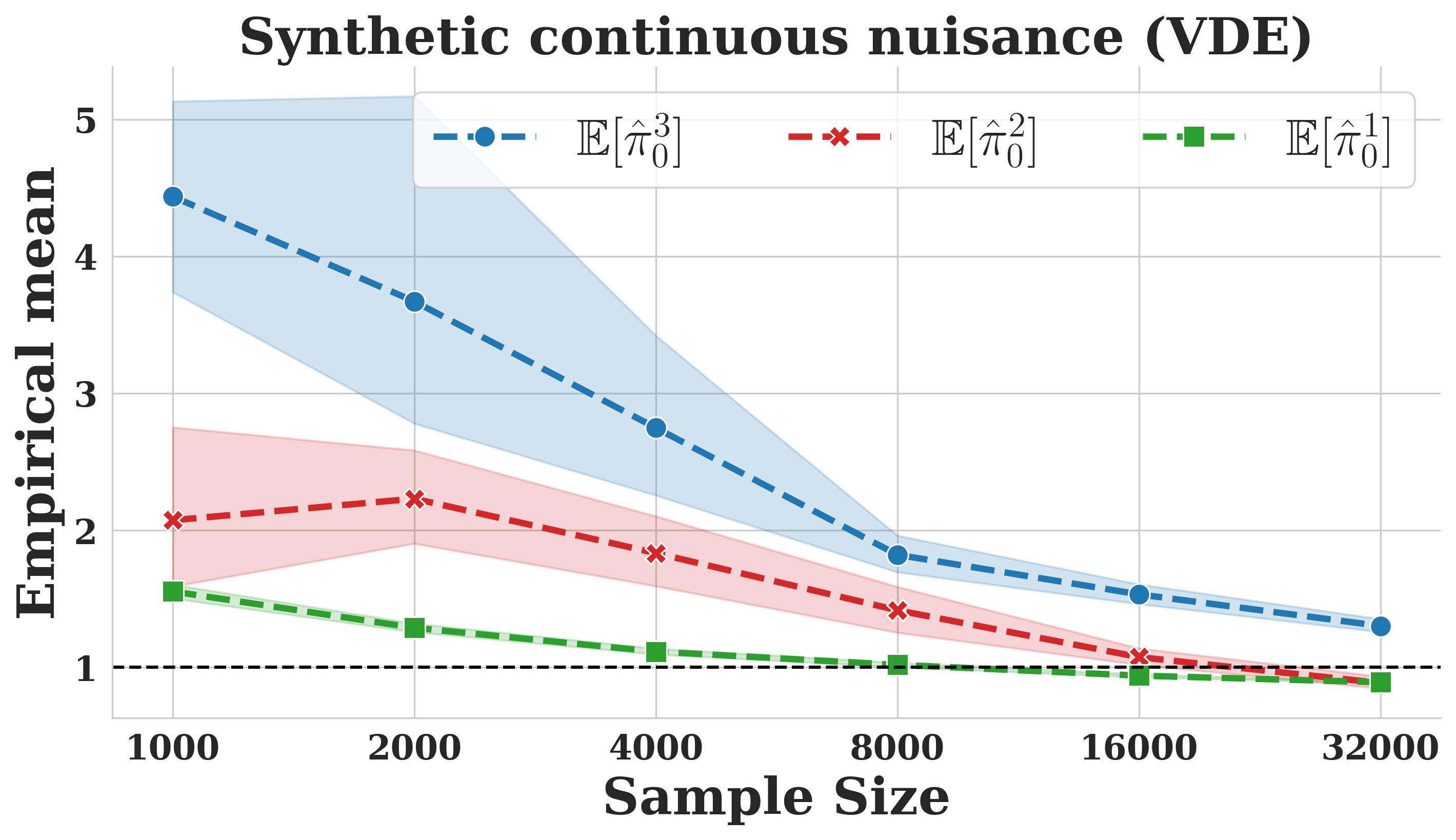}
        \caption{}
    \end{subfigure}
    \hfill
    \begin{subfigure}{0.48\textwidth}
        \centering
        \includegraphics[width=\linewidth]{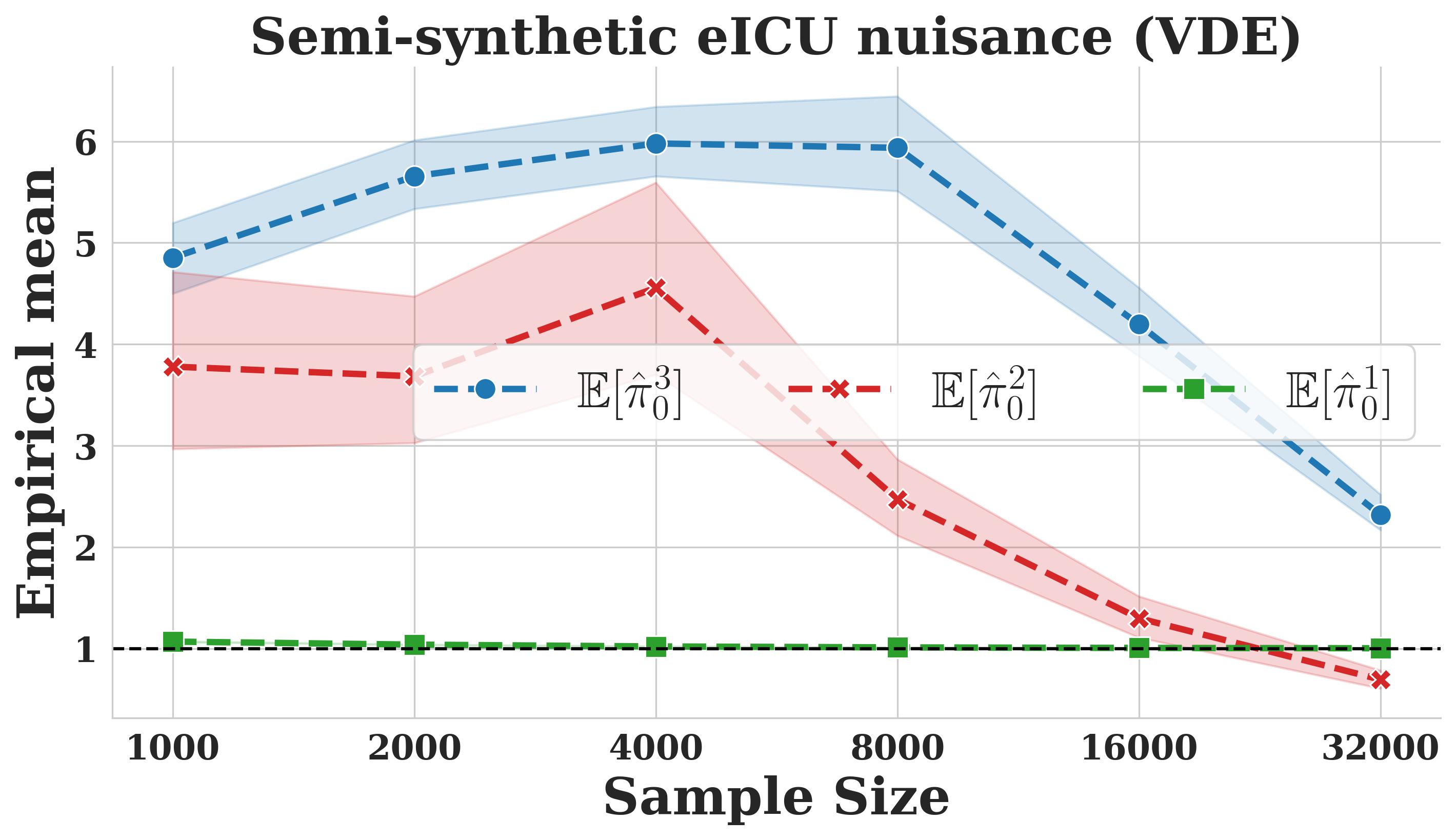}
        \caption{}
    \end{subfigure}
    \caption{Convergence of the empirical mean of the VDE nuisance parameters in the (a) synthetic setting with continuous variables and the (b) semi-synthetic experiment based on eICU data. The dashed horizontal line indicates the theoretical mean for the nuisance parameters.}
    \label{fig:synth_nuisance}
\end{figure}

\section{Synthetic binary setting}\label{app:synth_binary_exp}

In addition to the synthetic experiment with continuous variables, we also analyze a synthetic setting with singleton binary variables. This experiment aims to demonstrate the convergence of causal effects and nuisance parameter estimates. By choosing a simplified setting, we illustrate that when the nuisance parameters are accurate, the standard and self-normalized estimators produce approximately identical results.

We generate data according to the following model, where all observed variables take singleton binary values and the unobserved variables are $U_X, U_{XZ}, U_W, U_V, U_Y \sim \mathcal{N}(0,1)$:
\begin{alignat*}{3}
  Z &= \mathbf{1}[U_{XZ} > 0.2], \\
  X &= \mathbf{1}[Z + U_X + U_{XZ} > 0.2], \\
  W &= \mathbf{1}[X - Z + U_W > 0.8], \\
  V &= \mathbf{1}[X - Z + W + U_V > 0.8], \\
  Y &= \mathbf{1}[X - Z + 2(W - V) + U_Y > 0.2].
\end{alignat*}
We estimate the causal queries necessary to compute fairness effects: $\mathbb{E}[Y_{x_0}]$, $\mathbb{E}[Y_{x_1}]$, $\mathbb{E}[Y_{x_1, M_{x_0}}]$, and $\mathbb{E}[Y_{x_1, V_{x_0, W_{x_1}}}]$. Like the synthetic setting with continuous variables, we vary the sample size from 1,000 to 32,000.

We report the mean and 95\% confidence intervals across 100 bootstraps of the relative error on each causal query and the empirical mean of the nuisance parameters in Figure~\ref{fig:synth_binary_exp}. As the sample size increases, our estimates converge to the true causal quantities. Due to the simplicity of the singleton binary variables in this experiment, the empirical mean of the nuisance parameters is always close to one. Consequently, the estimates obtained with the self-normalized estimator are nearly identical to those shown in Figure~\ref{fig:synth_binary_query}. 

\begin{figure}[h]
    \centering
    \begin{subfigure}{0.48\textwidth}
        \centering
        \includegraphics[width=\linewidth]{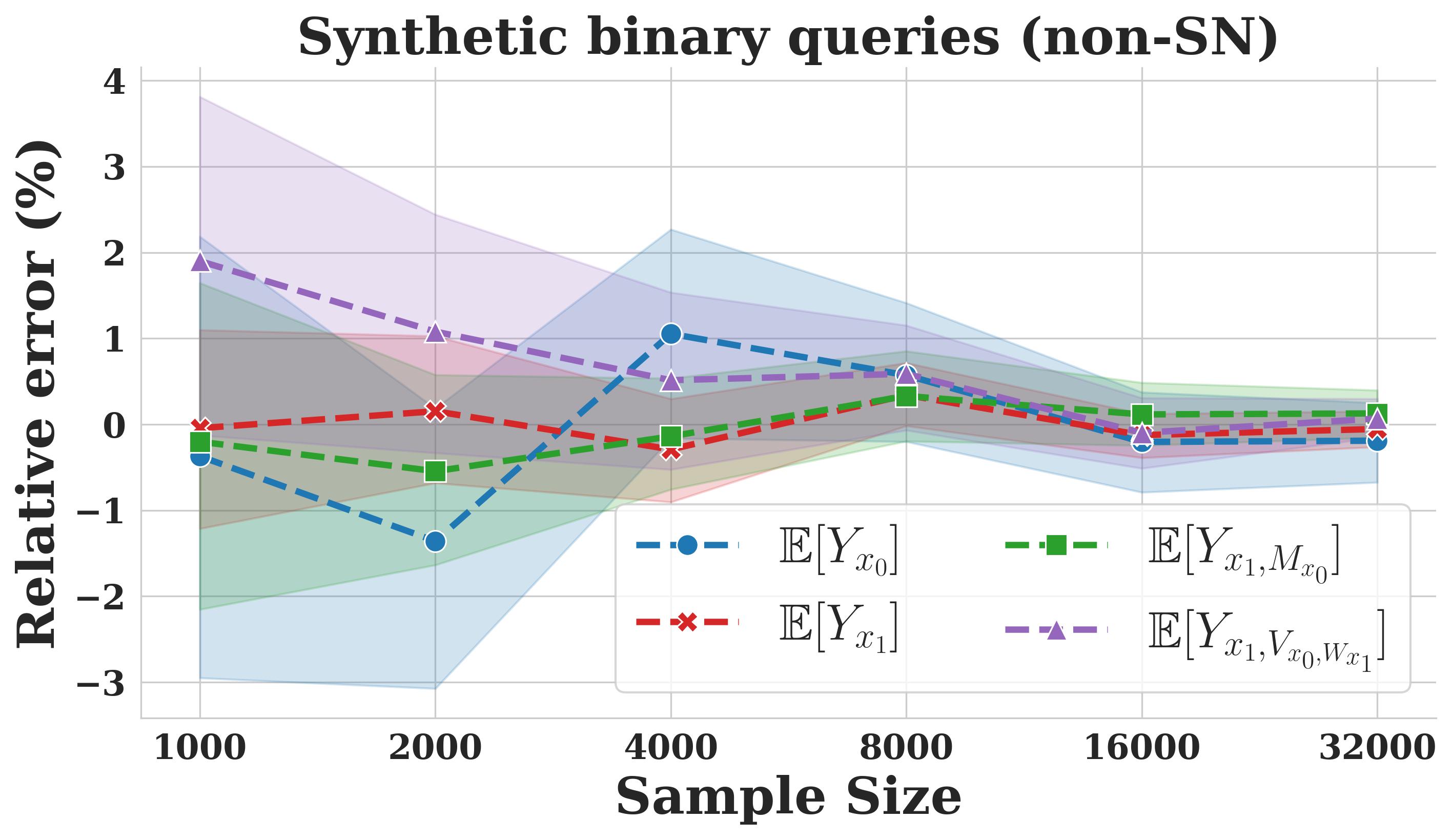}
        \caption{}
        \label{fig:synth_binary_query}
    \end{subfigure}
    \hfill
    \begin{subfigure}{0.48\textwidth}
        \centering
        \includegraphics[width=\linewidth]{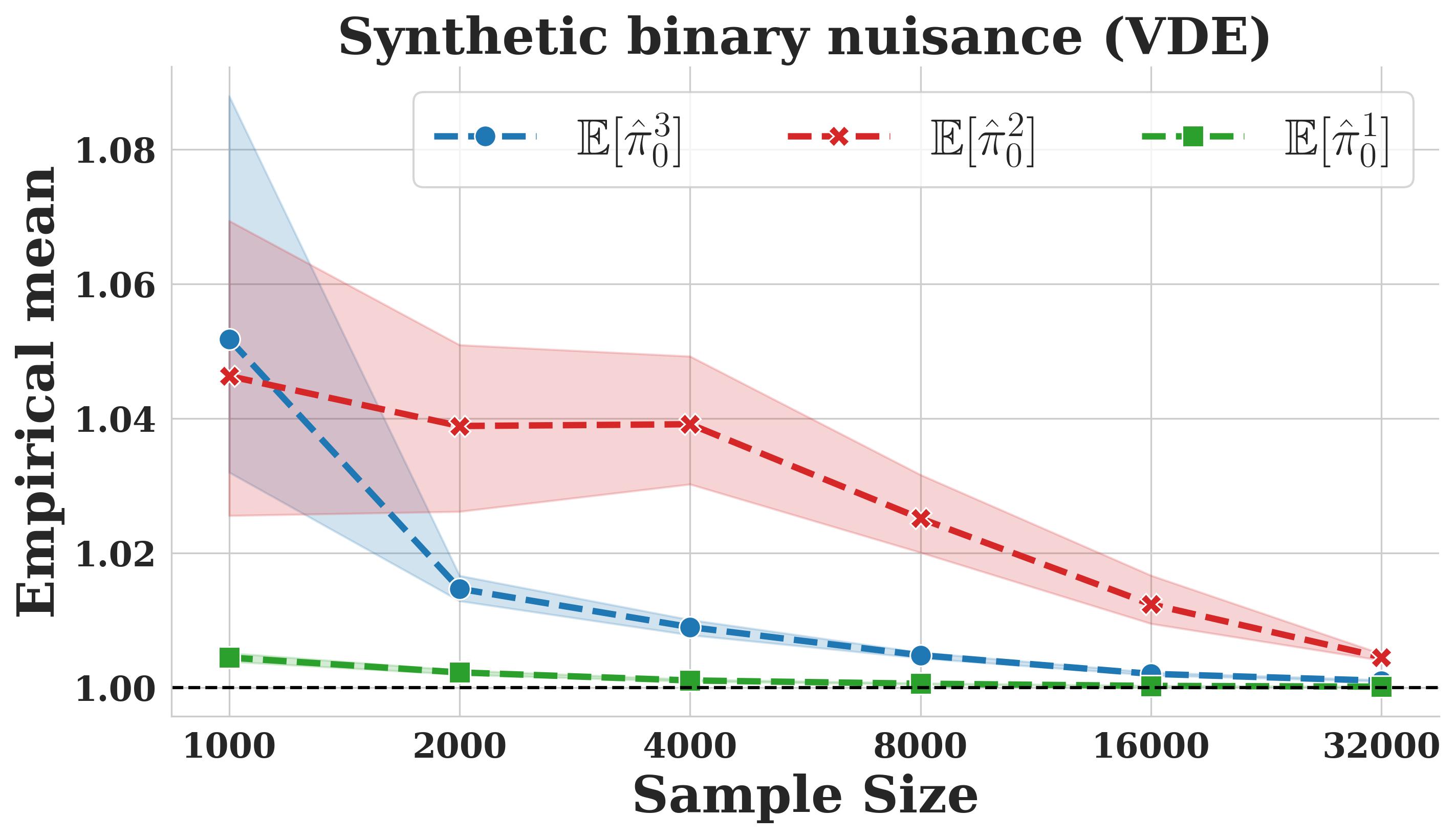}
        \caption{}
        \label{fig:synth_binary_nuisance}
    \end{subfigure}
    \caption{Convergence of (a) the relative error of causal queries using the canonical estimator and (b) the empirical mean of the nuisance parameters. Results for the self-normalized estimator are omitted as they are nearly identical to (a), because the empirical mean of the nuisance parameters is close to one.}
    \label{fig:synth_binary_exp}
\end{figure}

\section{Semi-synthetic MIMIC-IV results}\label{app:semisynth_mimic_exp}
We show the experimental results for our estimators on semi-synthetic data based on patterns in the MIMIC-IV dataset. Following the setup in the semi-synthetic eICU setting, we train an XGBoost model to predict each variable in the set $\{X, W, V, Y\}$ given its observed parents in the real-world data. We fix $Y$ to be a binary variable indicating whether the patient received an invasive ventilation procedure during the stay.

We report the mean and 95\% confidence interval across 100 bootstraps of the relative error for each causal query across varying sample sizes in Figure~\ref{fig:semi_synth_mimic_exp}. Like for the semi-synthetic eICU setting, we observe that our estimands approximately converge to the true value, however, the convergence is slow for $\mathbb{E}[Y_{x_1, V_{x_0, W_{x_1}}}]$ using the canonical doubly robust estimator.

\begin{figure}[h]
    \centering
    \begin{subfigure}{0.48\textwidth}
        \centering
        \includegraphics[width=\linewidth]{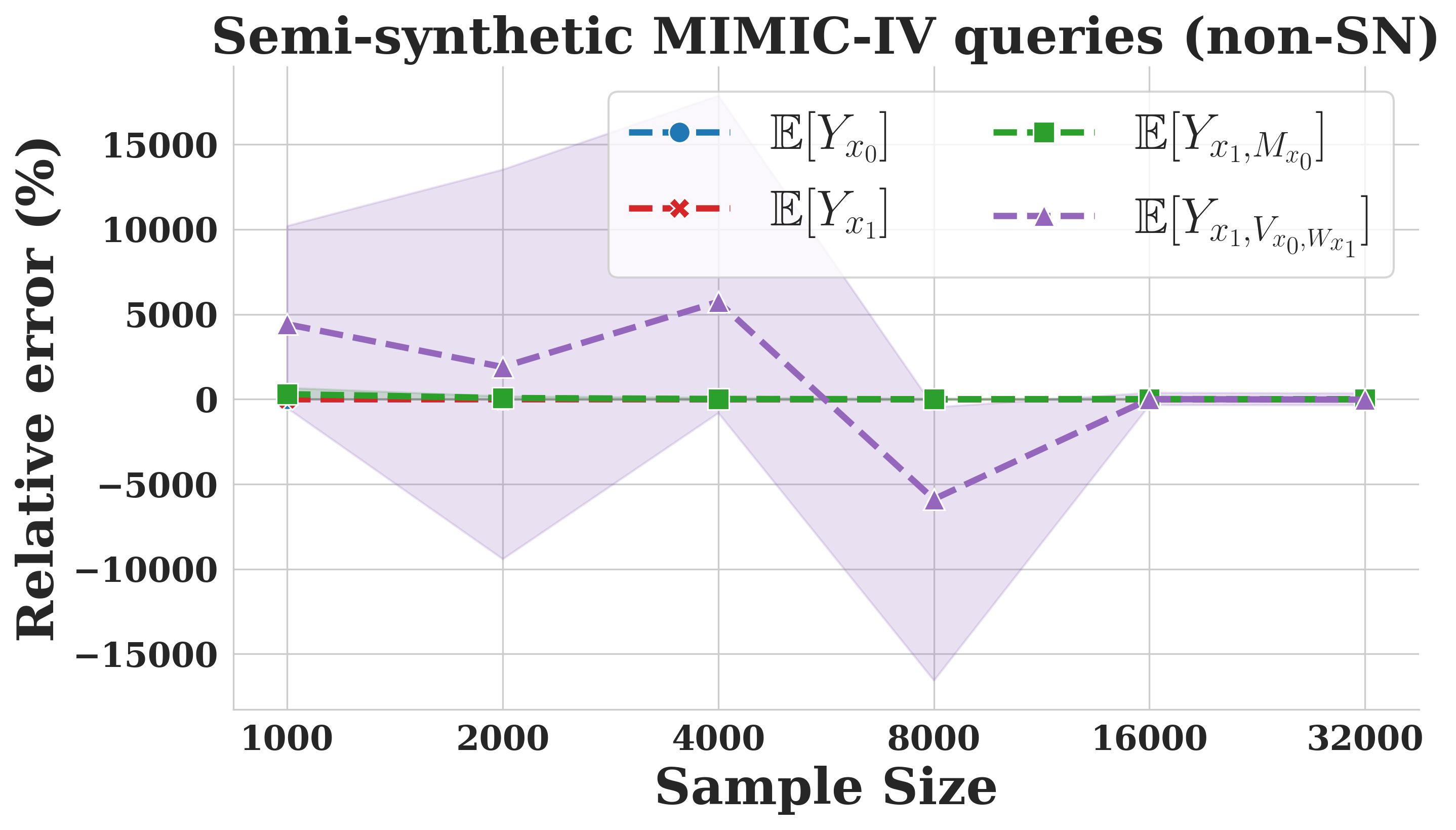}
        \caption{}
    \end{subfigure}
    \hfill
    \begin{subfigure}{0.48\textwidth}
        \centering
        \includegraphics[width=\linewidth]{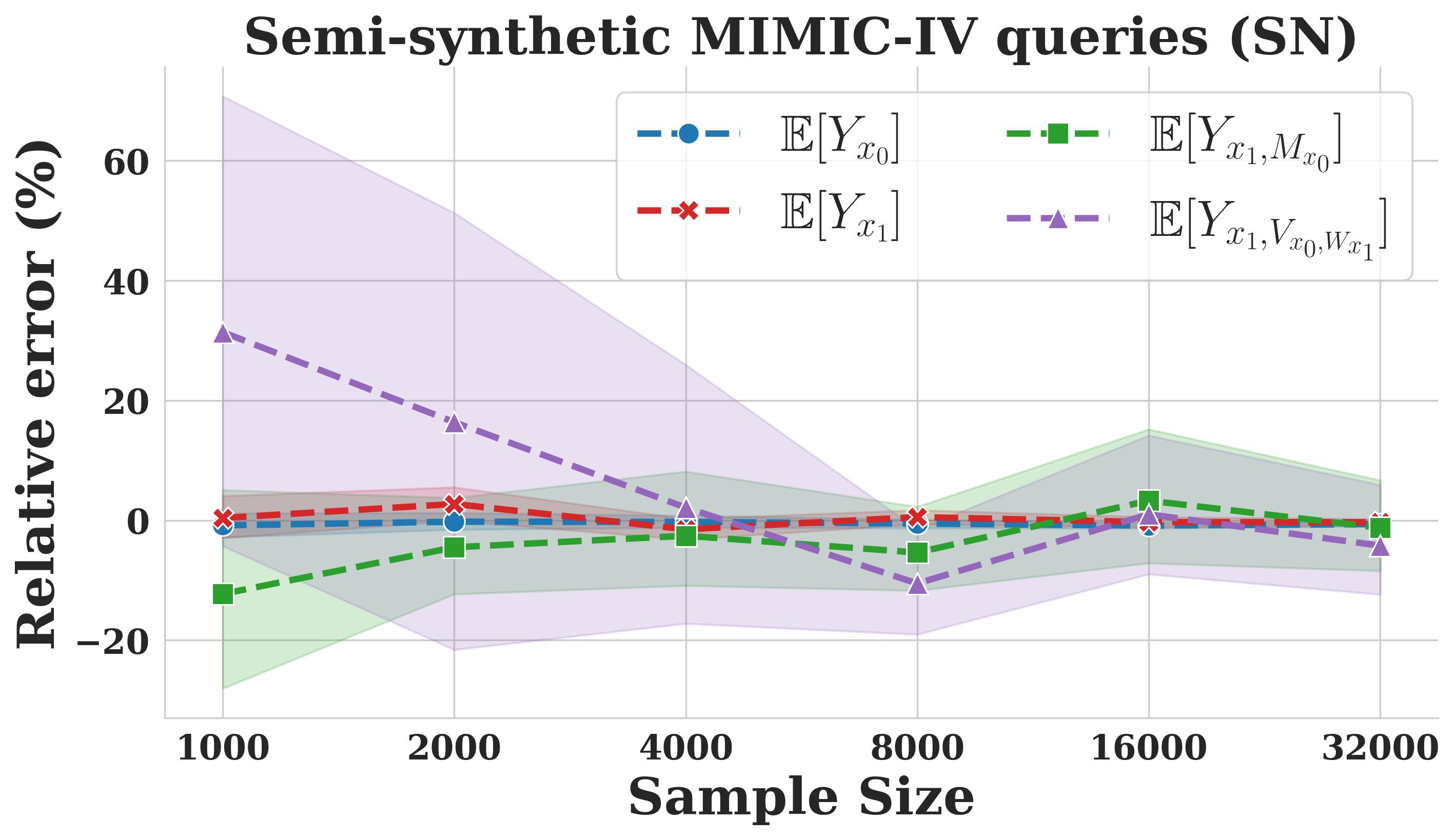}
        \caption{}
    \end{subfigure}
    \caption{Convergence of the relative error of causal queries using the (a) canonical and (b) self-normalized estimator on the semi-synthetic MIMIC-IV data.}
    \label{fig:semi_synth_mimic_exp}
\end{figure}

\section{Canonical doubly robust estimator results on real-world data}\label{app:canonical_real_world}
We compute the causal effects in Table~\ref{tab:fairness_effects} using the canonical doubly robust estimator. We report the mean and 95\% confidence intervals for the rate and duration of invasive ventilation in Figure~\ref{fig:real_exp_nonsn}. The VDE for invasive ventilation rates for eICU ($-0.21$ percentage points, 95\% CI [$-0.23$ to $-0.19$]) and MIMIC-IV ($-0.47$ percentage points, 95\% CI [$-0.57$ to $-0.36$]) datasets is relatively small. Moreover, the VDE indicates slightly longer ventilation durations for Black relative to White patients in eICU data ($2.4$ hrs, 95\% CI [$2.2$ to $2.5$]) and significantly longer durations in MIMIC-IV data ($8.1$ hrs, 95\% CI [$7.5$ to $8.7$]). 

The direction and statistical significance of all causal fairness measures obtained using the self-normalized estimator (Figure~\ref{fig:rate_duration_exp_real}) and canonical estimator are in agreement. Furthermore, we observe that the canonical estimator typically predicts wider confidence intervals across all effects, which is consistent with our experimental results in the synthetic and semi-synthetic settings. 

\begin{figure}[h]
    \centering
    \begin{subfigure}{0.48\textwidth}
        \centering
        \includegraphics[width=\linewidth]{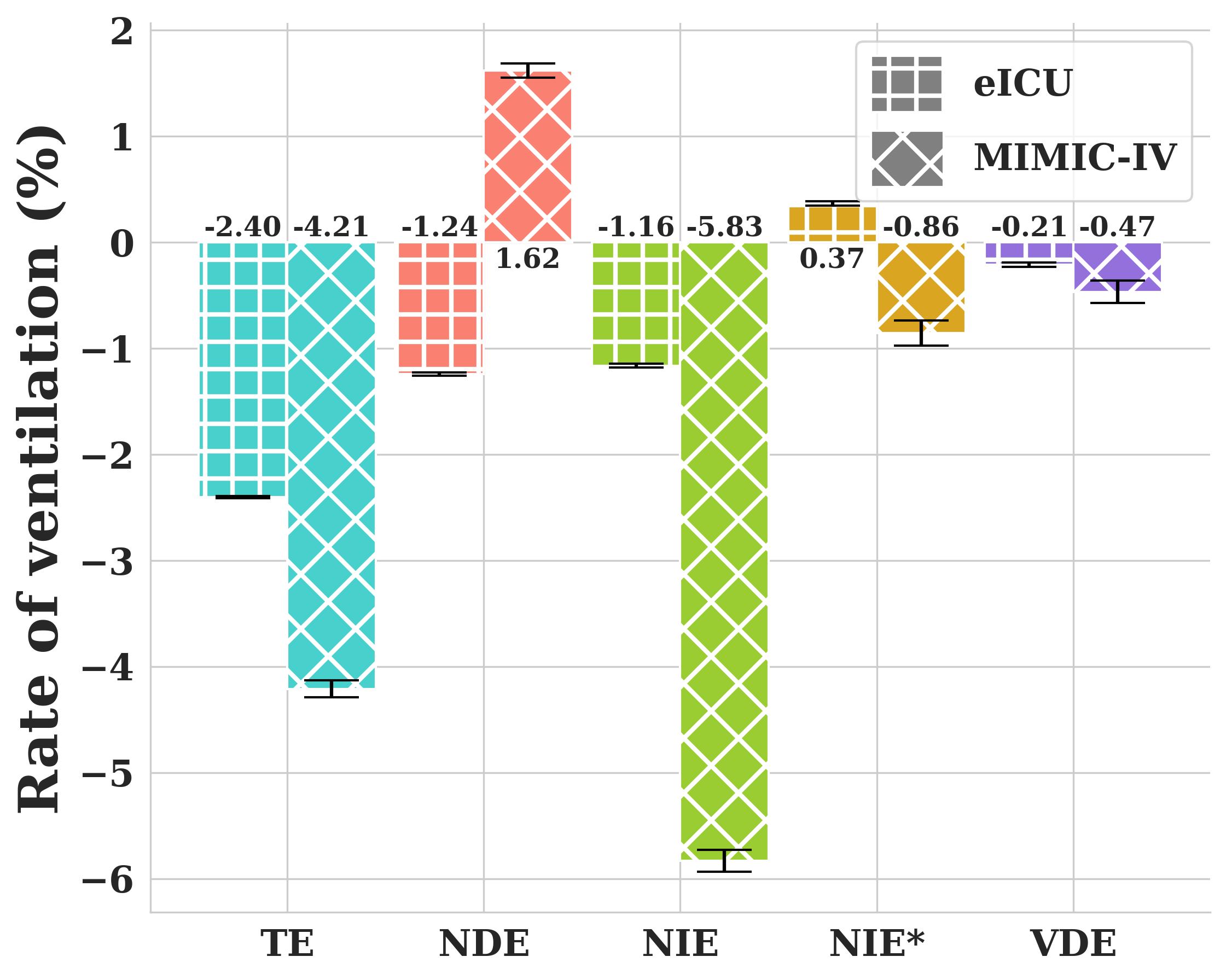}
        \caption{}
    \end{subfigure}
    \hfill
    \begin{subfigure}{0.48\textwidth}
        \centering
        \includegraphics[width=\linewidth]{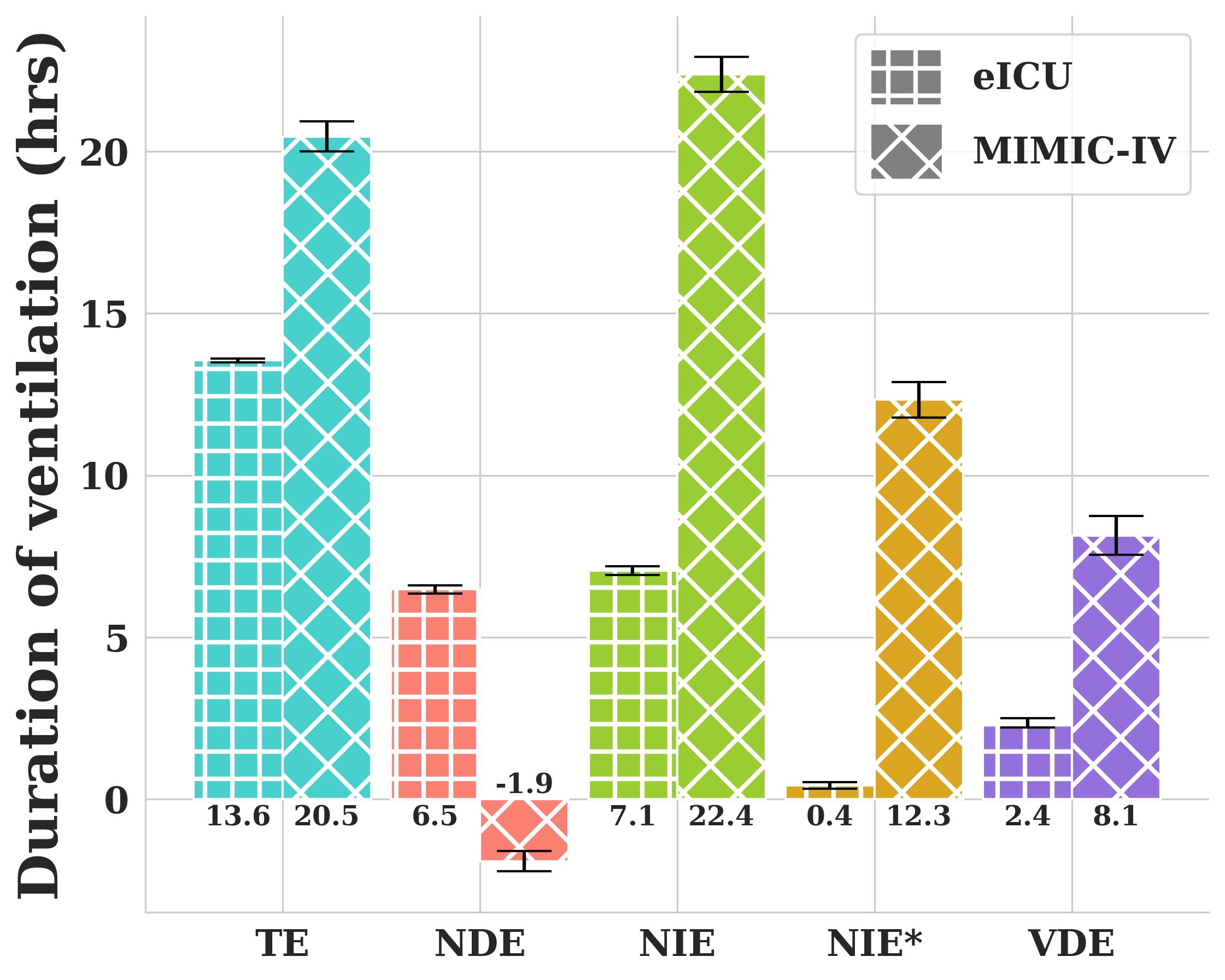}
        \caption{}
    \end{subfigure}
    \caption{Average causal fairness measures across 500 bootstraps using the canonical doubly robust estimator for the (a) rate and (b) duration of invasive ventilation on eICU and MIMIC-IV data. Colors represent different effects and the numerical label for each bar indicates the mean across all bootstraps. Error bars show $95\%$ confidence intervals. Positive values indicate a higher rate or duration of invasive ventilation for Black patients relative to White patients.}
    \label{fig:real_exp_nonsn}
\end{figure}

\section{State-of-the-art comparison}\label{app:sota_comparison}

We compare our proposed canonical doubly robust estimator and its self-normalized variant against several state-of-the-art and baseline methods. To the best of our knowledge, no prior work directly targets the VDE beyond the canonical estimator. Therefore, we include the following (somewhat misspecified) baselines: (1) the $X$-learner, which estimates the ATE without accounting for mediation; (2) a single-mediator estimator, which treats $W$ and $V$ as a single merged mediator (i.e., NIE$^*$ from Section~\ref{sec:real_world_exp}); and (3) the nested regression estimator, corresponding to $\mu_0^1$ in Equation~\ref{eq:dr_estimator}.

Table~\ref{tab:sota_comparison} presents the comparison of VDE prediction errors. Our results show that the self-normalized estimator consistently achieves the lowest error while maintaining valid $95\%$ confidence intervals. Although the nested regression estimator performs similarly on semi-synthetic data, its confidence intervals are overly narrow. State-of-the-art methods for similar effect estimation tasks, such as the $X$-learner, have large error in all settings since they do not account for the correct mediation structure.

\begin{table}[h]
    \caption{Comparison of our canonical doubly robust estimator (Figures~\ref{fig:synth_continuous_query},~\ref{fig:semi_synth_eicu_query}) and self-normalized variant (Figures~\ref{fig:synth_continuous_query_sn},~\ref{fig:semi_synth_eicu_query_sn}) with state-of-the-art and baseline methods. We report the mean and standard error over 100 bootstraps, where each iteration samples a dataset of size 32,000. Relative errors are shown in parentheses, except for the semi-synthetic settings, where the true VDE is very close to zero and relative error is not informative. Entries in bold denote the best performing model for each dataset.}
    \vspace{6pt}
    \renewcommand{\arraystretch}{1.05}
    \centering
    \scriptsize
    \begin{tabular}{>{\centering\arraybackslash}m{2.2cm}>{\centering\arraybackslash}m{2.9cm}>{\centering\arraybackslash}m{2.7cm}>{\centering\arraybackslash}m{2cm}>{\centering\arraybackslash}m{2.05cm}}
    \toprule
    \textbf{} & \textbf{\small Synthetic\quad\quad\quad binary} & \textbf{\small Synthetic continuous} & \textbf{\small Semi-synthetic eICU} & \textbf{\small Semi-synthetic MIMIC} \\
    \midrule
    {\small Ground truth VDE} & $-0.125$ & $-0.163$ & $0.000$ & $0.001$ \\
    \addlinespace[2ex]
    {\small $X$-learner} & $0.364$ ($292\%$) $\pm$ $0.001$ & $-0.352$ ($217\%$) $\pm$ $0.001$ & $-0.007$ $\pm$ $0.000$ & $-0.029$ $\pm$ $0.000$ \\
    \addlinespace[2ex]
    {\small Single-mediator doubly robust estimator~\cite{Tchetgen2012SemiparametricTF,vanderweele2014mediation}} & $-0.033$ ($26\%$) $\pm$ $0.001$ & $-0.312$ ($192\%$) $\pm$ $0.008$ & $0.024$ $\pm$ $0.014$ & $-0.027$ $\pm$ $0.004$ \\
    \addlinespace[2ex]
    {\small Nested-regression estimator (Eq.~\ref{eq:dr_estimator})} & \underline{$\boldsymbol{0.001}$ ($\boldsymbol{0.5\%}$) $\pm$ $\boldsymbol{0.001}$} & $-0.024$ ($15\%$) $\pm$ $0.002$ & $0.004$ $\pm$ $0.001$ & \underline{$\boldsymbol{-0.001}$ $\pm$ $\boldsymbol{0.000}$} \\
    \addlinespace[2ex]
    {\small Canonical \quad doubly robust estimator~\cite{Miles2017OnSE}} & \underline{$\boldsymbol{-0.001}$ ($\boldsymbol{0.5\%}$) $\pm$ $\boldsymbol{0.001}$} & \underline{$\boldsymbol{0.006}$ ($\boldsymbol{4\%}$) $\pm$ $\boldsymbol{0.008}$} & $0.018$ $\pm$ $0.020$ & $0.025$ $\pm$ $0.176$ \\
    \addlinespace[2ex]
    {\small Self-normalized doubly robust estimator (ours)} & \underline{$\boldsymbol{-0.001}$ ($\boldsymbol{0.5\%}$) $\pm$ $\boldsymbol{0.001}$} & \underline{$\boldsymbol{0.006}$ ($\boldsymbol{4\%}$) $\pm$ $\boldsymbol{0.006}$} & \underline{$\boldsymbol{0.003}$ $\pm$ $\boldsymbol{0.007}$} & $0.004$ $\pm$ $0.004$ \\
    \bottomrule
    \end{tabular}
    \label{tab:sota_comparison}
\end{table}

\section{Group imbalance experiments}
\label{app:group_imabalance}
In the real-world data, the eICU cohort comprises approximately 31,000 White and 4,000 Black patients, while MIMIC-IV includes around 4,000 White and 400 Black patients. These distributions reveal a substantial group imbalance between White and Black patients, which is approximately preserved in our semi-synthetic experiments.

Although our self-normalized estimator demonstrates strong convergence properties despite this imbalance, we further examine its performance under varying degrees of group imbalance. Using the eICU-based and MIMIC-based semi-synthetic datasets, we control the imbalance by thresholding the predictions of $X$ at different values to induce varying proportions of $x_0$ and $x_1$ subgroups. To quantify the imbalance, we define $\eta = \frac{\# \text{ of } x_1 \text{ samples}}{\# \text{ of  total samples}}$. We assess performance using the relative error for each causal query in Figure \ref{fig:imbalance}.

As expected, both the relative error and variance of the estimators tend to increase as the imbalance between the $x_0$ and $x_1$ subgroups grows. Nevertheless, the overall variation in estimator performance across different values of $\eta$ remains modest, with relative errors increasing by only a few percentage points. These findings suggest that our estimator is moderately robust to the group imbalance present in the real-world experiments.

\begin{figure}[h]
    \centering
    \begin{subfigure}{0.48\textwidth}
        \centering
        \includegraphics[width=\linewidth]{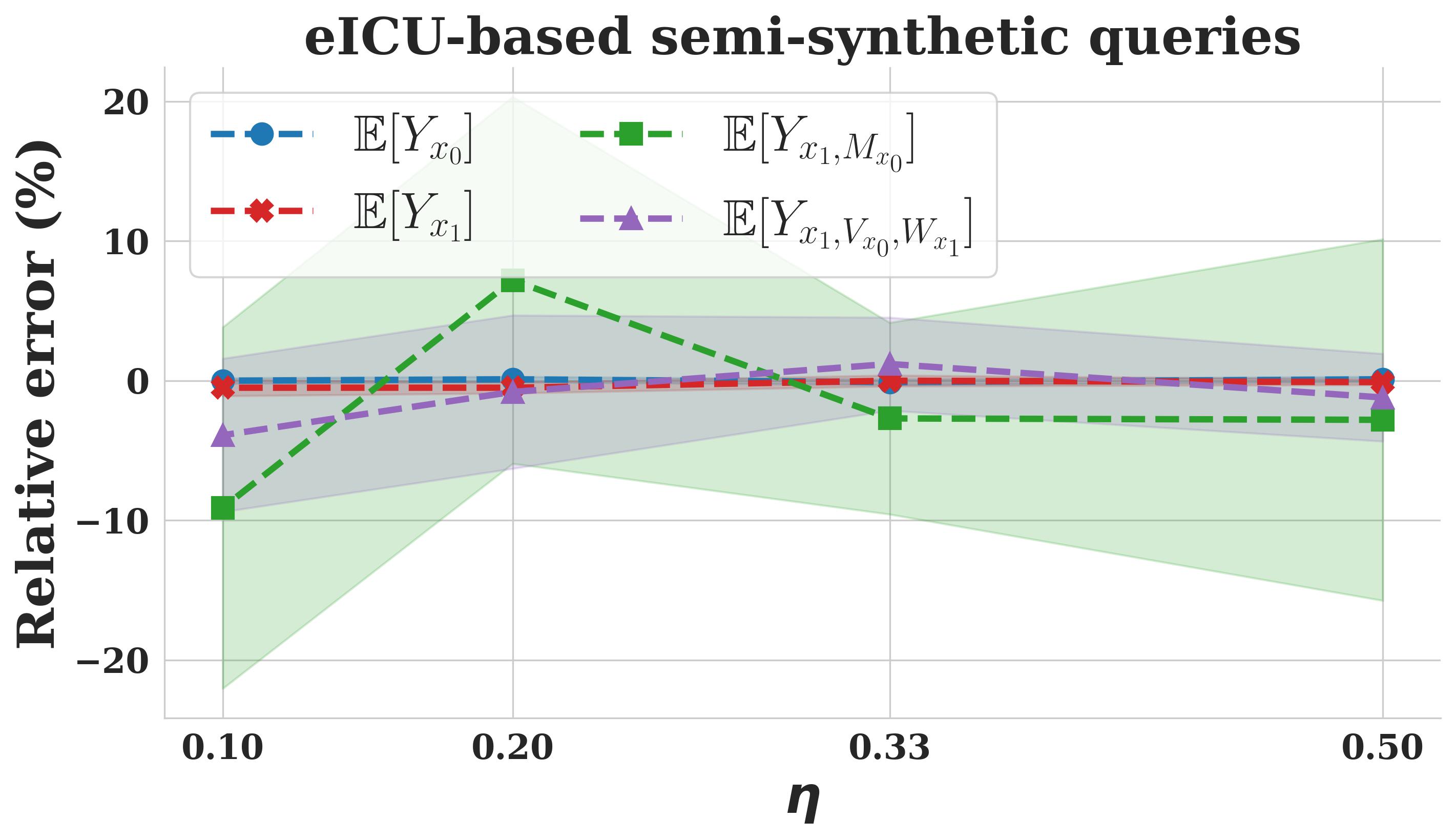}
        \caption{}
    \end{subfigure}
    \hfill
    \begin{subfigure}{0.48\textwidth}
        \centering
        \includegraphics[width=\linewidth]{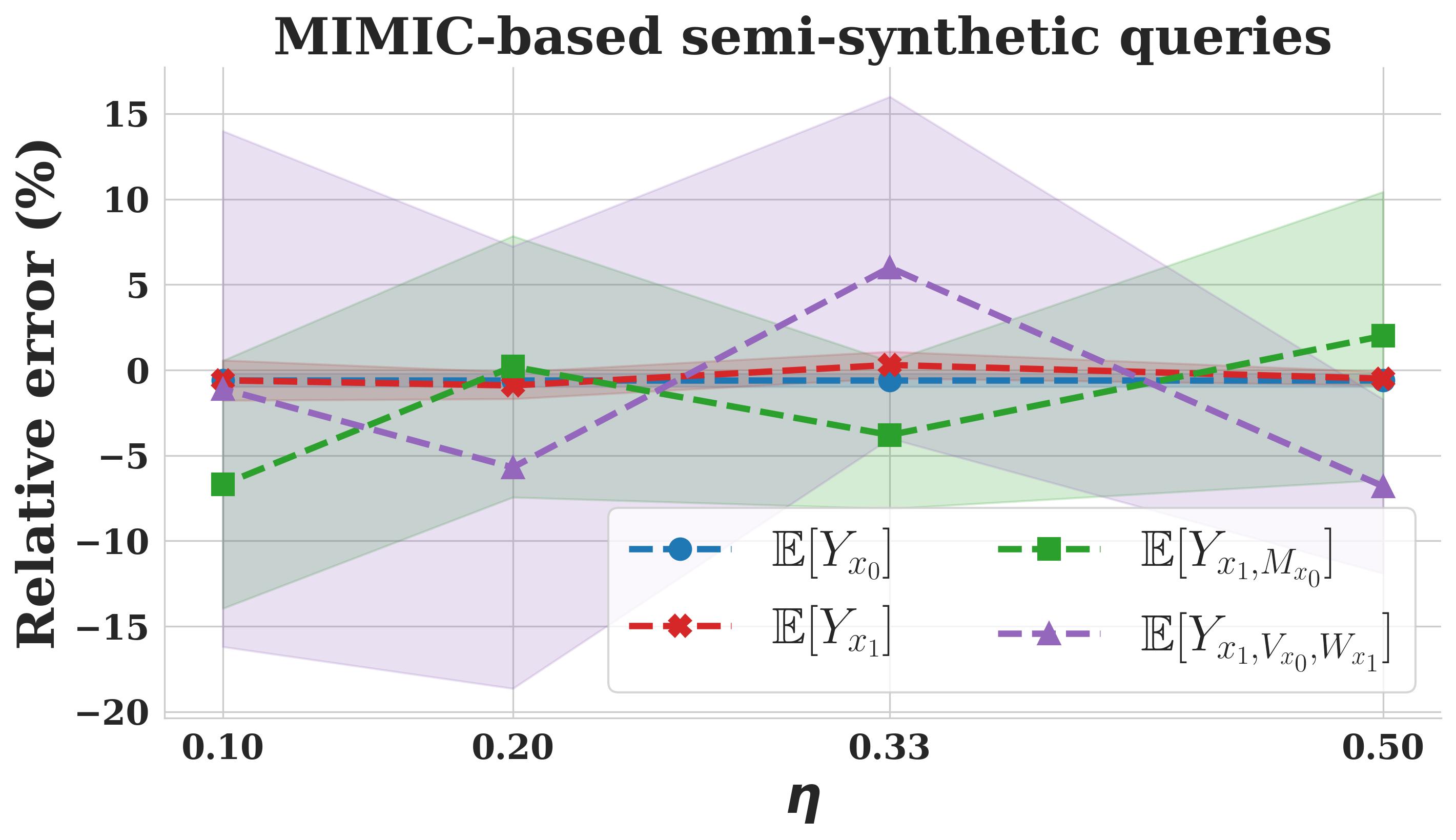}
        \caption{}
    \end{subfigure}
    \caption{Causal fairness estimates for (a) eICU-based and (b) MIMIC-based semi-synthetic datasets, computed using our self-normalized estimator across varying levels of group imbalance ($\eta \in [0.1, 0.2, 0.33, 0.5]$). Each point represents the mean estimate with a 95\% confidence interval computed over 100 bootstrap iterations, each sampling a dataset of size 32,000.}
    \label{fig:imbalance}
\end{figure}

\section{Compute resources}\label{app:compute}
All models were trained on a single NVIDIA RTX A6000 GPU, 32 CPU cores, 256GB of system RAM, running Ubuntu 22.04 with kernel 5.15. The synthetic and semi-synthetic experiments compute 100 bootstraps for all sample sizes in two hours, while the real-world experiments require two hours to compute 500 bootstraps for each dataset and ventilation-based outcome.

\section{Experiment assets}\label{app:assets}

In our experiments, we use the eICU~\cite{pollard2018eicu} v2.0 and MIMIC-IV~\cite{johnson2020mimic} v3.1 datasets, both publicly available under the PhysioNet Credentialed Health Data License 1.5.0. Data processing is performed using the eICU-CRD Code Repository\footnote{\url{https://github.com/MIT-LCP/eicu-code}} and the MIMIC Code Repository\footnote{\url{https://github.com/MIT-LCP/mimic-code}}, both under the MIT License. For eICU ventilation outcomes, we follow the data extraction procedure described in~\cite{vandenBoom2020TheSF}. For Charlson Comorbidity Scores in the eICU dataset, we follow the process in~\cite{Chandra2021ANV} provided in the code implementation under the MIT License.\footnote{\url{https://github.com/theonesp/vol_leak_index}} \version{}{All the code and instructions for reproducing our experiments are available at \url{https://github.com/reAIM-Lab/PSE-Pulse-Oximetry}.}

\end{document}